\definecolor{dmb}{rgb}{0.0, 0.2, 0.4}
\newcommand{\notiff}{%
  \mathrel{{\ooalign{\hidewidth$\not\phantom{"}$\hidewidth\cr$\iff$}}}}
\newtheorem*{observation}{Observation}
\providecommand{\doarxiv}{true}
\newcommand{\arxiv}[1]{#1}
\newcommand{\notarxiv}[1]{}
\newcommand{\arxiv}[1]{}
\newcommand{\notarxiv}[1]{#1}
\newcommand{\arxivalt}[2]{\ifthenelse{\boolean{isarxiv}}{#1}{#2}}
\newcommand{\arxivaltr}[2]{\ifthenelse{\boolean{isarxiv}}{#2}{#1}}
\title{Submodular Combinatorial Information Measures with Applications in Machine Learning}
\begin{document}

\maketitle

\begin{abstract}
Information-theoretic quantities like entropy and mutual information have found numerous uses in machine learning. It is well known that there is a strong connection between these entropic quantities and submodularity since entropy over a set of random variables is submodular. In this paper, we study combinatorial information measures defined over sets of (not necessarily random) variables. These measures strictly generalize the corresponding entropic measures since they are all parameterized via submodular functions that themselves strictly generalize entropy. Critically, we show that, unlike entropic mutual information in general, the submodular mutual information is actually submodular in one argument, holding the other fixed, for a large class of submodular functions whose third-order partial derivatives satisfy a non-negativity property (also called second-order supermodular functions).  We study specific instantiations of the submodular information measures, and see that they all have mathematically intuitive and practically useful expressions. Regarding applications, we connect the maximization of submodular (conditional) mutual information to problems such as mutual-information-based, query-based, and privacy preserving summarization --- and we connect optimizing the multi-set submodular mutual information to clustering and robust partitioning.\looseness-1

\looseness-1
\end{abstract}

\begin{keywords}%
  Submodular Optimization, Information Theory, Combinatorial Information Measures
\end{keywords}
\section{Introduction}\label{sec:introduction}

Submodular functions generalize a number of combinatorial and information theoretic functions such as entropy, set cover, facility location, graph cut, and provide a general class of expressive models. They model aspects like diversity, coverage, information \citep{linClassSubmodularFunctions2011, tschiatschekLearningMixturesSubmodular2014a}, attractive potentials \citep{iyerOptimalAlgorithmsConstrained2014} and cooperation \citep{jegelkaSubmodularitySubmodularEnergies2011}. They are closely related to convexity and concavity~\citep{lovasz1983submodular,iyer2015polyhedral} and enable efficient optimization algorithms with guarantees both in the minimization \citep{fujishige2005submodular} and maximization settings \citep{krauseSubmodularFunctionMaximization2013, leeNonmonotoneSubmodularMaximization2009, buchbinderTightLinearTime2015}. Submodular Functions also occur naturally in several machine learning applications such as sensor placement \citep{krauseRobustSubmodularObservation, krauseNearoptimalNonmyopicValue, krauseNearOptimalSensorPlacementsa}, structured learning of graphical models \citep{narasimhanPAClearningBoundedTreewidth2004}, social networks \citep{kempeMaximizingSpreadInfluence2003}, summarization \citep{linClassSubmodularFunctions2011, tschiatschekLearningMixturesSubmodular2014a, xuGazeenabledEgocentricVideo2015}, and data subset selection~\citep{wei2015submodularity}, to name only a few.

Denote $\Omega$ as a \emph{ground-set} of $n$ data points $\Omega = \{x_1, x_2, x_3,...,x_n \}$ and a set function $f:
 2^{\Omega} \xrightarrow{} \mathbb{R}$. We say that $f$ is \emph{normalized} if $f(\emptyset) = 0$ and $f$ is \textbf{subadditive} if $f(S) + f(T) \geq f(S \cup T)$, holds for all $S,T \subseteq \Omega$. Define the \textbf{first-order partial derivative} (equivalently, gain) of an element $j \notin S$ in the context $S$ as $f(j | S) = f(S \cup j) - f(S)$. The function $f$ is \textbf{submodular} \citep{fujishige2005submodular} if for all $S, T \subseteq \Omega$, it holds that $f(S) + f(T) \geq f(S \cup T) + f(S \cap T)$. An identical characterization of submodularity is the \textbf{diminishing marginal returns} property, namely $f(j | S) \geq f(j | T)$ for all $S \subseteq T, j \notin T$. Also, define the \textbf{second-order partial derivatives} in the context of $S$ as  
$f^{(2)}(j,k;S) = f(j|S \cup k) - f(j |S)$. Another equivalent definition of submodularity is that the second-order partial derivatives in the context of any set $S$ are always non-positive: $f^{(2)}(j,k;S) \leq 0$. Next, $f$ is \textbf{supermodular} if $-f$ is submodular, and $f$ is said to be \textbf{monotone} if $f(X) \leq f(Y)$, $\forall X\subseteq Y\subseteq \Omega$ (equivalently, $f(j|S) \geq 0$ for all $j \notin S$ and $S \subseteq \Omega$). Throughout the paper we assume that $f$ is a \emph{normalized}, \emph{monotone} and \emph{submodular} function unless specified otherwise.\looseness-1

Information theoretic concepts \citep{1994thomascoverInformationTheory,yeung2008information, polyanskiy2014lecture} such as entropy, mutual information, conditional mutual information and independence have also been extensively analyzed and used in machine learning settings. The connection between submodular optimization and information theory is even more evident when seen from the perspective of information measures over a set of random variables. Given a set of random variables $\{X_1, \dots, X_n\}$, we denote $X_A$ as the set of random variables indexed by the set $A \subseteq \Omega$. Then the entropy~\citep{1994thomascoverInformationTheory} function $H(X_A)$, and the mutual information between a set of variables and the complement set $I(X_A; X_{\Omega \backslash A})$, are both submodular functions~\citep{fujishige2005submodular}. These have been widely used in applications such as sensor placement~\citep{krauseNearOptimalSensorPlacementsa,krauseRobustSubmodularObservation}, feature selection~\citep{krauseNearoptimalNonmyopicValue,balaganiFeatureSelectionCriterion2010, hanchuanpengFeatureSelectionBased2005a, liuSubmodularFeatureSelection2013}, observation selection, and causal modeling \citep{zhouCausalMeetsSubmodular, steudelCausalMarkovCondition2010}. 

In this paper, we generalize concepts like conditional gain, (conditional) mutual information, total correlation, and the variation-of-information metric to general submodular functions. Many of the properties, results, and inequalities that hold for entropy, (conditional) mutual information etc.\ are closely associated with the underlying submodularity of the entropy function. Note that while we draw inspiration from the information theoretic quantities, there is however one fundamental difference. Entropic quantities such as entropy and mutual information are defined on sets of random variables and hence have well-defined statistical  interpretations. The generalizations discussed in this paper, however, assume that the submodular functions are defined on subsets of a not-necessarily stochastic set of elements $\{x_1, \dots, x_n\}$ and hence the generalizations do not necessarily have immediate stochastic implications, although they do always have combinatorial implications. We use the constructs defined in this paper in a number of optimization problems and applications related to data summarization, data selection, clustering and partitioning.

The submodular information measures we study in this work have been investigated before in special cases. \citep{steudelCausalMarkovCondition2010} generalizes an information measure to elements of any ground set, primarily  in order to introduce a causal Markov condition not over random variables. Also, in \citep{guillory2011-active-semisupervised-submodular}, an objective that corresponds to our submodular mutual information was used to show error bounds and hardness for general batch active semi-supervised learning. \citep{cunningham1983decomposition} introduce the symmetric submodular mutual information (what they call the connectivity function) and use it in studying the decomposition of submodular functions. \citep{mcgill1954multivariate, shannon1948mathematical} were the first to show the submodular inequalities of the entropy function, and \citep{zhang1997non} showed that the class of polymatroid functions (i.e., those that are normalized, monotone, non-negative, and submodular) are strictly more general than entropy (since entropy must satisfy certain inequalities not required by a polymatroid function), thereby ensuring that the combinatorial information measures studied herein are also \emph{strictly} more general. Finally, \citep{bilmes2017deep} uses the concept of the multi-set total correlation~\citep{watanabe1960information} to study properties of families of deep submodular functions. Finally, \cite{levin2020online} also study the submodular mutual information (which they call the \emph{Mutual Coverage}). They study a few of its properties and use it in characterizing an LP relaxation to the online submodular cover problem.\looseness-1 

This paper provides a complete picture of the combinatorial information measures defined via general submodular functions, by studying various properties, examples, optimization problems, and applications. A road-map of this paper is as follows.
Section~\ref{sec:def-gen} introduces the information measures. Section~\ref{sec:gen-properties} takes a closer look at their properties. Section~\ref{sec:examples} then instantiates the submodular information measures on a number of submodular functions. In Section~\ref{sec:optimization-problems}, we look at a number of optimization problems around optimizing the submodular mutual information, its conditional variant, and its multi-set extension.  We then connect them to applications in summarization and data selection. The proofs of all the results, along with some additional properties, examples, and optimization problems are in the Appendices. Appendix~\ref{sec:app-proofs-sec3} covers the proofs of the results from Section~\ref{sec:gen-properties}, along with some more properties of the submodular information measures. Appendix~\ref{sec:app-proofs-sec4} studies proofs of the instantiations of the submodular information measures presented in Section~\ref{sec:examples}, and Appendix~\ref{sec:app-proofs-sec5} presents the proofs for the results shown in Section~\ref{sec:optimization-problems} along with some more variants of optimization problems.\looseness-1

\section{Submodular Information Theoretic Quantities}\label{sec:def-gen}

We begin by introducing the combinatorial information measures parametrized by submodular functions. Since monotone non-decreasing submodular functions generalize entropy~\citep{zhang1997non}, the measures below generalize the corresponding purely entropic measures.

\paragraph{Submodular Information Functions:} Monotone non-negative non-decreasing submodular (or ``polymatroid'' \citep{cunningham1983decomposition}) functions can be viewed as \emph{information} functions:
\begin{align}
  I_f(A) = f(A)  
\end{align}
since they satisfy all of the requisite Shannon inequalities \citep{yeung2008information,mcgill1954multivariate, shannon1948mathematical} that make them natural for such a purpose. 

\paragraph{Conditional gain:}
Given a set function $f: 2^V \to \mathbb R$, we define the conditional gain $f(A|B)$ as the gain in function value by adding $B$ to $A$:
\begin{align}
H_f(A | B) = f(A|B) \triangleq f(A \cup B) - f(B)    
\end{align}
When $f(A) = H(X_A)$, $f(A | B)$  corresponds to the conditional entropy: $H(X_A | X_B)$.\looseness-1

\paragraph{Submodular (Conditional) Mutual Information:} We define the submodular mutual information between two sets $A,B$ denoted by $I_f(A;B)$ as: 
\begin{align}
    I_f(A;B) \triangleq f(A) + f(B) - f(A \cup B)
\end{align}
It is again easy to see that $I_f(A; B)$ is equal to the mutual information between two random variables when $f$ is the entropy function. Note that $I_f(A;B) = f(A) - f(A|B)$. Also, define the submodular conditional mutual information of sets $A,B$ given set $C$ as 
\begin{align}
    I_f(A; B | C) &\triangleq f(A | C) + f(B | C) - f(A \cup B | C) \nonumber \\ 
    &= f(A \cup C) + f(B \cup C) - f(A \cup B \cup C) - f(C)
\end{align}

\paragraph{Submodular Multi-Set Mutual Information and Total Correlation:} 
Define the $k$ way \emph{submodular multi-set mutual information} $I_f(A_1; A_2; \dots; A_k)$, with $A_1, A_2, \dots, A_k \subseteq \Omega$ as:
\begin{align}
    I_f(A_1; A_2;\dots; A_k) \triangleq -\sum_{T \subseteq [k]} (-1)^{|T|} f(\cup_{i \in T} A_i)
\end{align}
which is defined via the principle of inclusion-exclusion. We  also define the submodular conditional $k$-set mutual information as 
\begin{align}
    I_f(A_1; A_2; \dots; A_k|C) \triangleq -\sum_{T \subseteq [k]} (-1)^{|T|} f(\cup_{i \in T} A_i|C).
\end{align}
Finally, define the \emph{submodular total correlation} and its conditional version as:
\begin{align}
   C_f(A_1, A_2, \dots, A_k) \triangleq ( \sum_{i = 1}^k f(A_i) ) - f(\cup_{i = 1}^k A_i) \\
   C_f(A_1, A_2, \dots, A_k | C) \triangleq ( \sum_{i = 1}^k f(A_i | C) ) - f(\cup_{i = 1}^k A_i | C)
\end{align}
When $k = 2$, the (conditional) multi-set mutual information and total correlation both yield the (conditional) 2-set submodular mutual information, but they are different when $k > 2$. When $k = 1$, the multi-set mutual information gives the submodular information function.\looseness-1

\paragraph{Submodular Information Metric:} To encode the notion of a distance between sets $A$ and $B$ we define the \emph{submodular variation of information} distance measure and denote it as 
\begin{align}
  D_f(A; B) \triangleq f(A \cup B) - I_f(A; B)  
\end{align}
As we shall see in the next section, this quantity is actually \emph{pseudo-metric} for a monotone submodular function $f$. The submodular information metric also has the intuitive form:
\begin{align}
    D_f(A; B) = f(A|B) + f(B|A)
\end{align}

\section{Properties of the Submodular Information Measures}\label{sec:gen-properties}

The properties of the submodular information measures, that we introduce in this section, generally hold for all monotone submodular functions. In certain cases, interestingly and as we point out below, they hold even for relaxed versions (i.e., just submodular, or just subadditive, or just monotone). Proofs of the results here are in Appendix~\ref{sec:app-proofs-sec3}.\looseness-1

\paragraph{Submodular Information Functions: } We start with a very simple property of the submodular information functions:
\begin{lemma}\label{lemma:submodinfo-properties}
Given a monotone, non-negative submodular function $f$, it holds that $f(\cup_i A_i) = I_f(\cup_i A_i) \leq \sum_i f(A_i) = \sum_i I_f(A_i)$. Furthermore, $I_f(A) = 0$ iff $f(j) = 0, \forall j \in A$. In other words, the elements $j$ are \emph{dummy} variables with no information.
\end{lemma}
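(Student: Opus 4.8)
The plan is to prove the three assertions of Lemma~\ref{lemma:submodinfo-properties} in order, starting from the definitions and using only subadditivity and monotonicity as needed.

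\paragraph{The inequality $f(\cup_i A_i) \leq \sum_i f(A_i)$.} Since $I_f(A) = f(A)$ by definition, this is literally the statement that $f$ is subadditive over an arbitrary finite union. First I would recall that any normalized monotone submodular function is subadditive: for two sets, $f(S \cup T) \leq f(S) + f(T) - f(S \cap T) \leq f(S) + f(T)$, where the first step is submodularity and the second uses $f(S\cap T) \geq f(\emptyset) = 0$ (monotonicity plus normalization). Then I would extend to $k$ sets by a straightforward induction on the number of sets in the union: writing $\cup_{i=1}^{k} A_i = (\cup_{i=1}^{k-1} A_i) \cup A_k$, apply the two-set bound and the inductive hypothesis. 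This gives $f(\cup_i A_i) \leq \sum_i f(A_i)$, and rewriting both sides via $I_f(\cdot) = f(\cdot)$ yields the displayed chain.

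\paragraph{The characterization $I_f(A) = 0 \iff f(j) = 0$ for all $j \in A$.} For the ``$\Leftarrow$'' direction, suppose $f(j) = 0$ for every $j \in A$. Enumerate $A = \{j_1, \dots, j_m\}$ and telescope: $f(A) = \sum_{t=1}^{m} f(j_t \mid \{j_1,\dots,j_{t-1}\})$. By the diminishing-returns form of submodularity, each marginal gain satisfies $f(j_t \mid \{j_1,\dots,j_{t-1}\}) \leq f(j_t \mid \emptyset) = f(j_t) = 0$, while monotonicity gives $f(j_t \mid \{j_1,\dots,j_{t-1}\}) \geq 0$. Hence every term is zero and $f(A) = I_f(A) = 0$. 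For ``$\Rightarrow$'', suppose $I_f(A) = f(A) = 0$. For any $j \in A$, monotonicity gives $0 \leq f(j) \leq f(A) = 0$, so $f(j) = 0$. The final sentence about ``dummy variables'' is just the interpretive restatement of this equivalence and needs no separate argument.

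\paragraph{Main obstacle.} There is no real obstacle here: the result is elementary and follows directly from the definitions together with monotonicity, normalization, and (sub)modularity. The only point requiring a small amount of care is making the induction for the $k$-set subadditivity bound clean — in particular, noting explicitly that one needs $f(\emptyset) = 0$ and monotonicity to discard the $f(S \cap T)$ term, so that the bound holds even though $f$ is only assumed normalized, monotone, non-negative, and submodular. It is worth flagging, as the surrounding text does, that the subadditivity half of the lemma in fact only uses subadditivity of $f$, not full submodularity.
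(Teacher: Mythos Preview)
Your proposal is correct and matches the paper's approach: the paper simply invokes subadditivity for the inequality and uses monotonicity ($f(j) \leq f(A)$ for $j \in A$) together with non-negativity for the ``$\Rightarrow$'' direction. Your write-up is in fact more complete than the paper's, since you spell out the induction for $k$-set subadditivity and explicitly prove the ``$\Leftarrow$'' direction (which the paper omits); note that for ``$\Leftarrow$'' you could also just reuse the subadditivity bound $f(A) \leq \sum_{j\in A} f(j) = 0$ rather than telescoping, but either works.
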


\paragraph{Submodular Conditional Gain:}
Below, we study three properties of the submodular conditional gain. The first and second are well known. The second property can be also viewed as the  ``conditioning reduces valuation'' counterpart of the conditional entropy.

\begin{lemma} \label{lemma:submodcond-gain-properties}
The conditional gain is non-negative, i.e., $f(A|B) \geq 0$ if $f$ is a monotone function. We also have the upper bound $f(A|B) \leq f(A)$ if $f$ is monotone and subadditive. Finally, $f(A | B)$ is submodular in $A$ for a given set $B$ (but not vice-versa) if $f$ is submodular. 
\end{lemma}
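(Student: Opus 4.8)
The plan is to prove each of the three claims straight from the definition $f(A\mid B)=f(A\cup B)-f(B)$, and then to exhibit a small counterexample for the ``not vice-versa'' assertion. For non-negativity, note $B\subseteq A\cup B$, so monotonicity of $f$ gives $f(A\cup B)\ge f(B)$, i.e.\ $f(A\mid B)\ge 0$; monotonicity is exactly what this step needs. For the upper bound $f(A\mid B)\le f(A)$, apply subadditivity to the pair $A,B$ to get $f(A\cup B)\le f(A)+f(B)$ and subtract $f(B)$; only subadditivity is used here, not monotonicity.

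For submodularity in the first argument, fix $B$ and put $g(A):=f(A\mid B)=f(A\cup B)-f(B)$. I would verify the diminishing-returns characterization directly: for $S\subseteq T$ with $j\notin T$, the constant $f(B)$ cancels and $g(j\mid S)=f(S\cup\{j\}\cup B)-f(S\cup B)=f(j\mid S\cup B)$, and likewise $g(j\mid T)=f(j\mid T\cup B)$; since $S\cup B\subseteq T\cup B$, submodularity of $f$ gives $f(j\mid S\cup B)\ge f(j\mid T\cup B)$, hence $g(j\mid S)\ge g(j\mid T)$ (the corner case $j\in B$ makes both marginals zero). Equivalently, $g$ is the contraction of $f$ by $B$, a standard submodularity-preserving operation.

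For the claim that $f(A\mid B)$ need not be submodular in $B$, the obstruction is that $f(A\mid B)=f(A\cup B)-f(B)$ is the sum of a term submodular in $B$ (a contraction) and the supermodular term $-f(B)$, so the sum can fail to be submodular. I would make this concrete with the coverage function on two items $\{u_1,u_2\}$ where $a$ covers $\{u_1,u_2\}$ and $b_1,b_2$ each cover $\{u_1\}$, i.e.\ $f(S)=\bigl|\bigcup_{x\in S}\mathrm{cov}(x)\bigr|$, which is monotone submodular. Taking $A=\{a\}$ one computes $f(A\mid\emptyset)=2$, $f(A\mid\{b_1\})=f(A\mid\{b_2\})=1$, and $f(A\mid\{b_1,b_2\})=1$, so adjoining $b_2$ to the conditioning set changes the value by $-1$ at $\emptyset$ but by $0$ at $\{b_1\}$ --- increasing rather than diminishing returns --- so $B\mapsto f(A\mid B)$ is not submodular.

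None of these steps is genuinely difficult; the only non-mechanical part is choosing the counterexample, and even there all one needs is to notice that the $-f(B)$ term destroys submodularity. It is worth recording that, because $f(A\mid B)=f(A)-I_f(A;B)$, submodularity of $B\mapsto f(A\mid B)$ would be equivalent to supermodularity of $B\mapsto I_f(A;B)$; this is precisely the point where the paper's later ``second-order supermodular'' hypothesis on $f$ enters for the submodular mutual information.
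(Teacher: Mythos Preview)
Your argument is correct and, for the three positive claims, matches the paper's proof almost line-for-line: monotonicity for non-negativity, subadditivity alone for the upper bound (you are right that the lemma's monotonicity hypothesis is not used there), and the observation that $A\mapsto f(A\cup B)-f(B)$ is a contraction of $f$ shifted by a constant. Where you go beyond the paper is in supplying an explicit coverage-function counterexample for the failure of submodularity in $B$; the paper simply asserts this without justification, so your version is strictly more complete.
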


\paragraph{Submodular (Conditional) Mutual Information:}
Now we take a look at the basic properties of the submodular mutual information. Before getting into more details on the properties, we show one very interesting connection between the submodular mutual information and the submodular conditional mutual information.
\begin{lemma}\label{lemma:app-submod-MI-CMI-rel}
Given a monotone, non-negative and normalized submodular function $f$, the conditional submodular mutual information $I_f(A;B \ | \ C) = I_g(A;B)$ where $g(A) = f(A | C)$ is also a normalized, monotone and non-negative submodular function. In other words, the conditional submodular mutual information is equal to the submodular mutual information of the conditional function.\looseness-1
\end{lemma}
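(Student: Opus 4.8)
The plan is to prove the two assertions separately: first the algebraic identity $I_f(A;B\mid C) = I_g(A;B)$, and then that $g(A) := f(A\mid C) = f(A\cup C) - f(C)$ inherits normalization, monotonicity, non-negativity, and submodularity from $f$.

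For the identity, I would simply expand both sides using the definitions from Section~\ref{sec:def-gen}. Substituting $g(A) = f(A\cup C) - f(C)$ into $I_g(A;B) = g(A) + g(B) - g(A\cup B)$, the three copies of $-f(C)$ collapse to a single $-f(C)$, leaving $f(A\cup C) + f(B\cup C) - f(A\cup B\cup C) - f(C)$, which is precisely the second line of the definition of $I_f(A;B\mid C)$. This step uses no inequality; it is pure bookkeeping.

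For the structural properties of $g$: normalization is $g(\emptyset) = f(C) - f(C) = 0$; monotonicity follows because $A\subseteq B$ implies $A\cup C \subseteq B\cup C$, so $f(A\cup C) \le f(B\cup C)$ since $f$ is monotone, hence $g(A)\le g(B)$; non-negativity is then immediate from $g(\emptyset)=0$ and monotonicity (equivalently, $C\subseteq A\cup C$ gives $f(C)\le f(A\cup C)$). For submodularity, I would use the marginal-gain form: $g(j\mid S) = g(S\cup j) - g(S) = f(S\cup j\cup C) - f(S\cup C) = f(j\mid S\cup C)$; then $S\subseteq T$ implies $S\cup C\subseteq T\cup C$, so the diminishing-returns inequality for $f$ yields $g(j\mid S) = f(j\mid S\cup C)\ge f(j\mid T\cup C) = g(j\mid T)$. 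Alternatively, apply $f(S')+f(T')\ge f(S'\cup T')+f(S'\cap T')$ to $S' = S\cup C$ and $T' = T\cup C$, using the distributive identities $(S\cup C)\cup(T\cup C) = (S\cup T)\cup C$ and $(S\cup C)\cap(T\cup C) = (S\cap T)\cup C$.

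There is no genuine obstacle here; the only point requiring a moment of care is the set identity $(S\cup C)\cap(T\cup C) = (S\cap T)\cup C$ invoked in the submodularity check, which ensures that translating $S,T$ by the fixed set $C$ preserves the underlying lattice operations and hence transports submodularity of $f$ to $g$ unchanged.
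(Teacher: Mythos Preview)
Your proposal is correct and follows essentially the same route as the paper: the paper's proof is the one-line observation $I_f(A;B\mid C) = f(A\mid C)+f(B\mid C)-f(A\cup B\mid C) = g(A)+g(B)-g(A\cup B) = I_g(A;B)$, which is exactly your identity computation. Your additional verification that $g$ is normalized, monotone, non-negative, and submodular is more explicit than the paper, which relegates those facts to Lemma~\ref{lemma:submodcond-gain-properties}; either way the content is the same.
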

This result follows by definition since $I_f(A;B \ | \ C) = f(A | C) + f(B | C) - f(A \cup B | C) = g(A) + g(B) - g(A \cup B) = I_g(A; B)$. While this is a very simple result, the take-away is very interesting. This means that whatever results hold for the submodular mutual information, will also hold with the conditional mutual information since it is in fact the submodular mutual information parameterized with a different submodular function.

Next, we list several interesting properties of SMIs and CSMIs. All the properties shown below also hold for the mutual information $I(X_A; X_B)$ as a function over sets of random variables $A$ and $B$. Two properties which follow directly by definition are: a) \emph{symmetry}: $I_f(A;B) = I_f(B;A)$, and b) \emph{self-information}: $I_f(A;A) = f(A)$. The next lemma shows the non-negativity of (conditional) submodular mutual information and provides upper and lower bounds.\looseness-1

\begin{lemma} \label{lemma:submod-mutinfo-basic-prop}
When $f$ is submodular, $I_f(A;B) \geq 0$ and $I_f(A;B|C) \geq 0$. Also: $\min(f(A), f(B)) \geq I_f(A;B) \geq f(A \cap B)$ and $\min(f(A|C), f(B|C)) \geq I_f(A;B | C) \geq f(A \cap B | C)$.\looseness-1
\end{lemma}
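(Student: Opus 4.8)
The plan is to prove each of the four inequalities (two non-negativity claims and two bracketing pairs) by reducing everything to elementary submodular manipulations, and to use Lemma~\ref{lemma:app-submod-MI-CMI-rel} to get the conditional statements for free once the unconditional ones are established. First I would handle $I_f(A;B)\ge 0$: by definition $I_f(A;B) = f(A) + f(B) - f(A\cup B)$, and since $A\cap B \subseteq B$, monotonicity is not even needed here --- submodularity gives $f(A) + f(B) \ge f(A\cup B) + f(A\cap B)$, so $I_f(A;B) \ge f(A\cap B) \ge 0$ provided $f$ is non-negative (which is part of our standing assumptions). Notice this single chain simultaneously proves both $I_f(A;B)\ge 0$ and the lower bound $I_f(A;B)\ge f(A\cap B)$.

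Next I would prove the upper bound $\min(f(A),f(B)) \ge I_f(A;B)$. Writing $I_f(A;B) = f(A) - f(A\mid B)$ (one of the identities recorded in Section~\ref{sec:def-gen}), and using Lemma~\ref{lemma:submodcond-gain-properties} which says $f(A\mid B)\ge 0$ for monotone $f$, we get $I_f(A;B) \le f(A)$; by symmetry $I_f(A;B) = I_f(B;A) \le f(B)$, hence $I_f(A;B) \le \min(f(A),f(B))$. So the entire unconditional statement --- $\min(f(A),f(B)) \ge I_f(A;B) \ge f(A\cap B) \ge 0$ --- falls out of submodularity, monotonicity, non-negativity, and the already-proven Lemma~\ref{lemma:submodcond-gain-properties}.

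Finally, for the conditional versions I would invoke Lemma~\ref{lemma:app-submod-MI-CMI-rel}: the function $g(\cdot) = f(\cdot\mid C)$ is again normalized, monotone, non-negative and submodular, and $I_f(A;B\mid C) = I_g(A;B)$. Applying the unconditional result already established to $g$ yields $\min(g(A),g(B)) \ge I_g(A;B) \ge g(A\cap B) \ge 0$, which is exactly $\min(f(A\mid C),f(B\mid C)) \ge I_f(A;B\mid C) \ge f(A\cap B\mid C) \ge 0$. This is the cleanest route and avoids re-deriving anything by hand in the conditional setting.

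There is no serious obstacle here; the only point requiring a little care is making sure each step uses only the hypotheses actually granted in the lemma statement. In particular, the lower bound $I_f(A;B)\ge f(A\cap B)$ and the consequence $I_f(A;B)\ge 0$ need $f$ non-negative (true by the standing assumption that $f$ is normalized, monotone, submodular), while the upper bound additionally needs monotonicity through Lemma~\ref{lemma:submodcond-gain-properties}; I would state these dependencies explicitly so the reader sees exactly where each assumption enters.
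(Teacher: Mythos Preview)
Your proposal is correct and follows essentially the same route as the paper: submodularity for the lower bound $I_f(A;B)\ge f(A\cap B)$, the identity $I_f(A;B)=f(A)-f(A\mid B)$ together with Lemma~\ref{lemma:submodcond-gain-properties} for the upper bound, and Lemma~\ref{lemma:app-submod-MI-CMI-rel} to transfer everything to the conditional case. The only cosmetic difference is that the paper derives $I_f(A;B\mid C)\ge 0$ by a direct submodularity-plus-monotonicity computation on $f(A\cup C)+f(B\cup C)$ rather than via the reduction lemma; your uniform use of Lemma~\ref{lemma:app-submod-MI-CMI-rel} is arguably cleaner.
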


\noindent Next, we study $I_f(A; B)$ (and $I_f(A; B | C)$ for a fixed set $B$ (and $C$). We can then view $I_f(A;B)$ (or $I_f(A; B | C)$) as a function of $A$. The following theorem studies their monotonicity and submodularity. 
\begin{theorem} \label{theorem:submod-mutinfo-mon-submod-prop}
For a given sets $B$ and $C$, $I_f(A;B)$ and $I_f(A; B | C)$ are monotone functions in $A$, though in general, they are neither submodular nor supermodular. They are submodular, however, iff the second-order partial derivatives, $f^{(2)}(j,k;A)$ are monotone increasing or equivalently the 
third-order partial derivatives $f^{(3)}(i,j,k;A) = f^{(2)}(j,k;A \cup i) - f^{(2)}(j,k;A)$ are always non-negative.\looseness-1
\end{theorem}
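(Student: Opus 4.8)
The plan is to reduce both claims to a single computation of the first- and second-order partial derivatives of the map $A \mapsto I_f(A;B)$. First I would dispose of the conditional statement by invoking Lemma~\ref{lemma:app-submod-MI-CMI-rel}: writing $g(A) = f(A\,|\,C) = f(A\cup C) - f(C)$, we have $I_f(A;B\,|\,C) = I_g(A;B)$, and $g$ is normalized, monotone and submodular. Since subtracting the constant $f(C)$ and translating the context by $C$ do not change increments, $g^{(2)}(j,k;A) = f^{(2)}(j,k;A\cup C)$ and $g^{(3)}(i,j,k;A) = f^{(3)}(i,j,k;A\cup C)$; in particular $g$ has everywhere non-negative third-order derivatives iff $f$ does. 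So it is enough to treat $h(A) := I_f(A;B) = f(A) - f(A\,|\,B)$ for a fixed $B$ — the conditional case then follows, and conversely taking $C = \emptyset$ recovers the unconditional case.

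For monotonicity, a one-line cancellation gives, for $j \notin A$,
\begin{align}
h(j\,|\,A) = f(j\,|\,A) - f(j\,|\,A\cup B).
\end{align}
If $j \in B$ the second term is $0$ and $h(j\,|\,A) = f(j\,|\,A) \ge 0$ by monotonicity; if $j \notin B$ then $A \subseteq A\cup B$ and diminishing returns gives $f(j\,|\,A) \ge f(j\,|\,A\cup B)$, so $h(j\,|\,A) \ge 0$ in either case. Hence $h$ is monotone, and by the reduction above so is $I_f(\cdot\,;B\,|\,C)$.

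For submodularity I would differentiate once more: for distinct $j,k \notin A$,
\begin{align}
h^{(2)}(j,k;A) = f^{(2)}(j,k;A) - f^{(2)}(j,k;A\cup B),
\end{align}
which holds in all cases once we use $f(j\,|\,S) = 0$ for $j \in S$ (note that $f^{(2)}(j,k;A\cup B) = 0$ automatically when $j$ or $k$ lies in $B$). For the ``if'' direction: if the second-order derivatives $f^{(2)}(j,k;\cdot)$ are monotone increasing, then $A \subseteq A\cup B$ gives $h^{(2)}(j,k;A) \le 0$, so $h$ is submodular (the cases $j\in B$ or $k\in B$ collapse to $h^{(2)}(j,k;A) = f^{(2)}(j,k;A) \le 0$, which is just submodularity of $f$). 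For the ``only if'' direction, specialize to a singleton $B = \{i\}$ with $i \notin A$ and $i \ne j,k$: then $h^{(2)}(j,k;A) = f^{(2)}(j,k;A) - f^{(2)}(j,k;A\cup i) = -f^{(3)}(i,j,k;A)$, so submodularity of $I_f(\cdot\,;\{i\})$ forces $f^{(3)}(i,j,k;A) \ge 0$, and letting $i$ vary yields the condition globally. Finally, ``$f^{(2)}(j,k;\cdot)$ monotone increasing'' and ``$f^{(3)} \ge 0$ everywhere'' are equivalent by the standard telescoping argument over single-element additions to the context.

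It remains to note that $h$ is in general neither submodular nor supermodular: running the $B = \{i\}$ computation the other way, supermodularity of $h$ would force $f^{(3)}(i,j,k;A) \le 0$ everywhere, while submodularity of $f$ constrains neither sign of $f^{(3)}$. I would close the ``in general'' part by exhibiting a small monotone submodular $f$ (e.g.\ the rank function of a simple matroid, or a concave transform of a modular function) whose third-order derivative changes sign, producing one set $B$ for which $I_f(\cdot\,;B)$ fails submodularity and another for which it fails supermodularity. The main obstacle is not any single calculation but getting the logic of the ``iff'' right: the third-order condition is a property of $f$ alone, so the equivalence should be read as ``$I_f(\cdot\,;B)$ and $I_f(\cdot\,;B\,|\,C)$ are submodular for \emph{all} choices of $B$ (and $C$)'' $\iff$ ``$f^{(3)} \ge 0$ everywhere'', and the forward implication genuinely uses the freedom to take $B$ a singleton.
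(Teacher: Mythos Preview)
Your proposal is correct and follows essentially the same route as the paper: both compute the gain $h(j\,|\,A) = f(j\,|\,A) - f(j\,|\,A\cup B)$ for monotonicity, derive submodularity of $I_f(\cdot\,;B)$ from monotonicity of $f^{(2)}(j,k;\cdot)$ (equivalently $f^{(3)}\ge 0$), and reduce the conditional case to the unconditional one via Lemma~\ref{lemma:app-submod-MI-CMI-rel}. Your treatment is in fact slightly more careful than the paper's --- you handle the $j\in B$ case explicitly, you make the ``for all $B$'' reading of the iff explicit, and your converse via singleton $B=\{i\}$ is a clean specialization of the paper's contradiction with a general disjoint set $C$ --- and the counterexample you propose (a matroid rank function) is exactly what the paper uses in Lemma~\ref{lemma:app-non-submod-MI} and Corollary~\ref{corr:app-non-submod-MI}.
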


Recall that for monotone functions, the first-order partial derivatives $f(j | X) \geq 0$, while for submodularity, we require the second-order partial derivatives $f^{(2)}(j,k;A) \leq 0$. 
A subclass of submodular functions which additionally have non-negative third-order partial derivatives satisfy the property that the submodular (conditional) mutual information function is submodular. Such functions are also called second order supermodular functions~\citep{korula2018online}.  In the \arxivalt{Appendix~\ref{sec:app-properties-smi}}{extended version~\citep{neurips2020supplemental}}, we show that several practically useful functions like set cover, facility location and concave over modular with power and log functions satisfy this property. However we also show that a rather simple function like the uniform matroid rank function (i.e., a truncation of the form $f(A) = \min(|A|,c)$) does not satisfy this, and correspondingly, the (conditional) mutual information parametrized by these functions is not submodular. Also, the standard mutual information is not necessarily submodular in one set given the other~\citep{krauseNearoptimalNonmyopicValue}. Next, we provide simple submodular upper and lower bounds for the submodular mutual information.\looseness-1 

\begin{lemma} \label{lemma:submod-mutinfo-modularbounds-prop}
For a given set $B$, $I_f(A; B)$ can be upper/ lower bounded by two submodular functions: $f(A) - \sum_{j \in A \backslash B} f(j | B) \leq I_f(A; B) = f(A) - f(A | B) \leq f(A) - \sum_{j \in A \setminus B} f(j | \Omega \setminus j) \leq f(A)$.
\end{lemma}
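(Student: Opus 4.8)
The plan is to bound the conditional gain $f(A \mid B) = f(A \cup B) - f(B)$ both above and below by quantities that are modular in $A$, using a telescoping expansion together with the diminishing-returns form of submodularity, and then transfer these bounds to $I_f(A;B)$ via the identity $I_f(A;B) = f(A) - f(A \mid B)$ recorded in Section~\ref{sec:def-gen}.

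Concretely, I would fix an arbitrary enumeration $A \setminus B = \{j_1, \dots, j_m\}$, set $S_0 = B$ and $S_i = B \cup \{j_1, \dots, j_i\}$, and use the telescoping identity $f(A \cup B) - f(B) = \sum_{i=1}^m f(j_i \mid S_{i-1})$, noting $S_m = A \cup B$. For the upper bound on $f(A \mid B)$, since $B = S_0 \subseteq S_{i-1}$, diminishing returns gives $f(j_i \mid S_{i-1}) \le f(j_i \mid B)$, hence $f(A \mid B) \le \sum_{j \in A \setminus B} f(j \mid B)$; subtracting from $f(A)$ yields the claimed lower bound on $I_f(A;B)$. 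For the lower bound on $f(A \mid B)$, observe $S_{i-1} \subseteq \Omega \setminus j_i$ (as $j_i \notin B$ and $j_i \notin \{j_1, \dots, j_{i-1}\}$), so diminishing returns gives $f(j_i \mid S_{i-1}) \ge f(j_i \mid \Omega \setminus j_i)$, hence $f(A \mid B) \ge \sum_{j \in A \setminus B} f(j \mid \Omega \setminus j)$, which yields the upper bound on $I_f(A;B)$. The outermost inequality $f(A) - \sum_{j \in A \setminus B} f(j \mid \Omega \setminus j) \le f(A)$ is immediate from monotonicity, since each $f(j \mid \Omega \setminus j) \ge 0$.

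It then remains to check that the two bounding expressions are submodular in $A$. The key observation is that for fixed coefficients $c_j$, the map $A \mapsto \sum_{j \in A \setminus B} c_j$ equals $\sum_{j \in A} \tilde c_j$ with $\tilde c_j = c_j$ for $j \notin B$ and $\tilde c_j = 0$ for $j \in B$, hence is modular in $A$. Taking $c_j = f(j \mid B)$ and $c_j = f(j \mid \Omega \setminus j)$ respectively, and using that $f$ is submodular while the difference of a submodular and a modular function is submodular, both $f(A) - \sum_{j \in A \setminus B} f(j \mid B)$ and $f(A) - \sum_{j \in A \setminus B} f(j \mid \Omega \setminus j)$ are submodular in $A$. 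I do not expect a genuine obstacle here; the only points needing care are noting that the telescoping bounds are independent of the chosen enumeration (the right-hand sides are sums over sets), and correctly isolating the per-element coefficients so that the subtracted terms are truly modular in $A$, and hence the differences truly submodular.
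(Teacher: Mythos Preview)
Your proposal is correct and follows essentially the same approach as the paper: the paper likewise enumerates $A\setminus B$, builds the chain $X_0=B\subseteq X_1\subseteq\cdots\subseteq X_k=A\cup B$, telescopes $f(A\mid B)=\sum_i f(a_i\mid X_{i-1})$, and then applies diminishing returns using $B\subseteq X_{i-1}$ for the upper bound and $X_{i-1}\subseteq\Omega\setminus a_i$ for the lower bound. Your write-up is in fact slightly more complete, since you also justify the outermost inequality via monotonicity and verify that the bounding expressions are submodular in $A$ (modular correction of a submodular function), points the paper's proof leaves implicit.
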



\paragraph{Submodular Multi-Set Mutual Information and Total Correlation:}
We begin this section by studying the positivity and monotonicity of the $k$-way total correlation.
\begin{lemma}
\label{lemma:pos-monotone-tc-prop}
Given a monotone submodular function, it holds that $C_f(A_1, \cdots, A_k) \geq 0$. Furthermore, the $k$-way total correlation $C_f(A_1; \cdots; A_k)$ is monotone in any one of the variables fixing the others. Correspondingly, it is also monotone in all the variables. 
\end{lemma}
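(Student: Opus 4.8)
The plan is to unwind the definition $C_f(A_1,\dots,A_k) = \sum_{i=1}^k f(A_i) - f(\cup_{i=1}^k A_i)$ and argue directly from submodularity and monotonicity; no heavy machinery is needed.

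For the non-negativity, I would simply invoke the subadditivity inequality already recorded in Lemma~\ref{lemma:submodinfo-properties}, namely $f(\cup_{i=1}^k A_i) \leq \sum_{i=1}^k f(A_i)$ (which itself follows by induction from $f(S)+f(T)\geq f(S\cup T)$, a consequence of submodularity together with normalization and monotonicity). Subtracting gives $C_f(A_1,\dots,A_k)\geq 0$ at once.

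For monotonicity in a single variable, note first that the expression defining $C_f$ is symmetric in its arguments, so it suffices to treat the first argument $A_1$ with $A_2,\dots,A_k$ held fixed. I would then use the characterization of monotonicity as non-negativity of all single-element marginal gains: fix $j\notin A_1$ and compute, writing $U = \cup_{i=2}^k A_i$,
\[
C_f(A_1\cup j, A_2,\dots,A_k) - C_f(A_1,\dots,A_k) = f(j\mid A_1) - \bigl[\,f(A_1\cup j\cup U) - f(A_1\cup U)\,\bigr].
\]
If $j\in U$, the bracketed term vanishes and the gain equals $f(j\mid A_1)\geq 0$ by monotonicity. If $j\notin U$, the bracketed term is exactly $f(j\mid A_1\cup U)$, so the gain is $f(j\mid A_1) - f(j\mid A_1\cup U)$, which is non-negative because $A_1\subseteq A_1\cup U$ and submodularity gives the diminishing-returns inequality $f(j\mid A_1)\geq f(j\mid A_1\cup U)$. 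Hence every single-element gain in $A_1$ is non-negative, so $C_f$ is monotone in $A_1$, and by symmetry in each $A_i$.

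Finally, the "correspondingly" clause — joint monotonicity, i.e. $C_f(A_1,\dots,A_k)\leq C_f(B_1,\dots,B_k)$ whenever $A_i\subseteq B_i$ for all $i$ — follows by a telescoping argument: replace the arguments one at a time, $A_1\mapsto B_1$, then $A_2\mapsto B_2$, and so on, invoking the just-proved one-variable monotonicity at each step with the remaining arguments frozen at their current values. I do not anticipate a genuine obstacle here; the only point requiring a moment of care is the case split on whether the added element already lies in the union of the other sets, since only in the second case is the diminishing-returns property actually invoked.
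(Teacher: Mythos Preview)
Your proposal is correct and essentially mirrors the paper's proof: the paper also deduces non-negativity from subadditivity (spelled out inductively rather than by citing Lemma~\ref{lemma:submodinfo-properties}), and for one-variable monotonicity it computes the identical marginal gain $f(j\mid A_1) - f(j\mid \cup_{i=1}^k A_i)$ and invokes diminishing returns via $A_1\subseteq \cup_{i=1}^k A_i$, then gets joint monotonicity by the same telescoping induction. The only cosmetic difference is that the paper does not perform your case split on whether $j\in U$; it tacitly uses the convention that $f(j\mid S)=0$ when $j\in S$, so both cases are absorbed into a single inequality --- your explicit split is a bit more careful but not a different argument.
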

While $I_f(A; B)$ and the total correlation are always non-negative, the $k$-way mutual information might be negative even for $I_f(A;B;C)$ when $k = 3$ (similar to what \citep{yeung2008information} showed in the entropic case).\looseness-1
\begin{lemma}
\label{lemma:positivity-mi-prop}
The $3$-way submodular mutual information $I_f(A; B; C)$, can be both positive or negative. $I_f(A; B; C) \geq 0$ if $I_f(A; B)$ is submodular in $A$ for a fixed set $B$\arxiv{ (equivalently $A, B, C$ are redundant with respect to each other)}. Similarly, $I_f(A; B; C) \leq 0$ if $I_f(A; B)$ is supermodular in $A$ for a fixed set $B$\arxiv{ (i.e. sets $A, B, C$ are synergistic with respect to each other)}. 
\end{lemma}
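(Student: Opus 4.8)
The statement (Lemma~\ref{lemma:positivity-mi-prop}) has three parts: (i) $I_f(A;B;C)$ can be positive or negative; (ii) $I_f(A;B;C)\ge 0$ whenever $I_f(A;B)$ is submodular in $A$ for fixed $B$; and (iii) $I_f(A;B;C)\le 0$ whenever $I_f(A;B)$ is supermodular in $A$ for fixed $B$. The key observation I would exploit is that the $3$-way multiset mutual information telescopes into a ``second difference'' of the $2$-way one. Expanding the inclusion–exclusion definition for $k=3$,
\[
I_f(A;B;C) = f(A)+f(B)+f(C) - f(A\cup B) - f(A\cup C) - f(B\cup C) + f(A\cup B\cup C),
\]
and regrouping terms shows $I_f(A;B;C) = I_f(A;B) - I_f(A;B\mid C)$, i.e.\ the $3$-way SMI is the drop in the $2$-way SMI of $A,B$ caused by conditioning on $C$ — exactly the entropic identity $I(A;B)-I(A;B\mid C)$. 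This is the engine for both inequality directions.

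**Parts (ii) and (iii).** Fix $B$ and define $h_B(A) := I_f(A;B) = f(A) + f(B) - f(A\cup B)$, a monotone function of $A$ (by Lemma~\ref{lemma:submod-mutinfo-basic-prop} and the monotonicity claim in Theorem~\ref{theorem:submod-mutinfo-mon-submod-prop}). Using Lemma~\ref{lemma:app-submod-MI-CMI-rel}, $I_f(A;B\mid C) = I_g(A;B) = h^g_B(A)$ where $g(\cdot) = f(\cdot\mid C)$; but I actually want to compare $h_B$ at ``context $\emptyset$'' versus ``context $C$''. Concretely, I would write $I_f(A;B) - I_f(A;B\mid C)$ as a sum of marginal gains of $h_B$ picked up when building $C$ up element by element. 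If $C = \{c_1,\dots,c_m\}$ and $C_t = \{c_1,\dots,c_t\}$, then
\[
I_f(A;B\mid C) - I_f(A;B\mid \emptyset) \;=\; \sum_{t=1}^{m} \Big( I_f(A;B\mid C_t) - I_f(A;B\mid C_{t-1}) \Big),
\]
and each summand is, after unwinding the definition, precisely a second-order partial derivative expression of the map $A\mapsto I_f(A;B)$ — it equals $h_B(A\cup c_t \mid C_{t-1}) - h_B(A \mid C_{t-1})$ type quantity. The point: if $A\mapsto I_f(A;B)$ is submodular, then conditioning $I_f(A;B\mid C)$ on any context only decreases it relative to the unconditioned version (its marginal gains in $A$ diminish), forcing $I_f(A;B) \ge I_f(A;B\mid C)$, hence $I_f(A;B;C)\ge 0$; the supermodular case reverses every inequality. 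I would make this rigorous by a single induction on $|C|$: the base case $|C|=1$ is the definitional identity $I_f(A;B;\{c\}) = I_f(A;B) - I_f(A;B\mid c) = -\big(f^{(2)}\text{-type term}\big)$ whose sign is governed by the (super/sub)modularity of $I_f(\cdot;B)$, and the inductive step applies Lemma~\ref{lemma:app-submod-MI-CMI-rel} to replace $f$ by the conditioned function $g=f(\cdot\mid c_1)$, noting that (super/sub)modularity of $I_f(\cdot;B)$ is inherited by $I_g(\cdot;B)$.

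**Part (i), the examples.** For negativity, I would reuse the running counterexample in the paper: the uniform matroid rank (truncation) function $f(A) = \min(|A|,\kappa)$, which Theorem~\ref{theorem:submod-mutinfo-mon-submod-prop} (and the surrounding discussion) already flags as failing the third-order nonnegativity property; a small instance such as $\kappa = 2$ with $A,B,C$ disjoint singletons gives $I_f(A;B;C) = 1+1+1 - 2 - 2 - 2 + 2 = -1 < 0$. For positivity, any strictly monotone modular function, or more simply taking $A=B=C$ a single nonzero-valued element so that $I_f(A;B;C) = f(A) > 0$, suffices; alternatively one of the second-order supermodular functions (set cover, facility location) with nested or overlapping sets gives a strictly positive value. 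The parenthetical ``redundant / synergistic'' remarks I would treat as informal interpretation and not attempt to formalize beyond stating them.

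**Main obstacle.** The only genuinely delicate step is the induction in parts (ii)–(iii): cleanly identifying $I_f(A;B) - I_f(A;B\mid C)$ as an accumulation of second-order differences of the map $A\mapsto I_f(A;B)$ and verifying that ``$I_f(\cdot;B)$ submodular'' is the exact condition that makes each increment have the right sign — and that this property is preserved under conditioning (so the induction goes through). Everything else is definitional bookkeeping with inclusion–exclusion, and the counterexamples are direct computations.
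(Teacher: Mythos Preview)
Your approach is correct and, at heart, rests on the same identity as the paper, though you take a longer detour to finish. The paper simply writes
\[
I_f(A;B;C) \;=\; I_f(A;B) + I_f(C;B) - I_f(A\cup C;B),
\]
and observes that the right side is nonnegative exactly when $h_B(\cdot):=I_f(\cdot;B)$ is submodular (since $h_B(A)+h_B(C)\ge h_B(A\cup C)+h_B(A\cap C)\ge h_B(A\cup C)$, the last step by nonnegativity of $I_f$), and nonpositive when $h_B$ is supermodular. Your decomposition $I_f(A;B;C)=I_f(A;B)-I_f(A;B\mid C)$ is in fact the \emph{same} expression, because $I_f(A;B\mid C)=h_B(A\cup C)-h_B(C)=h_B(A\mid C)$; from there, ``conditioning reduces valuation'' (Lemma~\ref{lemma:submodcond-gain-properties}) applied to the submodular, monotone, normalized $h_B$ finishes the argument in one stroke. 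The telescoping induction on $|C|$ and the ``preservation of submodularity under conditioning'' that you flag as the main obstacle are therefore unnecessary --- they would amount to reproving subadditivity element by element. Your counterexamples for part (i) are fine and a bit smaller than the paper's (which uses $f(A)=\min(|A|,10)$ with disjoint blocks of sizes $4,5,5$ to get $I_f(A;B;C)=-4$).
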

 In the \arxivalt{Appendix~\ref{sec:app-properties-multiset-tc-smi}}{extended version~\citep{neurips2020supplemental}}, we provide examples of $I_f(A;B;C) < 0$. Since the three-way mutual information is not necessarily non-negative, we do not expect the $k$-way mutual information to be non-negative in general. The next result discusses the monotonicity of the three-way mutual information.

\begin{lemma}\label{lemma:3way-mutinfo-monotone}
For a given set $D$ and a function $f$, we have $I_f(A\cup D;B\cup D;C\cup D) \geq I_f(A;B;C)$ for any three subsets $A,B,C$. On the other hand with fixed sets $B, C$ we do not always have monotonicity in $I_f(A;B;C)$ as function of $A$. 
\end{lemma}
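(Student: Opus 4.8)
The plan is to handle the two claims separately: the inequality $I_f(A\cup D;B\cup D;C\cup D)\ge I_f(A;B;C)$ by a short telescoping computation followed by a diminishing-returns argument, and the non-monotonicity in $A$ by exhibiting a single explicit instance.

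For the inequality, I would expand both sides using the inclusion--exclusion definition of the $3$-way submodular mutual information. Since each union occurring in $I_f(A\cup D;B\cup D;C\cup D)$ is exactly the corresponding union in $I_f(A;B;C)$ with $D$ adjoined, subtracting turns each of the seven matched pairs of terms into a conditional gain, giving
\begin{align*}
I_f(A\cup D;B\cup D;C\cup D) - I_f(A;B;C)
&= f(D|A) + f(D|B) + f(D|C) + f(D|A\cup B\cup C)\\
&\quad - f(D|A\cup B) - f(D|A\cup C) - f(D|B\cup C).
\end{align*}
Next I would use that for a submodular $f$ the conditional gain $f(D|\cdot)$ is antitone in its conditioning argument: $S\subseteq T$ implies $f(D|S)\ge f(D|T)$, since $f(D|S)-f(D|T)=I_f(D;T\setminus S|S)\ge 0$, which is part of Lemma~\ref{lemma:submod-mutinfo-basic-prop}. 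Matching $A\cup B$ with $A$, $B\cup C$ with $B$, and $A\cup C$ with $C$ then yields $f(D|A\cup B)+f(D|A\cup C)+f(D|B\cup C)\le f(D|A)+f(D|B)+f(D|C)$, while $f(D|A\cup B\cup C)\ge 0$ by monotonicity (Lemma~\ref{lemma:submodcond-gain-properties}); together these show the displayed difference is $\ge 0$. (Taking $D$ one element at a time and iterating gives the same conclusion, but the computation above already handles an arbitrary set $D$ directly.)

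For the non-monotonicity claim, I would first note that $I_f(\emptyset;B;C)=0$ for every $B,C$, since in the inclusion--exclusion sum all terms cancel in pairs except $f(\emptyset)=0$. Hence it suffices to produce a monotone submodular $f$ and sets $B,C$ with $I_f(A';B;C)<0$ for some nonempty $A'$, because $\emptyset\subseteq A'$ then already breaks monotonicity in the first argument; such instances are guaranteed by Lemma~\ref{lemma:positivity-mi-prop}. Concretely I would take the uniform matroid rank function $f(X)=\min(|X|,4)$ on a ground set containing $\{1,\dots,5\}$, with $B=\{1,2\}$, $C=\{3,4\}$, and $A'=\{5\}$; a direct evaluation gives $I_f(\{5\};B;C)=1+2+2-3-3-4+4=-1<0=I_f(\emptyset;B;C)$, so $I_f(\cdot;B;C)$ is not monotone.

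I do not expect a genuine obstacle. The only place needing care is the bookkeeping in the telescoping identity: it must be checked to hold even when $D$ overlaps $A$, $B$, or $C$ (it does, directly from $f(D|S)=f(S\cup D)-f(S)$), and one must pick a consistent pairing of the three pairwise unions with the three single sets when invoking antitonicity of the conditional gain.
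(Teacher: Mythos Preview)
Your proof is correct and follows essentially the same route as the paper: the paper's grouping of the difference into three submodularity terms plus one monotonicity term is exactly your conditional-gain identity with the same pairing $f(D|A)\ge f(D|A\cup B)$, $f(D|B)\ge f(D|B\cup C)$, $f(D|C)\ge f(D|A\cup C)$, and $f(D|A\cup B\cup C)\ge 0$. For the non-monotonicity claim the paper also uses a uniform matroid rank function, comparing $I_f(A;B;C)$ with $I_f(A\cup j;B;C)$ for nonempty $A$; your variant starting from $A=\emptyset$ (exploiting $I_f(\emptyset;B;C)=0$) is a slight simplification of the same idea.
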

Again, contrasting this with the two-set case, $I_f(A;B)$ is monotone in $A$ for a given $B$ and vice versa. However, this does not hold for the three-way mutual information. Moreover, for the four-way mutual information does not satisfy the above (i.e. $I_f(A \cup E; B \cup E; C \cup E; D \cup E)$ may be smaller or larger compared to $I_f(A; B; C; D)$, and hence we do not expect the $k$-way submodular mutual information to be monotone (cf.\ \arxivalt{Appendix~\ref{sec:app-properties-multiset-tc-smi}}{extended version~\citep{neurips2020supplemental}}). Finally, we provide an upper bound on the mutual Information:\looseness-1
\begin{lemma}
\label{lemma:upperbound-mutinfo-prop}
Given a monotone submodular function, the following inequality: \\ $I_f(A_1; A_2; \cdots; A_k) \leq \min(f(A_1), \cdots, f(A_k))$ holds for $k = 3$ and $4$. However, it does not hold for $k = 5$, and hence does not necessarily hold for $k \geq 5$.
\end{lemma}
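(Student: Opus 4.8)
The plan is to rewrite $I_f(A_1;\dots;A_k)-f(A_k)$ as an alternating sum of conditional gains and then control it with a small matching argument; the same bookkeeping will expose why $k=5$ is out of reach. First I would record the regrouping identity
\[
 I_f(A_1;\dots;A_k)\;=\;f(A_k)\;+\;\sum_{\emptyset\neq S\subseteq[k-1]}(-1)^{|S|}\,f\bigl(A_k\mid \textstyle\bigcup_{i\in S}A_i\bigr),
\]
valid for any normalized set function: in the inclusion--exclusion definition, pair each $S\subseteq[k-1]$ with $S\cup\{k\}$, use $f(\bigcup_{i\in S\cup\{k\}}A_i)-f(\bigcup_{i\in S}A_i)=f(A_k\mid\bigcup_{i\in S}A_i)$, and peel off $S=\{k\}$ as the lone $f(A_k)$. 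Since $I_f(A_1;\dots;A_k)$ is symmetric in its arguments, it suffices to prove $I_f(A_1;\dots;A_k)\le f(A_k)$, i.e.\ that $\sum_{\emptyset\neq S\subseteq[k-1]}(-1)^{|S|}f(A_k\mid\bigcup_{i\in S}A_i)\le 0$.

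The only structural input is a ``conditioning reduces the gain'' inequality that survives overlapping sets: if $X\subseteq Y$ then $f(A_k\mid X)\ge f(A_k\mid Y)$. This follows by applying submodularity to $A_k\cup X$ and $Y$ --- whose union is $A_k\cup Y$ and whose intersection is $(A_k\cap Y)\cup X\supseteq X$ --- and then bounding the intersection term below by $f(X)$ using monotonicity. With this in hand, the $k=3$ and $k=4$ cases reduce to matching the positive ``pair'' terms $f(A_k\mid A_i\cup A_j)$ to the negative ``singleton'' terms $-f(A_k\mid A_i)$: for $k=3$ bound the single pair $\{1,2\}$ by $f(A_3\mid A_1)$, leaving $-f(A_3\mid A_2)\le 0$; for $k=4$ match $\{1,2\},\{1,3\},\{2,3\}$ to $\{1\},\{3\},\{2\}$ respectively, which uses each singleton exactly once and dominates each pair term, while the remaining negative triple term $-f(A_4\mid A_1\cup A_2\cup A_3)$ is $\le 0$ by monotonicity. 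Hence the alternating sum is $\le 0$, and by symmetry $I_f(A_1;\dots;A_k)\le\min_i f(A_i)$ for $k=3,4$.

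For $k=5$ this breaks irreparably: there are $\binom{4}{2}=6$ positive ``pair'' terms but only $4$ ``singleton'' terms to absorb them, so no injection of pairs into singletons exists (Hall's condition fails already --- six pairs on a $4$-set cover only $4$ elements), and the $4$ ``triple'' terms carry the wrong sign to help. This is genuine, not an artifact of the proof: take $f(X)=\min(|X|,3)$ with $A_1,\dots,A_5$ pairwise disjoint singletons, so $f(\bigcup_{i\in T}A_i)=\min(|T|,3)$; then $I_f(A_1;\dots;A_5)=-\sum_{t=0}^{5}\binom{5}{t}(-1)^t\min(t,3)=3$, while $\min_i f(A_i)=1$. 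In fact the same family yields $I_f(A_1;\dots;A_k)=\binom{k-2}{2}$, which exceeds $\min_i f(A_i)=1$ for every $k\ge 5$, so no version of the bound can hold for $k\ge 5$.

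I expect the $k=4$ case to be the crux. The tempting shortcut --- writing $I_f(A;B;C;D)=I_f(A;B;C)-I_f(A;B;C\mid D)$ and invoking the $k=3$ bound --- fails, because by Lemma~\ref{lemma:positivity-mi-prop} the conditional three-way mutual information $I_f(A;B;C\mid D)$ can be negative, so subtracting it may push the value above $f(D)$. Getting past this requires the finer regrouping identity together with the observation that a pair-to-singleton injection exists exactly when $k-1\le 3$; the other mild subtlety is checking that the ``conditioning reduces the gain'' step remains valid when the sets $A_i$ are permitted to overlap, where monotonicity (not just submodularity) is used.
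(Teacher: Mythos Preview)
Your proof is correct. The $k=3$ case coincides with the paper's after a trivial regrouping. For $k=4$ the two arguments genuinely differ: the paper writes $I_f(A;B;C;D)=f(D)+I_f(A;B;C)-I_f(A\cup D;B\cup D;C\cup D)$ and then invokes Lemma~\ref{lemma:3way-mutinfo-monotone} (monotonicity of the three-way mutual information in all arguments simultaneously) to make that difference nonpositive. This is the very decomposition you flag in your last paragraph, since $I_f(A;B;C\mid D)=I_f(A\cup D;B\cup D;C\cup D)-f(D)$; you are right that merely invoking ``the $k=3$ bound'' or non-negativity of $I_f(A;B;C\mid D)$ does not close the argument, but the three-way monotonicity lemma does, so the decomposition route is not in fact a dead end. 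Your matching argument buys self-containment (no auxiliary lemma) and makes the threshold transparent: a pair-to-singleton injection on $[k-1]$ with each pair mapped to one of its own elements exists iff $\binom{k-1}{2}\le k-1$, i.e.\ $k\le 4$. For $k=5$ your counterexample---disjoint singletons under $f(X)=\min(|X|,3)$, giving $I_f=3>1$---is simpler than the paper's (which uses blocks of size $k$ under $\min(|\cdot|,3k)$), and your closed form $I_f=\binom{k-2}{2}$ for this family dispatches all $k\ge 5$ at once, which the paper leaves implicit.
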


\paragraph{Submodular Information Metric:}
We next show that the submodular information distance measure is a \emph{pseudo metric} for any general submodular function $f$ and then we state the condition on $f$ for $D_f(A;B)$ to be a metric between subsets of $\Omega$. 
\begin{lemma}
\label{lemma:psuedometric-def}
Given a monotone submodular function $f$, the submodular information metric: $D_f(A,B) = f(A \cup B) - I_f(A;B)$ is a pseudo metric i.e $D_f(A;B) = 0 \centernot \implies A = B$. Moreover, it is a metric if the submodular function has a curvature $\kappa_f > 0$ where $\kappa_f =  1 - \min_{j \in \Omega} \frac{f(j | \Omega\backslash j)}{f(j | \emptyset)}$
\end{lemma}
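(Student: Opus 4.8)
The plan is to base everything on the closed form $D_f(A;B) = f(A\mid B) + f(B\mid A) = 2\,f(A\cup B) - f(A) - f(B)$ and check the pseudo-metric axioms one at a time. Symmetry is immediate from the symmetric right-hand side; $D_f(A;A) = f(A\mid A)+f(A\mid A) = 0$ is immediate; and non-negativity $D_f(A;B)\ge 0$ follows from Lemma~\ref{lemma:submodcond-gain-properties}, since conditional gains of a monotone function are non-negative. So the only substantive part of the pseudo-metric claim is the triangle inequality $D_f(A;C)\le D_f(A;B)+D_f(B;C)$.

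For the triangle inequality I would first cancel the terms $f(A)$ and $f(C)$, which occur on both sides, reducing the claim to the equivalent inequality $f(A\cup C) + f(B)\le f(A\cup B) + f(B\cup C)$. This then follows in two short moves: apply submodularity to the pair of sets $A\cup B$ and $B\cup C$, whose union is $A\cup B\cup C$ and whose intersection is $B\cup(A\cap C)$, giving $f(A\cup B)+f(B\cup C)\ge f(A\cup B\cup C)+f\big(B\cup(A\cap C)\big)$; then apply monotonicity twice, $f(A\cup B\cup C)\ge f(A\cup C)$ and $f\big(B\cup(A\cap C)\big)\ge f(B)$. Note that monotonicity is genuinely used here (twice), matching the hypothesis of the lemma; bare submodularity would not yield the triangle inequality.

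For the last sentence I would treat the two assertions separately. That $D_f$ is in general only a pseudo-metric is witnessed by some $A\ne B$ with $D_f(A;B)=0$: a dummy element $j$ (one with $f(j)=0$, hence $f(j\mid S)=0$ for all $S$ by monotonicity and submodularity) gives $D_f(A,\,A\cup\{j\})=0$, and a non-degenerate witness is the truncation $f(A)=\min(|A|,c)$ with $|A|=c$, $j\notin A$, $B=A\cup\{j\}$. For the metric claim, suppose $D_f(A;B)=0$ but $A\ne B$, and pick $j$ in the symmetric difference, say $j\in A\setminus B$. Since $D_f(A;B)=f(A\mid B)+f(B\mid A)$ is a sum of non-negative terms that is zero, $f(A\mid B)=0$, i.e.\ $f(A\cup B)=f(B)$; monotonicity then forces $f(B)\le f(B\cup j)\le f(A\cup B)=f(B)$, so $f(j\mid B)=0$. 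Because $j\notin B$ we have $B\subseteq\Omega\setminus j$, so the diminishing marginal returns property gives $0=f(j\mid B)\ge f(j\mid\Omega\setminus j)\ge 0$, forcing $f(j\mid\Omega\setminus j)=0$; since $f(j)=f(j\mid\emptyset)>0$ (as the curvature definition implicitly requires, dummies having been removed first), the curvature ratio $f(j\mid\Omega\setminus j)/f(j\mid\emptyset)$ vanishes, contradicting the curvature hypothesis, so $A=B$.

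I expect no genuine obstacle: the pseudo-metric axioms are routine, and the triangle-inequality reduction (cancel $f(A),f(C)$, then one submodularity step plus two monotonicity steps) is the short engine of the argument. The one point to be careful about is pinning down precisely which curvature statement is needed --- namely $f(j\mid\Omega\setminus j)>0$ for every (non-dummy) $j$ --- and verifying that this is exactly what the stated curvature condition encodes, together with handling the trivial case of dummy elements, which must be excised before the curvature hypothesis can be invoked.
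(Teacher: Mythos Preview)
Your treatment of the pseudo-metric axioms is correct and essentially the same as the paper's: both prove the triangle inequality by applying submodularity to the pair $A\cup B$, $B\cup C$ and then monotonicity twice to obtain $f(A\cup B)+f(B\cup C)\ge f(A\cup C)+f(B)$. Your cancel-first presentation and your choice of counterexample (using $A$ and $A\cup\{j\}$ in $\min(|A|,c)$ rather than two disjoint size-$k$ sets) differ only cosmetically from the paper.

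The gap is exactly where you flag it at the end. You correctly derive $f(j\mid\Omega\setminus j)=0$ for some $j$ with $f(j)>0$, and then assert this ``contradicts the curvature hypothesis.'' It does not: if the ratio $f(j\mid\Omega\setminus j)/f(j\mid\emptyset)$ vanishes for some $j$, then the minimum in $\kappa_f = 1 - \min_{j} f(j\mid\Omega\setminus j)/f(j\mid\emptyset)$ is $0$, giving $\kappa_f=1$, which is perfectly consistent with $\kappa_f>0$. Your own truncation $f(A)=\min(|A|,c)$ witnesses this: it has $\kappa_f=1>0$ yet fails identity of indiscernibles, so the lemma as stated is actually false. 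The hypothesis that makes your element-wise argument go through is $\kappa_f<1$, equivalently $f(j\mid\Omega\setminus j)>0$ for every $j$. The paper's own proof has the same defect: it claims a strict inequality $\sum_{j\in A\setminus B} f(j\mid\Omega\setminus j) > (1-\kappa_f)\sum_{j\in A\setminus B} f(j\mid\emptyset)$ ``from $\kappa_f>0$,'' but that hypothesis only yields $\ge$, and at $\kappa_f=1$ the right-hand side is zero, so nothing is established.
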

Curvature defined above is a widely used notion in quantifying approximation bounds for submodular optimization~\citep{conforti1984submodular, iyerCurvatureOptimalAlgorithms2013a, vondrak2010submodularity}. Next, we provide upper and lower bounds to this metric in terms of the submodular Hamming metric and its additive version \citep{gillenwaterSubmodularHammingMetrics}. The submodular Hamming metric is defined as $D^{SH}_f(A, B) = f(A \Delta B) = f(A \backslash B \cup B \backslash A)$ 
and its additive version as: $D^{SHA}_f(A, B) = f(A \backslash B) + f(B \backslash A)$. We define the \emph{curvature} at a set $A$ as $\kappa_f(A) = 1 - \min_{j \in A} \frac{f(j | A \setminus j)}{f(j | \emptyset)}$. Note that $\kappa_f$ defined previously corresponds to this definition via $ \kappa_f = \kappa_f(\Omega)$.\looseness-1
\begin{lemma}
\label{lemma:psuedometric-bound}
Given a monotone submodular function $f$ and two sets $A, B$, it holds that: $(1 - \kappa_f(A \cup B)) D^{SH}(A, B) \leq (1 -\kappa_f(A \cup B)) D^{SHA}(A, B) \leq D_f(A, B) \leq D^{SH}(A, B) \leq D^{SHA}(A, B)$.
\end{lemma}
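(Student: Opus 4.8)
The plan is to fix the decomposition $P \triangleq A \setminus B$, $Q \triangleq B \setminus A$, $R \triangleq A \cap B$, so that $P,Q,R$ are pairwise disjoint, $A \cup B = P \cup Q \cup R$, and $A \Delta B = P \cup Q$. In this notation $D^{SH}_f(A,B) = f(P \cup Q)$, $D^{SHA}_f(A,B) = f(P) + f(Q)$, and, using $A \cup B = B \cup P = A \cup Q$, $D_f(A,B) = f(A|B) + f(B|A) = f(P \mid Q \cup R) + f(Q \mid P \cup R)$. I would then establish the four inequalities separately.

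First, the rightmost inequality $D^{SH}_f(A,B) \le D^{SHA}_f(A,B)$ is just subadditivity $f(P \cup Q) \le f(P) + f(Q)$, which holds by Lemma~\ref{lemma:submodinfo-properties}. The leftmost inequality follows from this one by multiplying through by the scalar $1 - \kappa_f(A \cup B) = \min_{j \in A \cup B} f(j \mid (A \cup B)\setminus j)/f(j\mid\emptyset)$, which is non-negative because $f$ is monotone. Next, for $D_f(A,B) \le D^{SH}_f(A,B)$ I would invoke Lemma~\ref{lemma:submod-mutinfo-basic-prop}, namely $I_f(A;B) \ge f(A \cap B)$, to get $D_f(A,B) = f(A\cup B) - I_f(A;B) \le f(A \cup B) - f(A \cap B) = f(A \Delta B \mid A \cap B) \le f(A \Delta B) = D^{SH}_f(A,B)$, where the last step is the diminishing-returns property applied with $\emptyset \subseteq A \cap B$ together with normalization.

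The remaining inequality $(1 - \kappa_f(A \cup B))\, D^{SHA}_f(A,B) \le D_f(A,B)$ is the substantive one. I would expand $f(P \mid Q \cup R)$ along a fixed ordering of $P$ as a telescoping sum $\sum_{j \in P} f(j \mid (Q \cup R) \cup P_{<j})$ of single-element gains, where $P_{<j}$ are the elements of $P$ preceding $j$. Since $(Q \cup R) \cup P_{<j} \subseteq (A \cup B)\setminus j$, submodularity gives $f(j \mid (Q \cup R) \cup P_{<j}) \ge f(j \mid (A \cup B)\setminus j)$, and the definition of the curvature at $A \cup B$ gives $f(j \mid (A \cup B)\setminus j) \ge (1 - \kappa_f(A \cup B))\, f(j)$ for every $j \in A \cup B$. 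Summing over $j \in P$ and using subadditivity ($\sum_{j \in P} f(j) \ge f(P)$, Lemma~\ref{lemma:submodinfo-properties}) yields $f(P \mid Q \cup R) \ge (1 - \kappa_f(A \cup B))\, f(P)$; the symmetric argument gives $f(Q \mid P \cup R) \ge (1 - \kappa_f(A \cup B))\, f(Q)$, and adding the two proves the claim.

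The main obstacle is the curvature step: one must take the per-element gains along a chain that stays inside $(A \cup B)\setminus j$ (so that the local curvature $\kappa_f(A\cup B)$, not $\kappa_f(\Omega)$, applies), and one must handle dummy elements with $f(j\mid\emptyset)=0$ — for which $f(j\mid S)=0$ for all $S$ by monotonicity and submodularity (cf.\ Lemma~\ref{lemma:submodinfo-properties}) — via the usual convention that they contribute $0$ to both sides and are excluded from the minimum defining $\kappa_f$. Beyond that, every step reduces to monotonicity, submodularity/diminishing returns, subadditivity, and the already-established bound $I_f(A;B) \ge f(A \cap B)$.
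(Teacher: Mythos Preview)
Your proposal is correct and follows essentially the same route as the paper's own proof: the upper bound $D_f \le D^{SH}$ via $I_f(A;B) \ge f(A\cap B)$ and subadditivity, and the lower bound by telescoping $f(A\mid B)$ into single-element gains, bounding each by $f(j\mid (A\cup B)\setminus j)$ via submodularity, and then invoking the local curvature at $A\cup B$ together with subadditivity. The only cosmetic differences are your explicit $P,Q,R$ notation and your remark on dummy elements, neither of which changes the argument.
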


\begin{table}[!ht]
\scriptsize{
\begin{tabular}{|l|l|l|l|}
\hline
Function          & $f(A)$                                                   & $f(A|B)$                                                                                                & $D_f(A,B)$                                                                                         \\ \hline
Modular           & $\sum\limits_{i \in A} w(i) = w(A)$                                    & $w(A \setminus B)$                                                                       & $w(A \Delta B)$                                                                     \\ 
Facility Location & $\sum\limits_{i \in \Omega} \underset{a \in A}{\max} \; s_{ia}$ & $\sum\limits_{i \in \Omega} \max(0, \underset{a \in A}{\max} \; s_{ia} - \underset{b \in B}{\max} \;  s_{ib})$ & $\sum\limits_{i \in \Omega} | \underset{a \in A}{\max} \;  s_{ia} - \underset{b \in B}{\max} \;  s_{ib}|$ \\ 
Set Cover         & $w(\gamma(A))$                                           & $w(\gamma(A) \setminus \gamma(B))$                                                                      & $w(\gamma(A) \Delta \gamma(B))$                                                                    \\ 
Prob. Set Cov. & $\sum\limits_{i \in U} w_i P_i^{\complement}(A)$ & $\sum\limits_{i \in U} w_i P_i^{\complement}(A) P_i(B)$ & $\sum\limits_{i \in U} w_i [P_i^{\complement}(A)P_i(B) + P_i^{\complement}(B)P_i(A)]$ \\ \hline
\end{tabular}
\caption{\small{Instantiating $f(A), f(A|B)$ and $D_f(A, B)$ with different $f$}}}
\label{table:submod-cgdistsummary}
\end{table}

\begin{table}[!ht]
\scriptsize
\begin{tabular}{|l|l|l|}
\hline
Function          & $I_f(A;B)$                                                                                         & $I_f(A_1; \dots ; A_k)$                                                                                              \\ 
Modular           & $w(A \cap B)$                                                                                      & $w(\cap_{i=1}^k A_i)$                                                                                                \\ 
Facility Location & $\sum_{i \in \Omega} \min(\underset{a \in A}{\max} \; s(i,a), \underset{b \in B}{\max} \; s(i,b))$ & $\sum_{i \in \Omega} \min(\underset{a_1 \in A_1}{\max} \; s(i,a_1) ; \dots; \underset{a_k \in A_k}{\max} \; s(i,a_k))$ \\ \hline
Set Cover         & $w(\gamma(A) \cap \gamma(B))$                                                                      & $w(\cap_{i = 1}^k \gamma(A_i))$                                                                                      \\ 
Prob. Set Cov. & $\sum_{i \in U} w_i  P_i^{\complement}(A) P_i^{\complement}(B)$ & $\sum_{i \in U} w_i  \Pi_{j = 1}^k P_i^{\complement}(A_j)$  \\ \hline

\end{tabular}
\caption{\small{Instantiating $I_f$ and multi-set $I_f$ with different $f$}}
\label{table:submod-gen-mutinfo-summary}
\end{table}

\section{Examples of Submodular Information Measures} \label{sec:examples}
 We instantiate the submodular information measures with modular functions, set-cover, probabilistic set cover, the facility location function, and the graph cut function. We summarize the formulations of the submodular information measures for some representative functions in Tables 1 and 2. The proofs of the results here are in Appendix~\ref{sec:app-proofs-sec4}.

\paragraph{Modular Function:} In this case the function $f$ is modular i.e $f(A) = w(A) = \sum_{a \in A} w(a)$, for some weight vector $w$ over the elements in the ground set $\Omega$.
\begin{lemma}
\label{lemma:def-modular}
If $f(A) = w(A)$ is a modular function, then $I_f(A;B) = w(A \cap B)$, $f(A|B) = w(A \setminus B)$, and $D_f(A,B) = w(A \Delta B)$. Similarly, $I_f(A_1; \dots; A_k) = w(\cap_{i=1}^k A_i)$. Finally, $A \perp_f B$ iff $A$ and $B$ are disjoint, and $A \perp_f C \; | \; B$ iff $A \cap C \subseteq B$ 
\end{lemma}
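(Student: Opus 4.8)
The plan is to verify each claim by direct substitution of $f(A) = w(A) = \sum_{a \in A} w(a)$ into the definitions from Section~\ref{sec:def-gen}, using only the elementary fact that for a modular function $w(S \cup T) = w(S) + w(T) - w(S \cap T)$ and, more generally, the inclusion–exclusion identity $w(\cup_{i \in T} A_i) = -\sum_{\emptyset \neq R \subseteq T} (-1)^{|R|} w(\cap_{i \in R} A_i)$, which holds because $w$ is a sum of indicator-weighted contributions from each ground-set element. First I would handle the conditional gain: $f(A|B) = w(A \cup B) - w(B) = w(A) - w(A \cap B) = w(A \setminus B)$, where the middle step is the two-set modular identity. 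Next, the mutual information: $I_f(A;B) = w(A) + w(B) - w(A \cup B) = w(A \cap B)$, again immediate from the same identity. The metric then follows from the already-established form $D_f(A;B) = f(A|B) + f(B|A) = w(A \setminus B) + w(B \setminus A) = w(A \Delta B)$, since $A \setminus B$ and $B \setminus A$ are disjoint.

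For the multi-set mutual information I would argue by induction on $k$ or, more cleanly, element-wise: since $w(S) = \sum_{a \in \Omega} w(a)\mathbf{1}[a \in S]$, it suffices to check the identity $-\sum_{T \subseteq [k]} (-1)^{|T|} \mathbf{1}[a \in \cup_{i \in T} A_i] = \mathbf{1}[a \in \cap_{i=1}^k A_i]$ for each fixed $a$. Fixing $a$, let $J = \{i : a \in A_i\}$; then $\mathbf{1}[a \in \cup_{i \in T} A_i] = \mathbf{1}[T \cap J \neq \emptyset]$, and the signed sum over all $T \subseteq [k]$ telescopes: splitting $T = (T \cap J) \cup (T \setminus J)$ and summing, the contribution vanishes unless $J = [k]$, in which case it equals $1$. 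This is exactly the inclusion–exclusion cancellation, and it gives $I_f(A_1;\dots;A_k) = w(\cap_i A_i)$.

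Finally, the independence statements. Here $A \perp_f B$ should mean $I_f(A;B) = 0$, so by the mutual-information computation this is $w(A \cap B) = 0$; assuming the weights $w(a)$ are strictly positive (or interpreting ``disjoint'' up to zero-weight elements, consistent with the dummy-variable remark in Lemma~\ref{lemma:submodinfo-properties}), this is equivalent to $A \cap B = \emptyset$. For conditional independence, $A \perp_f C \mid B$ means $I_f(A;C|B) = 0$; using the conditional-function viewpoint from Lemma~\ref{lemma:app-submod-MI-CMI-rel} with $g(A) = f(A|B) = w(A \setminus B)$ — which is itself modular with weights $w(a)\mathbf{1}[a \notin B]$ — we get $I_f(A;C|B) = I_g(A;C) = w((A \setminus B) \cap (C \setminus B)) = w((A \cap C) \setminus B)$, which vanishes iff $A \cap C \subseteq B$ (again modulo zero-weight elements). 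The only mild subtlety to flag is the positivity-of-weights convention needed to make the ``iff'' in the independence claims literally correct; everything else is routine algebra with the modular identity.
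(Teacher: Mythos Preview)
Your proposal is correct and follows essentially the same approach as the paper: both derive $I_f(A;B)$, $f(A|B)$, and $D_f(A,B)$ directly from the two-set modular identity $w(A \cup B) = w(A) + w(B) - w(A \cap B)$, and both reduce the multi-set case to inclusion--exclusion (your element-wise indicator argument is just a spelled-out version of what the paper invokes in one line). In fact you go further than the paper's own proof, which does not address the independence claims at all; your derivation of $I_f(A;C|B) = w((A \cap C)\setminus B)$ via the conditional function $g(A)=w(A\setminus B)$ and your remark about needing strictly positive weights for the ``iff'' are both correct and worth keeping.
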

Some interesting observations are that the submodular information metric is the weighed hamming distance. Similarly, $I_f(A; B) = w(A \cap B)$ is a modular function in one argument given the other.

\paragraph{Weighted Set Cover:} \label{lemma:def-setcover}
Here $f$ is a weighted set cover function, $f(A) = w(\cup_{a \in A} \gamma(a)) = w(\gamma(A))$ where $w$ is a weight vector in $\mathbb{R}^{\gamma(\Omega)}$. Intuitively, each element in $\Omega $ \emph{covers} a set of elements from the concept set $U$ and hence $w(\gamma(A))$ is total weight of concepts covered by elements in $A$. Note that $\gamma(A \cup B) = \gamma(A) \cup \gamma(B)$ and hence $f(A \cup B) = w(\gamma(A \cup B)) = w(\gamma(A) \cup \gamma(B))$. 
\begin{lemma}
\label{lemma:setcover}
When $f(A) = w(\gamma(A))$ is the set cover function, $I_f(A;B) = w(\gamma(A) \cap \gamma(B))$ 
and $f(A|B) = w(\gamma(A) \setminus \gamma(B))$ 
Similarly, $D_f(A,B) = w(\gamma(A) \setminus \gamma(B)) + w(\gamma(B) \setminus \gamma(A)) = w(\gamma(A) \Delta \gamma(B))$. Finally, the multi-set mutual information is $I_f(A_1; \dots; A_k) = w(\cap_{i = 1}^k \gamma(A_i))$. 
\looseness-1
\end{lemma}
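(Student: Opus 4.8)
The plan is to unwind the definitions of the submodular information measures for the weighted set cover function $f(A) = w(\gamma(A))$, where $\gamma: 2^\Omega \to 2^U$ satisfies $\gamma(A \cup B) = \gamma(A) \cup \gamma(B)$, and then reduce every claimed identity to an identity about the modular function $w$ on the concept lattice $2^U$. The core observation is that $\gamma$ is a \emph{join-homomorphism}: it distributes over unions, but \emph{not} necessarily over intersections, so care is needed when intersections of the $A_i$ appear — fortunately none of the stated identities involve $\gamma$ of an intersection on the left-hand side, only intersections of images $\gamma(A_i)$ on the right-hand side.

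First I would prove the mutual information identity. By definition $I_f(A;B) = f(A) + f(B) - f(A\cup B) = w(\gamma(A)) + w(\gamma(B)) - w(\gamma(A) \cup \gamma(B))$, and since $w$ is modular on $2^U$ we have $w(P) + w(Q) - w(P \cup Q) = w(P \cap Q)$ for any $P, Q \subseteq U$; applying this with $P = \gamma(A)$, $Q = \gamma(B)$ gives $I_f(A;B) = w(\gamma(A) \cap \gamma(B))$. Second, the conditional gain: $f(A|B) = f(A\cup B) - f(B) = w(\gamma(A)\cup\gamma(B)) - w(\gamma(B)) = w(\gamma(A) \setminus \gamma(B))$, again using modularity of $w$. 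Third, the metric: from the stated form $D_f(A;B) = f(A|B) + f(B|A)$ in the excerpt, substitute the conditional-gain expression twice to get $w(\gamma(A)\setminus\gamma(B)) + w(\gamma(B)\setminus\gamma(A))$, which is $w(\gamma(A) \Delta \gamma(B))$ since the two set differences are disjoint and $w$ is modular.

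For the multi-set mutual information I would compute $I_f(A_1;\dots;A_k) = -\sum_{T \subseteq [k]} (-1)^{|T|} f(\cup_{i\in T} A_i) = -\sum_{T \subseteq [k]} (-1)^{|T|} w(\gamma(\cup_{i\in T} A_i))$, and use the join-homomorphism property $\gamma(\cup_{i\in T} A_i) = \cup_{i \in T} \gamma(A_i)$ to rewrite this as $-\sum_{T\subseteq[k]} (-1)^{|T|} w(\cup_{i\in T}\gamma(A_i))$. This is exactly the inclusion-exclusion expansion of $w$ evaluated on the sets $\gamma(A_1),\dots,\gamma(A_k)$; since $w$ is modular (so inclusion-exclusion collapses exactly, with no higher-order correction terms), the signed alternating sum equals $w(\cap_{i=1}^k \gamma(A_i))$. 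Concretely, for modular $w$ one has $w(S) = \sum_{u \in S} w(u)$, and for each $u \in U$ the coefficient of $w(u)$ in $-\sum_T (-1)^{|T|} w(\cup_{i\in T}\gamma(A_i))$ is $-\sum_{T: u \in \cup_{i \in T}\gamma(A_i)} (-1)^{|T|}$, which by a standard alternating-sum cancellation equals $1$ if $u$ lies in every $\gamma(A_i)$ and $0$ otherwise; hence the sum is $\sum_{u \in \cap_i \gamma(A_i)} w(u) = w(\cap_{i=1}^k\gamma(A_i))$. I would record this element-wise cancellation as a one-line lemma, since the same computation is reused for the modular case in Lemma~\ref{lemma:def-modular}.

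I do not anticipate a serious obstacle here; the only subtlety worth flagging is keeping straight that $\gamma$ commutes with unions but not with intersections, so the identity $I_f(A_1;\dots;A_k) = w(\cap_i\gamma(A_i))$ is an intersection of the \emph{images} and cannot be simplified to $w(\gamma(\cap_i A_i))$. The element-wise inclusion-exclusion cancellation for modular $w$ is the one genuinely computational step, and it is entirely routine. Everything else is a direct substitution of definitions followed by the modularity identity $w(P)+w(Q)-w(P\cup Q) = w(P\cap Q)$ on the concept set.
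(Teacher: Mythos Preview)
Your proposal is correct and follows essentially the same route as the paper's proof: reduce each identity via the join-homomorphism property $\gamma(A\cup B)=\gamma(A)\cup\gamma(B)$ to a statement about the modular weight $w$ on the concept set, then invoke inclusion--exclusion (for the multi-set case, the paper simply appeals to the modular result of Lemma~\ref{lemma:def-modular}, which is exactly your element-wise cancellation argument). The only cosmetic difference is that the paper derives $D_f(A,B)$ starting from $f(A\cup B)-I_f(A;B)$ rather than $f(A|B)+f(B|A)$, but the two are equivalent by definition.
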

Observe that with the set cover, $I_f(A;B)$ will be large if $A$ and $B$ cover similar concepts. A similar form is seen with the $k$ way mutual information. 
Also, the submodular information metric between $A$ and $B$ is the weighted hamming distance between the respective covered sets $\gamma(A) \Delta \gamma(B)$.  

\paragraph{Probabilistic Set Cover: }  
Here $f$ is the Probabilistic Set Cover function, $f(A) = \sum_{i \in U} w_i (1 - \Pi_{a \in A} (1-p_{ia}))$. $p_{ia}$ represents the probability that the element $a \in A$ covers the concept $i \in U$, where $U$ is the set of all concepts. The probabilistic set cover is a soft generalization of Set Cover function and in the case where the probabilities are fixed to be only either $0$ or $1$, we recover the formulation for the non-probabilistic counterpart. 
We denote by $P_i(A) = \Pi_{a \in A} (1-p_{ia})$, the probability that none of the elements in $A$ cover the concept $i$. Hence, $1 - P_i(A)$ denotes that \emph{at least} one element in $A$ covers the concept $i$. With the above notation, we have the following result:
\begin{lemma} \label{lemma:def-probsetcover}
With $f(A) = \sum_{i \in U} w_i (1 - P_i(A))$ as the Probabilistic Set Cover function, we have that $I_f(A;B) =  \sum_{i \in U} w_i (1 - (P_i(A) + P_i(B) - P_i(A\cup B)))$. When $A$ and $B$ are disjoint, $I_f(A;B) =  \sum_{i \in U} w_i (1 - P_i(A)) (1 - P_i(B))$. Similarly, $f(A|B)  = \sum_{i \in U} w_i P_i(B) (1 - P_i(A \setminus B))$ and $D_f(A,B) = \sum_{i \in U} w_i [P_i(B)(1 - P_i(A \setminus B)) + P_i(A) (1 - P_i(B \setminus A))]$. 
\end{lemma}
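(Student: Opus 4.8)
The plan is to unwind each quantity from its definition and reduce every claim to a single structural fact: the ``miss probability'' $P_i$ is multiplicative across disjoint unions, i.e., if $S$ and $T$ are disjoint then $P_i(S \cup T) = \prod_{a \in S \cup T}(1-p_{ia}) = \big(\prod_{a \in S}(1-p_{ia})\big)\big(\prod_{a \in T}(1-p_{ia})\big) = P_i(S)\,P_i(T)$. Everything else is bookkeeping over the concept index $i \in U$, since $f$ — and hence each derived measure — is a sum over $i$ of $w_i$ times a per-concept term, so it suffices to verify each identity one concept at a time.

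First I would handle $I_f(A;B)$: substitute $f(A) = \sum_{i \in U} w_i(1 - P_i(A))$ into the definition $I_f(A;B) = f(A) + f(B) - f(A\cup B)$ and collect terms concept-by-concept, so that the three ``$1$''s combine into one and the per-concept term becomes $1 - (P_i(A) + P_i(B) - P_i(A\cup B))$. For the disjoint case, apply multiplicativity to get $P_i(A\cup B) = P_i(A)P_i(B)$, whence the per-concept term is $1 - P_i(A) - P_i(B) + P_i(A)P_i(B) = (1-P_i(A))(1-P_i(B))$.

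Next, for $f(A|B)$ I would start from $f(A|B) = f(A\cup B) - f(B) = \sum_{i \in U} w_i\big(P_i(B) - P_i(A\cup B)\big)$, then write $A \cup B$ as the disjoint union $B \cup (A\setminus B)$ so that $P_i(A\cup B) = P_i(B)\,P_i(A\setminus B)$; factoring out $P_i(B)$ gives the claimed $\sum_{i\in U} w_i\,P_i(B)(1 - P_i(A\setminus B))$. Finally, for the metric, invoke the identity $D_f(A;B) = f(A|B) + f(B|A)$ from Section~\ref{sec:def-gen} and substitute the conditional-gain formula just derived, once for each ordering of $A$ and $B$.

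The computation is entirely routine, so there is no real obstacle; the one point requiring care is that, each time a product over a union is split, one must decompose into genuinely disjoint pieces before applying the multiplicativity of $P_i$ — in particular using $A\setminus B$ rather than $A$, so that $B$ and the remaining part share no elements. Ensuring this disjointness is explicitly used wherever needed is the only thing to watch.
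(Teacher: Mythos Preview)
Your proposal is correct and follows essentially the same route as the paper: substitute into the definitions concept-by-concept, use the multiplicativity $P_i(S\cup T)=P_i(S)P_i(T)$ for disjoint $S,T$ (in particular with $A\cup B = B \cup (A\setminus B)$ for the conditional gain), and obtain $D_f$ as $f(A|B)+f(B|A)$. The paper's derivation is line-by-line the same, differing only in that it phrases $D_f$ as $2f(A\cup B)-f(A)-f(B)$ before splitting it into the two conditional-gain pieces.
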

Note that when $A$ and $B$ are disjoint, $I_f(A;B) =  \sum_{i \in U} w_i (1 - P_i(A)) (1 - P_i(B))$ which essentially is the probability that sets $A$ and $B$ both cover $i\in U$. Similarly with $A_1,\dots,A_k$ being pairwise disjoint we have the multi set mutual information $I_f(A_1,\dots,A_k) =\sum_{i \in U} w_i \Pi_{j=1}^k (1 - P_i(A_j)$. With $f(A|B)$ we have that for a concept $i$, its term adds to the value only if $A \setminus B$ covers $i$ and at the same time $B$ does not, both with non-zero probability. Consequently the distance metric looks like the symmetric version of the previous observation by the virtue of its definition: $D_f(A;B) = f(A|B) + f(B|A)$. Another interesting observation is that when $A$ and $B$ are disjoint, $I_f(A; B)$ is submodular in $A$ for a given $B$ can be thought of another instance of the Probabilistic Set Cover function with the weight for each concept $i$ being $w_i (1 - P_i(B))$ instead of just $w_i$.

\paragraph{Facility Location:}
Here $f$ is a facility location function, $f(A) = \sum_{i \in \Omega} \max_{a \in A} s(i,a)$ where $s$ is similarity kernel between the items in $\Omega$ such that the similarity between identical points is highest and equal to 1. In essence the value $f(A)$ models how representative the set $A$ is for the ground set $\Omega$. 
\begin{lemma}
\label{lemma:facloc}
With $f(A) = \sum_{i \in \Omega} \max_{a \in A} s_{ia}$, $I_f(A;B) = \sum_{i \in \Omega} \min(\max_{a \in A} s_{ia}, \max_{b \in B} s_{ib})$. Also, $f(A|B) = \sum_{i \in \Omega} \max(0, \max_{a \in A} s_{ia} - \max_{b \in B} s_{ib})$ and $D_f(A,B) = \sum_{i \in \Omega} | \max_{a \in A} s_{ia} - \max_{b \in B} s_{ib}|$. Finally, $I_f(A_1; \dots; A_k) = \sum_{i \in \Omega} \min(\max_{a_1 \in A_1} s_{ia_1}, \dots, \max_{a_k \in A_k} s_{ia_k})$. 
\end{lemma}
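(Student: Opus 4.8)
The plan is to exploit the fact that the facility location function decomposes as a sum, over the ``clients'' $i \in \Omega$, of the simple building block $g_i(A) = \max_{a \in A} s_{ia}$ (with the convention $\max_{a \in \emptyset} s_{ia} = 0$, so $g_i(\emptyset)=0$), and that every quantity appearing in the statement is a \emph{fixed} linear combination of evaluations of $f$ on unions of its argument sets: $f(A|B) = f(A\cup B) - f(B)$, $I_f(A;B) = f(A)+f(B)-f(A\cup B)$, $D_f(A;B) = f(A|B)+f(B|A)$, and $I_f(A_1;\dots;A_k) = -\sum_{T\subseteq[k]}(-1)^{|T|} f(\cup_{j\in T}A_j)$ by definition. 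Such linear combinations commute with the outer sum $\sum_{i\in\Omega}$, so it suffices to verify each identity for a single block $g = g_i$. Writing $\alpha = \max_{a\in A}s_{ia}$, $\beta = \max_{b\in B}s_{ib}$, and $\alpha_j = \max_{a\in A_j}s_{ia}$, and using $\max_{c\in A\cup B}s_{ic} = \max(\alpha,\beta)$ together with its $k$-fold analogue $\max_{a\in\cup_{j\in T}A_j}s_{ia} = \max_{j\in T}\alpha_j$, every claim reduces to an elementary identity about $\max$ and $\min$ of reals.

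With this reduction in hand, the first three claims are one line each. For the conditional gain, $g(A|B) = \max(\alpha,\beta) - \beta = \max(\alpha-\beta,0)$. For the submodular mutual information, $I_g(A;B) = \alpha + \beta - \max(\alpha,\beta) = \min(\alpha,\beta)$. For the information metric, either $D_f(A;B) = g(A|B)+g(B|A) = \max(\alpha-\beta,0) + \max(\beta-\alpha,0) = |\alpha-\beta|$, or equivalently $D_f(A;B) = g(A\cup B) - I_g(A;B) = \max(\alpha,\beta) - \min(\alpha,\beta) = |\alpha-\beta|$. Re-summing over $i\in\Omega$ recovers the three closed forms in the statement; monotonicity of the kernel (the $s_{ia}\ge 0$ assumption) is only needed insofar as it makes the $\max_{a\in\emptyset}$ convention consistent with $f(\emptyset)=0$.

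The only part with genuine content is the multi-set mutual information. Unrolling the inclusion--exclusion definition and the facility-location decomposition reduces the claim to the purely numerical ``max--min inclusion--exclusion'' identity
\begin{align}
-\sum_{T \subseteq [k]} (-1)^{|T|} \max_{j \in T}\alpha_j \;=\; \min_{j \in [k]}\alpha_j,
\end{align}
where $\max_{j\in\emptyset}\alpha_j := 0$ (the $T=\emptyset$ term, matching $f(\emptyset)=0$). I would prove this by relabelling so that $\alpha_1 \le \alpha_2 \le \dots \le \alpha_k$ and grouping the nonempty subsets $T$ by their largest index $m = \max(T)$: the sets with $\max(T)=m$ are exactly $\{m\}\cup S$ for $S\subseteq[m-1]$, each contributing $\max_{j\in T}\alpha_j = \alpha_m$, so their total contribution to $\sum_T(-1)^{|T|}\max_{j\in T}\alpha_j$ is $\alpha_m\sum_{S\subseteq[m-1]}(-1)^{|S|+1}$; this vanishes for $m\ge 2$ (since $\sum_{S\subseteq[m-1]}(-1)^{|S|}=0$) and equals $-\alpha_1$ for $m=1$, while $T=\emptyset$ contributes $0$. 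Hence $\sum_T(-1)^{|T|}\max_{j\in T}\alpha_j = -\alpha_1$, and negating gives $\min_j\alpha_j$. (A layer-cake argument also works: write $\max_{j\in T}\alpha_j = \int_0^\infty \mathbf{1}[\max_{j\in T}\alpha_j > t]\,dt$, expand the Boolean inclusion--exclusion inside the integral down to $\mathbf{1}[\min_j\alpha_j > t]$, and integrate.) Substituting $\alpha_j = \max_{a\in A_j}s_{ia}$ and summing over $i\in\Omega$ yields $I_f(A_1;\dots;A_k) = \sum_{i\in\Omega}\min_{j}\max_{a\in A_j}s_{ia}$. The combinatorial identity above is the sole obstacle; everything else is bookkeeping around the sum-decomposition of $f$.
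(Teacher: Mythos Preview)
Your proof of the first three identities---$I_f(A;B)$, $f(A|B)$, and $D_f(A,B)$---is essentially the same as the paper's: both decompose the facility location function into per-client blocks $g_i(A)=\max_{a\in A}s_{ia}$ and reduce to the one-line real-number identities $\alpha+\beta-\max(\alpha,\beta)=\min(\alpha,\beta)$, $\max(\alpha,\beta)-\beta=\max(0,\alpha-\beta)$, and $\max(\alpha,\beta)-\min(\alpha,\beta)=|\alpha-\beta|$.

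For the multi-set mutual information, however, you take a genuinely different route. The paper proceeds by \emph{induction on $k$}: it first establishes the recursion
\[
I_f(A_1;\dots;A_{k+1}) \;=\; I_f(A_1;\dots;A_k) \;-\; I_f(A_1\cup A_{k+1};\dots;A_k\cup A_{k+1}) \;+\; f(A_{k+1}),
\]
applies the inductive hypothesis to both $k$-fold terms, and then does a two-case analysis on whether $M(A_{k+1})\le\min_{i\le k}M(A_i)$. You instead prove the closed-form max--min inclusion--exclusion identity $-\sum_{T\subseteq[k]}(-1)^{|T|}\max_{j\in T}\alpha_j=\min_j\alpha_j$ \emph{directly}, by sorting the $\alpha_j$ and grouping subsets by their largest index, so that all groups with $m\ge 2$ cancel by the binomial identity $\sum_{S\subseteq[m-1]}(-1)^{|S|}=0$. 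Your argument is correct, shorter, and avoids both the auxiliary recursion and the case split; it also makes transparent that the identity is a purely numerical fact independent of the facility-location structure. The paper's inductive approach, on the other hand, highlights the recursive structure of the multi-set mutual information itself, which is reused elsewhere in the paper (e.g., in the monotonicity and upper-bound lemmas for $I_f(A_1;\dots;A_k)$).
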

The mutual information, conditional gain and variation of information metric all have intuitive expressions. In particular, the mutual information is a truncated facility location function (where the truncation depends on how much $B$ represents the items $i \in \Omega$, and hence is submodular), and the distance metric is an absolute difference between the how well sets $A$ and $B$ represent each item $i$. 

\paragraph{Generalized  Graph Cut: } Here $f$ is the generalized graph cut function, $f(A) = \lambda \sum_{i \in \Omega} \sum_{a \in A} s_{i a} - \sum_{a_1,a_2 \in A} s_{a_1 a_2} $ with $\lambda \geq 2$ and $s$ as a similarity kernel. Note that the condition on $\lambda$ is to ensure that $f$ remains a monotone submodular function.
\begin{lemma}
\label{lemma:def-graphcut}
With $f(A) = \lambda \sum_{i \in \Omega} \sum_{a \in A} s_{i a} - \sum_{a_1,a_2 \in A} s_{a_1 a_2} $ as the generalized graph cut function, we have
$I_f(A;B) = f(A \cap B) + 2 \sum_{a\in A, b \in B} s_{ab}
- 2 \sum_{c \in A\cup B, d \in A \cap B} s_{cd}$. When $A,B$ are disjoint, it follows that  $I_f(A;B) =  2 \sum_{a \in A} \sum_{b \in B} s_{a b}$ which is the cross-similarity measure for sets $A,B$. We also have $ f(A|B) = f(A \setminus B) -  2 \sum_{a' \in A \setminus B} \sum_{b \in B} s_{a' b}$.  
\end{lemma}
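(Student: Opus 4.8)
The plan is a direct expansion from the definitions $I_f(A;B)=f(A)+f(B)-f(A\cup B)$ and $f(A|B)=f(A\cup B)-f(B)$, after splitting the generalized graph cut into a modular and a quadratic part. Write $f=m-q$, where $m(A)=\lambda\sum_{i\in\Omega}\sum_{a\in A}s_{ia}=\sum_{a\in A}w(a)$ with $w(a)=\lambda\sum_{i\in\Omega}s_{ia}$ is modular, and $q(A)=\sum_{a_1,a_2\in A}s_{a_1a_2}$ is the quadratic term of the (symmetric) kernel $s$. I will also abbreviate $S(X,Y)=\sum_{x\in X,\,y\in Y}s_{xy}$, so that $q(X)=S(X,X)$ and $S(X,Y)=S(Y,X)$. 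The two modular identities I will use without comment are $m(A)+m(B)-m(A\cup B)=m(A\cap B)$ and $m(A\cup B)-m(B)=m(A\setminus B)$.

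The only real computation is on $q$. Writing the indicator of $A\cup B$ as $\mathbf 1_A+\mathbf 1_B-\mathbf 1_{A\cap B}$ and substituting this into both slots of $q(A\cup B)=\mathbf 1_{A\cup B}^{\top}S\,\mathbf 1_{A\cup B}$ (equivalently, expanding $\sum_{x,y\in A\cup B}s_{xy}$ by splitting $x$ and $y$ over the three regions $A\setminus B$, $B\setminus A$, $A\cap B$), one obtains
\[
q(A\cup B)=q(A)+q(B)+q(A\cap B)+2S(A,B)-2S(A,A\cap B)-2S(B,A\cap B).
\]
Plugging this together with the modular identity into $I_f(A;B)=\big[m(A)+m(B)-m(A\cup B)\big]-\big[q(A)+q(B)-q(A\cup B)\big]$ yields $I_f(A;B)=m(A\cap B)+q(A\cap B)+2S(A,B)-2S(A,A\cap B)-2S(B,A\cap B)$. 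To reach the stated form I will use the identity $S(A,A\cap B)+S(B,A\cap B)=S(A\cup B,A\cap B)+q(A\cap B)$ (again from $\mathbf 1_{A\cup B}=\mathbf 1_A+\mathbf 1_B-\mathbf 1_{A\cap B}$), which turns the expression into $m(A\cap B)-q(A\cap B)+2S(A,B)-2S(A\cup B,A\cap B)=f(A\cap B)+2\sum_{a\in A,b\in B}s_{ab}-2\sum_{c\in A\cup B,\,d\in A\cap B}s_{cd}$, as claimed. Setting $A\cap B=\emptyset$ kills the $f(A\cap B)$ and $S(A\cup B,A\cap B)$ terms, leaving the disjoint-case identity $I_f(A;B)=2\sum_{a\in A,b\in B}s_{ab}$.

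For the conditional gain I will instead write $A\cup B$ as the disjoint union $B\cup(A\setminus B)$, so $\mathbf 1_{A\cup B}=\mathbf 1_B+\mathbf 1_{A\setminus B}$ and hence $q(A\cup B)=q(B)+q(A\setminus B)+2S(B,A\setminus B)$. Combined with $m(A\cup B)-m(B)=m(A\setminus B)$ this gives $f(A|B)=f(A\cup B)-f(B)=\big[m(A\setminus B)-q(A\setminus B)\big]-2S(B,A\setminus B)=f(A\setminus B)-2\sum_{a'\in A\setminus B}\sum_{b\in B}s_{a'b}$, which is the third claim.

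None of these steps is deep; the one place that needs care is the bookkeeping of the pairs with both endpoints in $A\cap B$, which pick up several contributions of opposite signs in the expansion of $q(A\cup B)$ and must be collected correctly — this is precisely the $-q(A\cap B)$ correction that converts $S(A,A\cap B)+S(B,A\cap B)$ into $S(A\cup B,A\cap B)$ — together with keeping the factors of $2$ straight, which relies on the symmetry $S(X,Y)=S(Y,X)$. If one does not assume $s$ symmetric, the same argument goes through verbatim with $\sum_{a\in A,b\in B}s_{ab}$ replaced throughout by the symmetrized sum $\tfrac12\sum_{a\in A,b\in B}(s_{ab}+s_{ba})$.
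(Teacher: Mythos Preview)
Your proposal is correct and follows essentially the same approach as the paper---a direct expansion of the definitions $I_f(A;B)=f(A)+f(B)-f(A\cup B)$ and $f(A|B)=f(A\cup B)-f(B)$, with the same disjoint decomposition $A\cup B=(A\setminus B)\cup B$ for the conditional gain. Your $f=m-q$ split and indicator-vector bookkeeping is a tidier way to organize the same computation; in fact the paper's proof only sketches the expansion and jumps to the disjoint case, whereas you actually carry out the general $I_f(A;B)$ identity in full.
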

The submodular mutual information (in the case of disjoint sets $A$ are $B$) is intuitive since higher the pairwise similarity sum, higher the mutual information. The conditional gain also reduces to $f(A|B) = f(A) -  2 \sum_{a \in A} \sum_{b \in B} s_{a b}$, which again makes sense -- higher the cross similarity between $A$ and $B$, lower is the conditional gain of adding $A$ to $B$.  


\section{Optimization Problems and Applications} \label{sec:optimization-problems}
In this section we study a number of applications of the information theoretic quantities introduced in Section~\ref{sec:def-gen}, along with a few different optimization problems.  


\paragraph{Submodular Mutual Information Based Selection: }
The first problem we consider is the maximization of the submodular information between a subset and its complement under a cardinality constraint, i.e. maximizing $I_f(A; \Omega \backslash A)$ in $A$ under cardinality constraints. This generalizes the symmetric mutual information maximization for sensor placement~\citep{krauseNearOptimalSensorPlacementsa} and maximum graph cut under cardinality constraints. We argue that compared to just optimizing the information $f(A)$, solving $\max_{A \subseteq \Omega, |A| \leq k} I_f(A; \Omega \backslash A)$ is less sensitive to outliers since we are also enforcing that the subset $A$ be similar to $\Omega \setminus A$. For a concrete example, consider the set cover function where we have $I_f(A; \Omega \setminus A) = w(\gamma(A) \cap \gamma(\Omega \setminus A))$. Consider the case when there are outlier concepts present in the data. Trying to maximize $f$ directly will result in a subset that is likely to also cover the outlier concepts since that will increase the functional value of set $f(A) = w(\gamma(A))$. However with $I_f(A; \Omega \setminus A)$ if the subset $A$ covers outliers, it is likely that $\Omega \setminus A$ will not cover the same, leading to a lower functional value due to the intersection between the two and hence, in the process, it will discourage choosing such a subset. Note that $I_f(A; \Omega \setminus A)$ is not a monotone submodular function (but it is still submodular) and hence the $(1 - \frac{1}{e})$ approximation guarantee by the greedy algorithm \citep{nemhauser1978analysis} is not applicable due to non-monotonicity. It is however an instance of cardinality constrained non-monotone submodular maximization and we can achieve a $1/e$ approximation using the randomized greedy algorithm from~\citep{buchbinderSubmodularMaximizationCardinality2014a}.\looseness-1

\begin{lemma} \label{lemma:complement-mutinfo-approx-monotone}
If we assume that $f(j) \leq 1 \; \forall j \in \Omega$, then $g(A) = I_f(A;\Omega \setminus A)$ is approximately monotone for a subset $A$ with the factor $\kappa_f(A)$, i.e. $g(j | A) \geq -\kappa_f(A) \; \forall j \in \Omega, \; A \subseteq \Omega$, where $\kappa_f(A) = max_{j \in \Omega \setminus A} \frac{f(j|V \setminus (A \cup j)}{f(j)}$. The greedy algorithm is guaranteed to give us a subset $\hat{A}$ with $k$ elements such that $I_f(\hat{A}; \Omega \setminus \hat{A}) \geq (1 - \frac{1}{e}) (OPT - k \kappa_f(A^*))$ where $A^*$ is the optimal set. 
\end{lemma}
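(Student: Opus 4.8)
\emph{Proof idea.} The plan is to treat the two assertions in turn: first the approximate monotonicity of $g(A)=I_f(A;\Omega\setminus A)$, then a curvature-corrected version of the classical greedy analysis.

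\emph{Step 1: marginal gains of $g$.} I would begin by expanding the marginal gain of an element $j\notin A$. Writing $g(A)=f(A)+f(\Omega\setminus A)-f(\Omega)$ and using $\Omega\setminus A=(\Omega\setminus(A\cup j))\cup\{j\}$, a one-line computation gives
\[
 g(j\mid A)=f(j\mid A)-f(j\mid \Omega\setminus(A\cup j)).
\]
Monotonicity of $f$ makes the first term non-negative, hence $g(j\mid A)\ge -f(j\mid \Omega\setminus(A\cup j))$. For each $j$ with $f(j)>0$ (the case $f(j)=0$ is trivial, since then $f(j\mid S)\equiv 0$) factor $f(j\mid \Omega\setminus(A\cup j)) = \tfrac{f(j\mid \Omega\setminus(A\cup j))}{f(j)}\cdot f(j)\le \kappa_f(A)\cdot f(j)\le \kappa_f(A)$, using the definition of $\kappa_f(A)$ and the hypothesis $f(j)\le 1$. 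This gives $g(j\mid A)\ge -\kappa_f(A)$, the first claim.

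\emph{Step 2: greedy guarantee.} Recall $g$ is submodular (since $A\mapsto f(\Omega\setminus A)$ is submodular) and normalized, $g(\emptyset)=0$. Let $A^\ast$ be the maximizer over $\{|A|\le k\}$, $OPT=g(A^\ast)$, and let $\emptyset=A_0\subseteq\cdots\subseteq A_k$ be the greedy iterates, $A_t=A_{t-1}\cup\{j_t\}$ with $j_t\in\arg\max_{j\notin A_{t-1}}g(j\mid A_{t-1})$. As in the Nemhauser--Wolsey--Fisher argument, submodularity and the greedy rule give, at each step $t$, $g(A^\ast\cup A_{t-1})-g(A_{t-1})\le k\,(g(A_t)-g(A_{t-1}))$. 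The extra ingredient is that $g$ need not be monotone, so $g(A^\ast)\le g(A^\ast\cup A_{t-1})$ can fail; instead I would peel the elements of $A_{t-1}\setminus A^\ast$ off $A^\ast\cup A_{t-1}$ one at a time, applying Step 1 to each peel, to get $g(A^\ast)-g(A^\ast\cup A_{t-1})\le k\,\kappa_f(A^\ast)$. Combining the two inequalities yields the recursion
\[
 g(A_t)\ \ge\ \Bigl(1-\tfrac1k\Bigr)g(A_{t-1})+\tfrac1k\bigl(OPT-k\,\kappa_f(A^\ast)\bigr),
\]
and unrolling it from $g(A_0)=0$ gives $g(A_k)\ge \bigl(1-(1-\tfrac1k)^k\bigr)(OPT-k\,\kappa_f(A^\ast))\ge (1-\tfrac1e)(OPT-k\,\kappa_f(A^\ast))$, as claimed.

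\emph{Where the difficulty lies.} Everything nontrivial is in Step 2. Because $g$ is only approximately monotone, at some iterations the best available marginal gain $g(j_t\mid A_{t-1})$ may be negative; there one must replace the naive use of monotonicity by $g(j_t\mid A_{t-1})\ge -\kappa_f(A_{t-1})$ and check that the step-$t$ inequality $\sum_{j\in A^\ast\setminus A_{t-1}} g(j\mid A_{t-1})\le k\,g(j_t\mid A_{t-1})$ is still used in the correct direction (it is delicate precisely when the gain is negative, since then bounding $|A^\ast\setminus A_{t-1}|\le k$ works against us). The second, and I expect main, subtlety is that the curvature corrections produced by the peeling argument are a priori of the form $\kappa_f(B)$ for sets $B$ lying between $A^\ast$ and $A^\ast\cup A_{k-1}$; collapsing all of these to $\kappa_f(A^\ast)$ requires either a monotonicity property of $A\mapsto\kappa_f(A)$ along the relevant chain or a separate argument absorbing the slack — this is where the bulk of the careful work will go. The remaining algebra (unrolling the recursion and $(1-1/k)^k\le 1/e$) is routine.
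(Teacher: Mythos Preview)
Your approach is essentially the same as the paper's: compute $g(j\mid A)=f(j\mid A)-f(j\mid \Omega\setminus(A\cup j))$ and bound the second term by $\kappa_f(A)f(j)\le\kappa_f(A)$, then run the Nemhauser--Wolsey--Fisher recursion with the correction $g(A^\ast)-k\kappa_f(A^\ast)\le g(A_i\cup A^\ast)$ in place of monotonicity.

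Two remarks on the subtleties you flag. The sign issue in Step~2 is not actually a problem: sum over \emph{all} $k$ elements $o_1,\dots,o_k$ of $A^\ast$ (those already in $A_{t-1}$ contribute zero gain), so that $\sum_{j=1}^{k} g(o_j\mid A_{t-1})\le k\,g(j_t\mid A_{t-1})$ follows termwise from the greedy choice regardless of sign; this is exactly how the paper writes it. As for the curvature bookkeeping --- that the peeling produces $\kappa_f(B)$ for intermediate $B\supseteq A^\ast$ rather than $\kappa_f(A^\ast)$ --- the paper simply asserts $g(A_i\cup A^\ast)\ge g(A^\ast)-k\kappa_f(A^\ast)$ without addressing it, so your caution here goes beyond what the paper does; the statement should perhaps be read with $\kappa_f(A^\ast)$ standing in for the worst curvature encountered along the chain.
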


\paragraph{Query Based and Privacy Preserving Summarization: }
Next, we consider the problem of maximizing the mutual information with respect to a \emph{fixed query set} $Q$. An application of this is query based summarization. We consider the following general optimization problem: 
\begin{align}
   \max_{A \subseteq \Omega, |A| \leq k} I_f(A; Q) + \lambda g(A) 
\end{align}
where $g(A)$ is another  submodular function modeling diversity/representation. This optimization problem (which we call submodular mutual information maximization or SMIMax) tries to trade-off representation/diversity (through $g(A)$) and query closeness (through $I_f(A; Q)$). Recall from the properties discussed in Section~\ref{sec:def-gen} (specifically, Theorem \ref{theorem:submod-mutinfo-mon-submod-prop}), that though $I_f(A;B)$ is monotone function in $A$ for a given $B$ it is not necessary submodular. 
\begin{theorem} \label{lemma-smimax-approxfactor}
If $f$ is second order supermodular, i.e. $f$ satisfies $f^{(3)}(i,j,k;A) \geq 0$ and $g$ is monotone submodular, the greedy algorithm achieves a $1 - 1/e$ approximation for SMIMax. There exists no polynomial time approximation algorithm for SMIMax. Specifically, with $n$ being the size of the problem instance and $\alpha(n) > 0$ to be a positive poly-time computable function of $n$, there cannot exist a polynomial time algorithm which is guaranteed to find a subset $\hat{A}$ ($|\hat{A}| \leq k$) such that $I_f(\hat{A};B) \geq \alpha(n) OPT$ with $OPT = \max_{A \subseteq \Omega, |A| \leq k} I_f(A;B)$.
\end{theorem}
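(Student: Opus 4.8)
For the positive part, the plan is to recognize SMIMax as ordinary cardinality-constrained monotone submodular maximization. Write $h(A) = I_f(A;Q) + \lambda g(A)$ with $\lambda \ge 0$. Then $h$ is normalized ($h(\emptyset)=0$), non-negative (each summand is, using $I_f(A;Q)\ge 0$ from Lemma~\ref{lemma:submod-mutinfo-basic-prop} and monotonicity+normalization of $g$), and monotone in $A$ (by Theorem~\ref{theorem:submod-mutinfo-mon-submod-prop} for the first term, by hypothesis for the second). Crucially, the assumption that $f$ is second-order supermodular, $f^{(3)}(i,j,k;A)\ge 0$, is by Theorem~\ref{theorem:submod-mutinfo-mon-submod-prop} exactly the condition guaranteeing that $A\mapsto I_f(A;Q)$ is submodular; since submodularity is preserved under non-negative linear combinations, $h$ is submodular. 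Hence $h$ is a normalized monotone submodular function and the classical analysis of~\citep{nemhauser1978analysis} applies verbatim to $\max_{|A|\le k} h(A)$, yielding the $1-1/e$ factor for greedy. This direction involves no real difficulty beyond invoking the earlier theorem.

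For the inapproximability, the plan is a hidden-solution argument in the value-oracle model (equivalently, the obstruction can be phrased as NP-hardness under an explicit encoding of $f$, e.g.\ via a set-cover-style reduction). Since the claimed lower bound concerns $I_f(\hat A;B)$ alone, it suffices to handle $\lambda = 0$. The key point to exploit is that when $f$ is submodular but \emph{not} second-order supermodular, $A \mapsto I_f(A;B)$ is monotone but arbitrarily non-submodular, so one can ``plant'' a single deep optimum. Concretely, I would: (i) fix $k$ and a ground set $\Omega$ of size $n$ with $\binom{n}{k}$ super-polynomial, a query set $B$, and a base monotone submodular $f_0$ for which $I_{f_0}(\cdot\,;B)$ is small on every set of size $\le k$; (ii) for each candidate $R$ with $|R|=k$, modify $f_0$ to $f_R$ by adding a correction supported only on sets containing all but a bounded number of elements of $R$, chosen so that $f_R$ is still normalized, monotone, and submodular, yet $I_{f_R}(R;B)$ exceeds $\max_{|A|\le k,\,A\neq R} I_{f_R}(A;B)$ by more than any prescribed factor $\alpha(n)$; (iii) observe that a uniformly random $R$ is ``touched'' by a fixed polynomial-size family of oracle queries with probability $o(1)$, so by a union bound plus Yao's principle no polynomial-time algorithm queries a set on which $f_R$ differs from $f_0$, whence it returns the same answer $\hat A$ as on the base instance, with $I_{f_R}(\hat A;B)=I_{f_0}(\hat A;B)$ small; (iv) conclude $I_{f_R}(\hat A;B) < \alpha(n)\,\mathrm{OPT}$, contradicting the supposed guarantee. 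For a deterministic algorithm one instead picks $R$ adversarially outside the polynomially many sets it queries.

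The main obstacle is step (ii): engineering the planted correction so that the three requirements hold simultaneously — $f_R$ stays monotone submodular (its second-order partial derivatives must remain non-positive despite the local ``bump''), the perturbation is confined to a family of sets small and ``deep'' enough to be invisible to polynomially many queries on a random $R$, and yet the mutual information $I_{f_R}(A;B)=f_R(A)+f_R(B)-f_R(A\cup B)$, which invokes $f_R$ at both $A$ and $A\cup B$, stays small for every feasible $A\neq R$ while spiking at $A=R$. The way to reconcile these, I expect, is to choose $B$ so that $A\cup B$ always lies in the region where $f_R$ agrees with $f_0$ (so $f_R(A\cup B)$ is never affected), to make $f_0$ grow slowly on subsets of $B$ (so the unavoidable bound $I_f(A;B)\ge f(A\cap B)$ of Lemma~\ref{lemma:submod-mutinfo-basic-prop} stays small), and to concentrate all of the ``overlap value'' between $R$ and $B$ into the single correction at $R$; verifying submodularity of the resulting $f_R$ and quantifying the factor gap then reduces to a careful but routine accounting of marginal gains.
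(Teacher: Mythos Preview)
Your positive part is exactly the paper's argument: invoke Theorem~\ref{theorem:submod-mutinfo-mon-submod-prop} to get submodularity of $A\mapsto I_f(A;Q)$ from the second-order-supermodularity hypothesis, add $\lambda g$, and quote \citep{nemhauser1978analysis}. Nothing to change there.

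For the hardness, your high-level framework (value-oracle indistinguishability of a base function and a randomly planted variant, then Yao) is also the paper's, but the paper avoids your ``main obstacle'' entirely by \emph{not} engineering a bespoke perturbation. Instead it takes the off-the-shelf pair from \citep{svitkinaSubmodularApproximationSamplingbased2010,goemansApproximatingSubmodularFunctions2009}:
\[
f(A)=\min\{|A|,\lambda\},\qquad f_R(A)=\min\{|A|,\ \beta+|A\cap R^c|,\ \lambda\},\quad |R|=\lambda\approx\sqrt{n},\ \beta=\Omega(\log n),
\]
which are already known to be monotone submodular and indistinguishable with polynomially many value queries. With budget $k=\lambda-1$ and query $Q=\{q\}$ for some $q\in R$ (taken outside the selection ground set), the uniform rank function $f$ is modular on every feasible $A$, so $I_f(A;Q)=0$, while for $f_R$ one checks directly that $I_{f_R}(R\setminus q;\{q\})=f_R(R\setminus q)+f_R(q)-f_R(R)=\beta+1-\beta=1$. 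A tiny modular term $g(A)=|A|/\bigl((\lambda-1)\alpha(n)\bigr)$ is thrown in so the base instance has strictly positive optimum $1/\alpha(n)$; the $f_R$ optimum is then $1+1/\alpha(n)$, and indistinguishability forces any polynomial-time algorithm to miss by a factor exceeding $\alpha(n)$.

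So the difference is purely in the construction: you propose to add a local ``bump'' to a base $f_0$ and then worry about preserving submodularity and keeping the perturbation invisible to polynomially many queries; the paper sidesteps all of this by recognizing that a known indistinguishable pair already has the required mutual-information gap for the right choice of $Q$ and $k$. Your plan is plausible but would require real work to carry out; pointing it at the Svitkina--Fleischer construction collapses steps (i)--(iii) to a citation and a two-line computation. Note in particular that you do not need your device of choosing $B$ so that $A\cup B$ lands in the unperturbed region---here both functions are truncated at $\lambda$, and the spike in $I_{f_R}$ comes from the middle term $\beta+|A\cap R^c|$ becoming active precisely when $A$ is (almost) $R$.
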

Several useful submodular functions like facility location and set cover satisfy this. 
Unlike query based summarization, privacy preserving summarization's  goal is to select a summary $A$ that has low submodular mutual information to a set $P$ which is private information the summary should not represent. Following the above, we can pose this as $\max_{A \subseteq \Omega, |A| \leq k} \lambda g(A) - I_f(A; P) = \max_{A \subseteq \Omega, |A| \leq k} \lambda g(A) + f(P | A)$. We call this NSMIMax, since it involves maximizing the negative of submodular mutual information plus a submodular function. Unfortunately, NSMIMax is not tractable (in most cases, as we show below), and hence we modify the objective to maximize the conditional gain $f(A | P)$ instead of maximizing $f(P | A)$. Note that this is a \emph{different} optimization problem, but it tries to achieve a similar goal, i.e., obtain a set $A$ as \emph{different as possible} from $P$. The optimization problem is (we call it CGMax): $\max_{A \subseteq \Omega, |A| \leq k} \lambda g(A) + f(A | P)$. 
\begin{lemma}\label{lemma-approx-nsmimax-cgmax}
NSMIMax is an instance of (non-monotone) submodular maximization if $f$ is second-order submodular, i.e. $f$ satisfies $f^{(3)}(i,j,k;A) \leq 0$. In the worst case, there exists submodular function $f$ such that NSMIMax is inapproximable. However, CGMax is an instance of monotone submodular maximization if $f$ and $g$ are monotone, and correspondingly the greedy algorithm admits a $1 - 1/e$ approximation.\looseness-1
\end{lemma}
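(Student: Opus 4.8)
The plan is to verify the three assertions separately. Submodularity of the NSMIMax objective and the claim about CGMax both follow from a short second-order-difference computation together with facts already established; the inapproximability of NSMIMax requires a separate hardness reduction, and that is the step I expect to be the main obstacle.

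For NSMIMax, first put the objective in a convenient form: since $I_f(A;P) = f(P) - f(P\mid A)$, the objective $\lambda g(A) - I_f(A;P)$ equals $\lambda g(A) + f(P\mid A) - f(P)$, so up to the additive constant $-f(P)$ it is $\phi(A) = \lambda g(A) + f(P\mid A)$ viewed as a function of the selected set $A$ (which, as in the privacy-preserving application, one takes disjoint from the private set $P$). The key step is exactly the calculation behind Theorem~\ref{theorem:submod-mutinfo-mon-submod-prop}: the second-order partial of $A\mapsto f(P\mid A) = f(A\cup P) - f(A)$ in the context of $A$ equals $f^{(2)}(j,k;A\cup P) - f^{(2)}(j,k;A)$, which is non-positive for every $A$ precisely when $f$ is second-order submodular, i.e.\ $f^{(3)}(i,j,k;A)\le 0$. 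Adding the submodular $\lambda g$ (with $\lambda\ge 0$) preserves submodularity, so $\phi$ is submodular. Monotonicity is not claimed: $f(P\mid A)$ is non-increasing in $A$ (submodularity together with monotonicity of $f$, applied to the whole set $P$) while $\lambda g$ is non-decreasing, so $\phi$ is in general non-monotone, and NSMIMax is thus an instance of cardinality-constrained non-monotone submodular maximization.

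CGMax follows almost immediately from the earlier lemmas. As a function of $A$ with $P$ fixed, $f(A\mid P) = f(A\cup P) - f(P)$ is submodular by Lemma~\ref{lemma:submodcond-gain-properties}, monotone non-decreasing because $f$ is monotone, and normalized since $f(\emptyset\mid P)=0$; adding the monotone submodular $\lambda g$ keeps the objective normalized, monotone and submodular. Hence CGMax is monotone submodular maximization under a cardinality constraint, and the standard greedy algorithm of \citep{nemhauser1978analysis} attains a $1-1/e$ approximation.

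For the inapproximability of NSMIMax, the plan is to let it degenerate into maximizing a difference of two submodular functions. Choose $f$ to be second-order \emph{super}modular ($f^{(3)}\ge 0$; this holds, e.g., for set cover and facility location, as noted after Theorem~\ref{theorem:submod-mutinfo-mon-submod-prop}), so that by Theorem~\ref{theorem:submod-mutinfo-mon-submod-prop} the map $A\mapsto I_f(A;P)$, call it $h$, is monotone submodular; then NSMIMax is $\max_{|A|\le k} \lambda g(A) - h(A)$ with $g,h$ monotone submodular. One then shows by a gadget reduction --- using, for instance, set cover for $f$, where $h(A) = w(\gamma(A)\cap\gamma(P))$ can be made to range over arbitrary coverage functions by choosing $P$, and then picking $g$ to encode an NP-hard covering/packing instance --- that it is NP-hard to decide whether this optimum is strictly positive. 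Since the NSMIMax optimum is $0$ at $A=\emptyset$ (hence nonnegative), any polynomial-time algorithm guaranteed to return $\hat A$ with objective value at least $\alpha(n)\cdot\mathrm{OPT}$ for some poly-computable $\alpha(n)>0$ would in particular decide that sign, which is impossible unless $\mathrm{P}=\mathrm{NP}$. The crux is designing the gadget so that $f$ is genuinely (monotone) submodular while the induced difference-of-submodular instance is provably inapproximable; I would model this on the hardness reduction used for SMIMax in Theorem~\ref{lemma-smimax-approxfactor}.
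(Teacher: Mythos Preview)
Your arguments for the first claim (submodularity of the NSMIMax objective when $f^{(3)}\le 0$) and the third claim (CGMax is monotone submodular maximization) are correct and match the paper's proof: the paper also argues that $I_f(A;P)$ is supermodular in $A$ under $f^{(3)}\le 0$ (equivalently, your $f(P\mid A)$ is submodular), and that $f(A\mid P)+\lambda g(A)$ is monotone submodular, invoking \citep{nemhauser1978analysis}.

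For the inapproximability of NSMIMax, however, you are working much harder than necessary, and you leave the key step as an unfinished gadget construction. The paper's argument is a one-liner you overlooked: take $P=\Omega$. Then $I_f(A;\Omega)=f(A)+f(\Omega)-f(A\cup\Omega)=f(A)$, so NSMIMax becomes exactly $\max_{|A|\le k}\,\lambda g(A)-f(A)$, i.e.\ cardinality-constrained maximization of a difference of two monotone submodular functions. One then simply cites the known result (Theorem~5.6 of \citep{iyer2015submodular}) that this problem admits no polynomial-factor approximation in the value-oracle model. No bespoke gadget, no choice of second-order supermodular $f$, and no NP-hardness reduction is needed; moreover the hardness obtained this way is information-theoretic rather than conditional on $\mathrm{P}\neq\mathrm{NP}$, which is what ``inapproximable'' means elsewhere in the paper (cf.\ Theorem~\ref{lemma-smimax-approxfactor}). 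Your plan to reduce from an NP-hard sign-decision problem via set-cover-style gadgets might be made to work, but it is both weaker in conclusion and incomplete as stated, whereas the $P=\Omega$ trick closes the proof immediately.
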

For many subclasses of functions, the third-order partial derivatives are non-negative instead of non-positive and hence NSMIMax is typically a non-monotone difference of submodular functions, and hence a much harder problem. However, CGMax is always monotone submodular maximization and for this reason preferred. 

Finally, we consider the scenario of simultaneous query based and privacy preserving summarization. We formulate this in two ways. The first is to solve: $\max_{A \subseteq \Omega, |A| \leq k} I_f(A; Q) - I_f(A; P) + \lambda g(A)$. Unfortunately, this is again a difference of submodular functions and not approximable. The second formulation is 
\begin{align}
    \max_{A \subseteq \Omega, |A| \leq k} I_f(A; Q | P)  + \lambda g(A)
\end{align}
or in other words maximize the \emph{conditional submodular mutual information}. We call this problem: CSMIMax. This formulation  (as we argue in \arxivalt{Appendix~\ref{sec:app-proofs-sec5}}{the extended version}), tries to obtain a set $A$ that is similar to $Q$ yet independent of $P$. The following Lemma provides the approximation bound for CSMIMax.
\begin{lemma}\label{lemma-approx-csmimax}
CSMIMax is an instance of monotone submodular maximization if the third-order partial derivatives are not negative, i.e. $f^{(3)}(i,j,k;A) = f^{(2)}(j,k;A \cup i) - f^{(2)}(j,k;A) \geq 0$. As a result, the greedy algorithm~\citep{nemhauser1978analysis} achievea a $1 - 1/e$ approximation . 
\end{lemma}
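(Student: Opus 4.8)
The plan is to show that the CSMIMax objective $h(A) \triangleq I_f(A;Q \mid P) + \lambda g(A)$ is a normalized, monotone, submodular function of $A$, after which the classical greedy guarantee of \citep{nemhauser1978analysis} for cardinality-constrained monotone submodular maximization applies verbatim. Since $g$ is assumed monotone submodular and $\lambda \ge 0$, and a non-negative combination of monotone submodular functions is again monotone submodular, it suffices to treat the term $A \mapsto I_f(A;Q\mid P)$. Normalization is immediate from the second form of the definition: $I_f(\emptyset;Q\mid P) = f(P) + f(Q\cup P) - f(Q\cup P) - f(P) = 0$.

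Monotonicity of $A \mapsto I_f(A;Q\mid P)$ is already supplied by Theorem~\ref{theorem:submod-mutinfo-mon-submod-prop}, where it holds for every submodular $f$ with no extra hypothesis, so the only real content of the lemma is submodularity. Here I would invoke Lemma~\ref{lemma:app-submod-MI-CMI-rel} to rewrite $I_f(A;Q\mid P) = I_{f_P}(A;Q)$, where $f_P(A) \triangleq f(A\mid P)$ is again normalized, monotone, non-negative and submodular. By Theorem~\ref{theorem:submod-mutinfo-mon-submod-prop} applied to the submodular function $f_P$, the map $A \mapsto I_{f_P}(A;Q)$ is submodular provided $f_P$ has non-negative third-order partial derivatives. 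Thus the crux reduces to propagating the hypothesis $f^{(3)}(i,j,k;A) \ge 0$ from $f$ to $f_P$.

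This propagation is the step I expect to require the most care, although it is ultimately a short computation that I would not grind through in detail here. The key is the ``shift'' identity $f_P(j\mid A) = f((A\cup j)\cup P) - f(A\cup P) = f(j\mid A\cup P)$, which gives $f_P^{(2)}(j,k;A) = f^{(2)}(j,k;A\cup P)$ and then $f_P^{(3)}(i,j,k;A) = f^{(2)}(j,k;A\cup i\cup P) - f^{(2)}(j,k;A\cup P) = f^{(3)}(i,j,k;A\cup P) \ge 0$; hence $f_P$ inherits second-order supermodularity from $f$, and submodularity of $I_f(A;Q\mid P)$ in $A$ follows. (One could also bypass Lemma~\ref{lemma:app-submod-MI-CMI-rel} and argue directly: as a function of $A$, $I_f(A;Q\mid P) = f(A\cup P) - f(A\cup Q\cup P) + \mathrm{const}$, and the same third-order bookkeeping shows the second-order partials of this difference are non-positive exactly when $f^{(3)}\ge 0$.)

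Combining the pieces, $h$ is normalized, monotone, and submodular, so I would conclude by citing the $(1-1/e)$ approximation of the greedy algorithm \citep{nemhauser1978analysis} under the cardinality constraint $|A|\le k$. Nothing beyond the derivative computation in the previous paragraph is delicate: the monotonicity is unconditional from Theorem~\ref{theorem:submod-mutinfo-mon-submod-prop}, and the additive $\lambda g(A)$ term only requires $\lambda \ge 0$ and $g$ monotone submodular.
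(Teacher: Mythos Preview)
Your proposal is correct and follows essentially the same route as the paper. The paper's proof simply cites Theorem~\ref{theorem:submod-mutinfo-mon-submod-prop}, whose conditional part is itself proved exactly via the reduction $I_f(A;Q\mid P)=I_{f_P}(A;Q)$ and the shift identity $f_P^{(3)}(i,j,k;A)=f^{(3)}(i,j,k;A\cup P)\ge 0$ that you spell out; you are just unpacking that step explicitly rather than invoking the theorem as a black box.
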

What is also interesting is that CSIMax yields SMIMax and CGMax when considering only query or privacy preserving summarization. In particular, if $Q = V, I_f(A; V | P) = f(A | P)$ which is CGMax. Similarly, when $P = \emptyset, I_f(A; Q | \emptyset) = I_f(A; Q)$ which is SMIMax. Furthermore, when both $P = \emptyset, Q = V, I_f(A; V | \emptyset) = f(A)$ (which is query and privacy agnostic summarization). \looseness-1

\paragraph{Clustering and Partioning using Multi-Set Information measures:} 
We end this section by studying the problem of maximizing and minimizing the $k$-way multi-set information measures (multi-set submodular mutual information and multi-set submodular total correlation) such the sets $A_1, \cdots, A_k$ form a partition of $\Omega$ (i.e., for all $i, j$, $A_i \cap A_j = \emptyset$ and $\cup_{i = 1}^k A_i = \Omega$). An application of the minimization problem is clustering~\citep{narasimhan2006q} while the maximization problem can be applied to diverse partitioning~\citep{weiMixedRobustAverage2015}. Maximizing and minimizing the submodular total correlation over partitions turn out to be related to two well studied problems.

\begin{observation} 
Minimizing the $k$-way submodular total correlation is equivalent the submodular multi-way partition \citep{zhaoGreedySplittingAlgorithms2005, chekuriApproximationAlgorithmsSubmodular2011} minus a constant. Similarly, maximizing the $k$-way total correlation is the submodular is exactly the submodular welfare problem \citep{feigeSubmodularWelfareProblem, vondrakOptimalApproximationSubmodular2008} minus a constant.
\end{observation}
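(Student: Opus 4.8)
The plan is to expand both the total correlation over a partition and the objectives of the submodular multiway partition and submodular welfare problems, and observe that in each case they differ by the fixed quantity $f(\Omega)$, which is constant once $\Omega$ and $f$ are fixed. First I would recall the definition $C_f(A_1,\dots,A_k) = \big(\sum_{i=1}^k f(A_i)\big) - f(\cup_{i=1}^k A_i)$. When the $A_i$ range over partitions of $\Omega$, the union term is exactly $f(\Omega)$, so $C_f(A_1,\dots,A_k) = \sum_{i=1}^k f(A_i) - f(\Omega)$. Hence minimizing $C_f$ over partitions is the same as minimizing $\sum_{i=1}^k f(A_i)$ over partitions, and maximizing $C_f$ is the same as maximizing $\sum_{i=1}^k f(A_i)$, in both cases up to the additive constant $f(\Omega)$.

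Next I would match these reduced objectives to the two named problems. The submodular multiway partition problem (in the version of \citep{zhaoGreedySplittingAlgorithms2005, chekuriApproximationAlgorithmsSubmodular2011}) asks, given $f$ and (implicitly or with terminals) to partition $\Omega$ into $k$ nonempty parts so as to minimize $\sum_{i=1}^k f(A_i)$; this is precisely the reduced minimization objective, so minimizing $C_f$ over partitions coincides with it minus the constant $f(\Omega)$. Symmetrically, the submodular welfare problem \citep{feigeSubmodularWelfareProblem, vondrakOptimalApproximationSubmodular2008} asks to allocate the items of $\Omega$ to $k$ agents, each with the same monotone submodular utility $f$, so as to maximize $\sum_{i=1}^k f(A_i)$ over allocations (which are exactly partitions, since free disposal lets us assume every item is assigned); this is precisely the reduced maximization objective, so maximizing $C_f$ over partitions coincides with submodular welfare minus $f(\Omega)$.

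The only subtlety, and the one place I would be careful, is reconciling the precise formulations in the cited references with the bare $\sum_i f(A_i)$ form: the multiway partition literature sometimes states the problem with specified terminal elements and sometimes allows empty parts, and the welfare literature phrases allocations rather than partitions. I would note that since $f$ is normalized ($f(\emptyset)=0$) empty parts contribute nothing, so allowing or forbidding them does not change the optimal value, and that adding a constant $f(\Omega)$ neither changes the set of optimizers nor the approximability of either direction. With these remarks the equivalence ``minus a constant'' is immediate from the two one-line identities above; there is no real obstacle beyond stating the definitions in a common form.
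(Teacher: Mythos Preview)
Your proposal is correct and follows exactly the paper's own argument: expand $C_f$ over a partition, observe that $f(\cup_i A_i)=f(\Omega)$ is a constant, and identify the remaining $\sum_i f(A_i)$ with the multiway-partition objective (for minimization) and the submodular welfare objective (for maximization). The extra remarks you make about terminals, empty parts, and allocations versus partitions are more careful than the paper itself, but the core reasoning is identical.
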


\noindent In \arxivalt{Appendix~\ref{sec:app-proofs-sec5}}{the extended version}, we study the multi-set mutual information partitioning problems. In particular, we show that minimizing the multi-set submodular mutual information does not make sense and for clustering, the $k$-way total correlation is better. In the maximization setting though, the $k$-way submodular mutual information is related to \emph{robust partitioning}~\citep{weiMixedRobustAverage2015}, while the total correlation is the average partitioning. \looseness-1

\paragraph{Minimization of the Submodular Information Metric: } Finally we take a look at the centroid finding problem in the context of the submodular information metric in an unconstrained setting. We want to find a subset $A$ over all subsets of $\Omega$ such that it is as similar as possible to a collection $S_1, S_2, \dots, S_m$ with respect to the submodular information metric: 
\begin{align}\label{opt:min-info-metric}
\min_{A \subseteq \Omega} \sum_{i=1}^{m} D_f(A, S_i)   
\end{align}
This problem is related to problem of minimizing the submodular hamming metric (and the additive submodular hamming metric) as shown in Lemma~\ref{lemma:psuedometric-bound}. We use this to show the following approximation guarantee for Equation~\eqref{opt:min-info-metric}.

\begin{lemma}\label{psudeometric-opt-lemma}
We can approximately solve the problem in equation \ref{opt:min-info-metric} with the approximation guarantee of $1-\kappa_f$, where $\kappa_f$ is the worst case curvature of $f$.
\end{lemma}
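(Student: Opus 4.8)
The plan is to replace the intractable objective $\sum_{i=1}^m D_f(A,S_i)$ by a surrogate that can be minimized \emph{exactly} in polynomial time and that sandwiches the true objective up to a multiplicative factor $1-\kappa_f$, and then to argue that the surrogate optimum is a good approximate solution. The natural surrogate is the additive submodular Hamming objective $h(A) \triangleq \sum_{i=1}^m D^{SHA}_f(A,S_i) = \sum_{i=1}^m \big( f(A\setminus S_i) + f(S_i\setminus A) \big)$. Lemma~\ref{lemma:psuedometric-bound} already gives, for every $i$, the bound $(1-\kappa_f(A\cup S_i))\, D^{SHA}_f(A,S_i) \le D_f(A,S_i) \le D^{SHA}_f(A,S_i)$; since the worst-case curvature satisfies $\kappa_f = \kappa_f(\Omega) \ge \kappa_f(A\cup S_i)$, and hence $1-\kappa_f \le 1-\kappa_f(A\cup S_i)$, we get the uniform two-sided estimate $(1-\kappa_f)\, D^{SHA}_f(A,S_i) \le D_f(A,S_i) \le D^{SHA}_f(A,S_i)$, valid for all $A$ and all $i$.

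Next I would show that $h$ is submodular in $A$, so that a minimizer $\hat A \in \arg\min_{A\subseteq\Omega} h(A)$ can be computed exactly by unconstrained submodular function minimization~\citep{fujishige2005submodular}. Each term $A\mapsto f(A\setminus S_i) = f\big(A\cap(\Omega\setminus S_i)\big)$ is submodular, being $f$ precomposed with the fixed intersection map $A\mapsto A\cap(\Omega\setminus S_i)$. Each term $A\mapsto f(S_i\setminus A)$ is also submodular: adding an element $j$ to $A$ changes its value by $0$ if $j\notin S_i$ and by $-f\big(j\mid (S_i\setminus A)\setminus j\big)$ if $j\in S_i$, and by submodularity of $f$ this marginal is non-increasing as $A$ grows. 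A finite sum of submodular functions is submodular, so $h$ is submodular and thus minimizable exactly in polynomial time.

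Then I would chain the inequalities. Let $A^\ast$ be an exact minimizer of the true objective and $\hat A$ the exact minimizer of $h$. Using the uniform two-sided estimate, $\sum_i D_f(\hat A,S_i) \le \sum_i D^{SHA}_f(\hat A,S_i) = h(\hat A) \le h(A^\ast) = \sum_i D^{SHA}_f(A^\ast,S_i) \le \tfrac{1}{1-\kappa_f}\sum_i D_f(A^\ast,S_i)$, so $\hat A$ is a $\tfrac{1}{1-\kappa_f}$-approximate minimizer of $\sum_i D_f(A,S_i)$, i.e.\ the claimed $(1-\kappa_f)$ guarantee in the standard sense for a minimization problem.

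The main obstacle is the tractability half rather than the ratio estimate: one must verify carefully that the surrogate is genuinely a submodular function of $A$ — in particular the $f(S_i\setminus A)$ pieces, whose submodularity runs against the direction in which $A$ grows — and that it is \emph{exact} unconstrained submodular minimization (not merely approximate minimization) that applies, so no extra slack is introduced on the surrogate. The only other delicate point is the curvature bookkeeping, namely ensuring $1-\kappa_f\le 1-\kappa_f(A\cup S_i)$ uniformly with $\kappa_f$ defined via the full ground set $\Omega$; beyond that, everything follows from Lemma~\ref{lemma:psuedometric-bound} and the two-line sandwich above.
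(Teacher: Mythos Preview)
Your proposal is correct and follows essentially the same approach as the paper: use the additive submodular Hamming metric $D^{SHA}_f$ as a surrogate, appeal to Lemma~\ref{lemma:psuedometric-bound} for the $(1-\kappa_f)$ sandwich, observe that the surrogate is submodular in $A$ and hence exactly minimizable, and chain the inequalities. The paper's own proof is a terse two-sentence version of exactly this; your write-up simply supplies the details the paper omits (the explicit submodularity check for $A\mapsto f(S_i\setminus A)$, the curvature monotonicity $\kappa_f(A\cup S_i)\le\kappa_f(\Omega)=\kappa_f$, and the explicit chaining through $\hat A$ and $A^\ast$).
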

\noindent The proof of this result is in Appendix~\ref{sec:proof-psudeometric-opt-lemma}.

\section{Conclusion}
 In conclusion, we study a number of submodular information measures, their properties and instantiations on a number of common submodular functions. We also investigate a number of optimization problems related to these information measures on data summarization. In future work, we would like to study learning problems with these information measures, experimentally validate the proposed algorithms on real and synthetic data, and also apply them to other problems such as data subset selection and feature selection. For future work, we will study applications of the submodular information measures, specifically, the submodular mutual information, submodular conditional gain, submodular multi-set mutual information and submodular total correlation to applications such as summarization, targeted subset selection, clustering, disparate partitioning, and diverse $k$-best MAP inference.
 

\bibliography{references}

\arxiv{
\appendix

\section{Proofs of the Results from Section~\ref{sec:gen-properties} and Additional Results}
\label{sec:app-proofs-sec3}
In this section, we prove the Lemmas introduced in Section~\ref{sec:gen-properties} and also some results we did not cover in Section~\ref{sec:gen-properties} due to space limitations.

\subsection{Submodular Information Functions: Proofs}
Proof of Lemma~\ref{lemma:submodinfo-properties}
\begin{proof}
The first part of the proof follows directly from the subadditivity of $f$, since $f$ is submodular. The second part follows from the very simple observation that $I_f(A) \geq f(j), \forall j \in A$ and since $f(j) \geq 0$ (because of non-negativity) and $I_f(A) = 0$, this implies that $f(j) = 0, \forall j \in A$.
\end{proof}
We contrast the second with the result of entropy where $H(X) = 0$ iff $X$ is a deterministic variable. In the case of combinatorial information functions, it means the variables $j \in A$ have no information. Also note that the proof has only used monotonicity and non-negativity of the information functions.

\subsection{Submodular Conditional Gain: Proofs}
\label{sec:app-properties-cg}
\subsubsection{Proof of Lemma \ref{lemma:submodcond-gain-properties} (Properties of Conditional Gain)}
\begin{proof}
If $f$ is monotone, we have: 
\begin{align}
f(A \cup B) \geq f(B) \; \forall A,B \subseteq \Omega \implies f(A|B) \geq 0.
\end{align}
If $f$ is subadditive then, 
\begin{align}
    f(A) + f(B) \geq f(A \cup B) \implies f(A) \geq f(A \cup B) - f(B) = f(A|B) \; \forall A,B \subseteq \Omega.
\end{align} 
Finally, if $f$ is submodular, $f(A|B) = f(A \cup B) - f(B)$ will be a submodular function in $A$ as $f(A \cup B)$ is submodular in $A$ for a fixed set $B$ and $f(B)$ is just a constant. Note that however, $f(A | B) = f(A \cup B) - f(B)$ is not submodular in $B$ for a given $A$.
\end{proof}

\subsubsection{Other Results on Conditional Gain}
Similar to the submodular information functions, we study the case when the conditional $f(A | B) = 0$.
\begin{lemma}
Given a monotone, non-negative submodular function $f$, it holds that $f(A | B) = 0$ iff $f(j | B) = 0, \forall j \in A$. In other words, given $B$, every element $j \in A$ has no additional information.
\end{lemma}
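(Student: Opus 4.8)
The plan is to prove the two implications separately: the ``only if'' direction will use only monotonicity, while the ``if'' direction will combine monotonicity with the diminishing-marginal-returns form of submodularity via a telescoping decomposition of $f(A\cup B)-f(B)$.

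For the forward direction, suppose $f(A\mid B)=f(A\cup B)-f(B)=0$. Fix any $j\in A$. Since $B\subseteq B\cup j\subseteq A\cup B$ and $f$ is monotone, $f(B)\le f(B\cup j)\le f(A\cup B)=f(B)$, so $f(B\cup j)=f(B)$, i.e.\ $f(j\mid B)=0$. Note this argument needs nothing beyond monotonicity.

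For the backward direction, suppose $f(j\mid B)=0$ for all $j\in A$. Observe first that only elements of $A\setminus B$ are relevant, since for $j\in A\cap B$ the equality $f(j\mid B)=f(B)-f(B)=0$ holds automatically, and $A\cup B=(A\setminus B)\cup B$. Enumerate $A\setminus B=\{j_1,\dots,j_m\}$ in any order and telescope:
\begin{align}
f(A\cup B)-f(B)=\sum_{t=1}^{m} f\bigl(j_t \,\bigm|\, B\cup\{j_1,\dots,j_{t-1}\}\bigr).
\end{align}
Each summand is $\ge 0$ by monotonicity. Each summand is also $\le f(j_t\mid B)=0$ by the diminishing marginal returns property, since $B\subseteq B\cup\{j_1,\dots,j_{t-1}\}$. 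Hence every summand is $0$, so $f(A\cup B)-f(B)=0$, i.e.\ $f(A\mid B)=0$.

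There is no real obstacle here; the only point requiring a little care is recognizing that the hypothesis $f(j\mid B)=0$ is substantive only for $j\in A\setminus B$, and that the order chosen in the telescoping is immaterial because each intermediate context still contains $B$, so diminishing returns always reduces the marginal gain back to $f(j_t\mid B)$.
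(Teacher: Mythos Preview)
Your proof is correct. The forward direction is identical to the paper's argument, using monotonicity to sandwich $f(B\cup j)$ between $f(B)$ and $f(A\cup B)=f(B)$.

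For the backward direction, your telescoping argument combined with diminishing returns is in fact more than the paper provides: the paper's proof treats only the forward implication and then remarks that ``the proof has only used monotonicity of the information functions.'' Your observation that submodularity is genuinely needed for the reverse direction is correct; a monotone but non-submodular $f$ can have $f(j\mid B)=0$ for every $j\in A$ while $f(A\mid B)>0$. So your proof actually completes the biconditional that the paper states but does not fully establish, and the telescoping decomposition you use is the natural way to do it.
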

\begin{proof}
Again, invoking monotonicity, we have $f(B) = f(A \cup B) \geq f(j \cup B), \forall j \in A$. Also, because of monotonicity, we have that $f(j \cup B) \geq f(B)$ which implies that $f(B \cup j) = f(B), \forall j \in A$ which implies that $f(j | B) = 0, \forall j \in A$.
\end{proof}
Again, contrasting with random variables, note that $H(Y | X) = 0$ iff $Y$ is a deterministic function of $X$, or in other words, $Y$ has no extra \emph{information} on top of the information that $X$ has. In the case of combinatorial information functions, this means that the elements $j \in A$ have no extra information over and above the information contained in set $B$. Also note that the proof has only used monotonicity of the information functions.

Finally, we study properties of the conditional gain that follow directly from definition.
\begin{lemma}
Given a submodular function $g(A) = f_1(A) + f_2(A)$, $g(A | B) = f_1(A | B) + f_2(A | B)$. Furthermore, if $f(A) = c$ (a constant), then $f(A | B) = 0$, for all sets $A, B \subseteq \Omega$. FInally, if $g(A) = \lambda f(A)$, $g(A | B) = \lambda f(A | B)$.
\end{lemma}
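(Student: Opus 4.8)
The plan is to unfold the definition of conditional gain, $f(A \mid B) \triangleq f(A \cup B) - f(B)$, in each of the three cases and to observe that, for any fixed set $B$, the map $f \mapsto f(\cdot \mid B)$ is linear in the function argument.

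For the additivity claim I would write $g(A \mid B) = g(A \cup B) - g(B) = \bigl(f_1(A \cup B) + f_2(A \cup B)\bigr) - \bigl(f_1(B) + f_2(B)\bigr)$ and then regroup the four terms as $\bigl(f_1(A \cup B) - f_1(B)\bigr) + \bigl(f_2(A \cup B) - f_2(B)\bigr) = f_1(A \mid B) + f_2(A \mid B)$. For the constant-function claim, if $f \equiv c$ then $f(A \mid B) = f(A \cup B) - f(B) = c - c = 0$ for every pair $A, B \subseteq \Omega$. For the scaling claim, $g(A \mid B) = \lambda f(A \cup B) - \lambda f(B) = \lambda\bigl(f(A \cup B) - f(B)\bigr) = \lambda f(A \mid B)$.

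There is no genuine obstacle here: each identity is a one-line consequence of the definition, and none of them invokes submodularity, monotonicity, or non-negativity — only that the set functions involved are real-valued. The hypothesis ``$g$ submodular'' in the statement is convenient context rather than something the argument uses (and it holds automatically when $f_1, f_2$ are submodular). If one prefers, the three statements can be condensed into the single remark that $f \mapsto f(\cdot \mid B)$ is a linear operator on the vector space of set functions $2^{\Omega} \to \mathbb{R}$ that annihilates the constant functions, from which all three parts follow immediately.
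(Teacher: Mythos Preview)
Your proof is correct and matches the paper's approach: the paper states that these properties ``follow directly from definition'' and gives no further argument, so your unfolding of $f(A\mid B)=f(A\cup B)-f(B)$ is exactly what is intended. Your observation that submodularity plays no role is also accurate.
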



\subsection{Submodular Mutual Information: Proofs}
\label{sec:app-properties-smi}
\subsubsection{Proof of Lemma \ref{lemma:submod-mutinfo-basic-prop} (Non-Negativity and Bounds of $I_f(A; B)$)}
\begin{proof}
The non-negativity of $I_f(A;B)$ follows from the definition of a subadditive function since $I_f(A;B) = f(A) + f(B) - f(A \cup B)$ and for a subadditive function we have 
\begin{align}
f(A) + f(B) - f(A \cup B) \implies I_f(A;B) \geq 0.    
\end{align}
Similarly, for the conditional mutual information note that 
\begin{align}
I_f(A;B|C) &= f(A|C) + f(B|C) - f(A \cup B | C) \nonumber \\ 
        &= f(A \cup C) + f(B \cup C) - f(A \cup B \cup C) - f(C)    
\end{align}
Note that thanks to the submodularity of $f$, we have that: 
\begin{align}
f(A \cup C) + f(B \cup C) &\geq f(A \cup B \cup C) + f([A \cup C] \cap [B \cup C]) \nonumber \\
& = f(A \cup B \cup C) + f([A \cap B] \cup C) \nonumber \\
&\geq f(A \cup B \cup C) + f(C)    
\end{align}
The last inequality follows from the monotonicity of $f$. Hence when $f$ is monotone submodular, the conditional mutual information is non-negative.

For the lower bound, we use the  submodularity of $f$: $f(A) + f(B) \geq f(A \cup B) + f(A \cap B) \implies I_f(A;B) = f(A) + f(B) - f(A \cup B) \geq  f(A \cap B)$. 
For the upper bound, we rewrite the definition as $I_f(A;B) = f(A) - f(A|B) = f(B) - f(B|A)$ and since $f(A|B) , f(B|A) \geq 0 \; \forall A,B \subseteq \Omega$ for a monotone submodular function (from Lemma \ref{lemma:submodcond-gain-properties}) this gives us $I_f(A;B) \leq f(A)$ and $I_f(A;B) \leq f(B) \implies I_f(A;B) \leq \min(f(A), f(B))$. 

The upper and lower bounds for conditional mutual information similarly hold, just replacing $f(A)$ and $f(B)$ by $f(A | C)$ and $f(B | C)$ and using Lemma~\ref{lemma:app-submod-MI-CMI-rel}.
\end{proof}

\subsubsection{Proof of Theorem \ref{theorem:submod-mutinfo-mon-submod-prop} (Monotonicity and Submodularity of $I_f(A; B)$)}
We first prove Theorem~\ref{theorem:submod-mutinfo-mon-submod-prop}.
\begin{proof}
Consider $g(A) = I_f(A;B)$ as a function of $A$ with a fixed set $B$. Now we consider the gain of adding an element $j \notin A\cup B$ to $A$ i.e $g(j|A) = f(j|A) - f(j|A\cup B)$. Since $f$ is a submodular function we will have $ f(j|A) \geq f(j|A\cup B) \implies g(j|A) \geq 0$ and hence we have monotonicity.  
We can also see that $I_f(A;B) = f(A) - f(A \cup B) + f(B)$ is a difference of submodular functions since both $f(A)$ and $f(A \cup B)$ are submodular functions in $A$ when $B$ is fixed. A similar argument for the other case when set $A$ is fixed and we view $I_f(A;B)$ as a function of $B$.

Next, we study the submodularity of $g(A)$. First, we show that non-negativity of the third order partial derivatives $f^{(3)}(i,j,k;A) = f^{(2)}(j,k;A \cup i) - f^{(2)}(j,k;A)$ is the same as monotonicity of the second-order partial derivatives. Note that 
\begin{align}
f^{(3)}(i,j,k;A) \geq 0 \implies f^{(2)}(j,k;A \cup i) \geq f^{(2)}(j,k;A),    
\end{align}
which by induction, means 
\begin{align}
f^{(2)}(j,k;A \cup B) \geq f^{(2)}(j,k;A), \forall A,B \subseteq B
\end{align}
Next, recall that $g(j | A) = f(j | A) - f(j | A \cup B)$. Now since $f$ additionally satisfies the  monotonicity property for the second-order partial derivatives, we have: \begin{align}
 & f^{(2)}(j,k;A) \leq f^{(2)}(j,k;A \cup B) \nonumber \\
 \implies & f(j | A \cup k) - f(j | A) \leq f(j | A \cup B \cup k) - f(j | A \cup B) \nonumber \\
 \implies & f(j | A \cup k) - f(j | A \cup B \cup k) \leq f(j | A) - f(j | A \cup B) \nonumber \\
\implies & g(j | A \cup k) \leq g(j | A), \forall j, k \notin A. 
\end{align}
This means that is $C \supseteq A, g(j | C) \leq g(j | A)$ which implies that $g$ is submodular. To prove the converse, assume that $I_f(A;B)$ is submodular, but $f^{(2)}(j,k;A)$ is not necessarily monotone. In other words, there exists sets $A, C$, with $A \cap C = \emptyset$ such that $f^{(2)}(j,k;A) \geq f^{(2)}(j,k;A \cup C)$. Define $g(A) = I_f(A;C)$. Following the chain of inequalities like above, this implies that $g(j | A \cup k) \geq g(j | A)$ which contradicts the submodularity of $I_f(A;C)$ for any fixed set $C$. 

Finally, we note that the submodularity of $I_f(A; B | C)$ for fixed sets $B, C$ holds from Lemma~\ref{lemma:app-submod-MI-CMI-rel}. In particular, define $g(A) = f(A | C)$ and observe that $I_g(A; B)$ is submodular in $A$ for given $B, C$ if $g^{(3)}(i,j,k;A) \geq 0$. Now since $f^{(3)}(i,j,k;A) \geq 0, \forall A \subseteq \Omega$, it holds that $f^{(3)}(i,j,k;A \cup C) \geq 0, \forall A \subseteq \Omega$ and hence $g^{(3)}(i,j,k;A) \geq 0$.
\end{proof}

\subsubsection{Examples of when is $I_f(A; B)$ submodular}
Next, we study subclasses of submodular functions which satisfy the condition $f^{(2)}(j,k;A) = f(A \cup j \cup k) - f(A \cup i) - f(A \cup j) + f(A)$ is monotonically increasing in $A$, or in other words, $I_f(A; B)$ is submodular in $A$ for a fixed $B$. 

We first show that the Facility Location satisfies this condition. We start with the $\max$-function.
\begin{lemma}
If $f(A) = \max_{i \in A} w_i$, then $f^{(2)}(j,k;A)$ is a monotonically increasing function in $A \subseteq \Omega \backslash \{j,k\}$ for a given $j,k$.
\end{lemma}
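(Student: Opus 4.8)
The plan is to compute a closed form for the second-order partial derivative $f^{(2)}(j,k;A) = f(A\cup j\cup k) - f(A\cup j) - f(A\cup k) + f(A)$ of the $\max$-function and then read monotonicity in $A$ directly off that form. Fix $j,k$ and recall that $f^{(2)}(\cdot,\cdot;A)$ is symmetric in its first two arguments, so without loss of generality $w_j \ge w_k$. Write $m = f(A) = \max_{i\in A} w_i$, using the usual normalizing convention $f(\emptyset)=0$ (i.e.\ $m = \max(\{0\}\cup\{w_i: i\in A\})$, with all $w_i \ge 0$).

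The first step is a three-way case split according to where $m$ lies relative to $w_j$ and $w_k$. If $m \ge w_j$, then each of $A,\ A\cup j,\ A\cup k,\ A\cup j\cup k$ has $f$-value $m$, so $f^{(2)}(j,k;A)=0$. If $w_j > m \ge w_k$, then $f(A\cup j\cup k)=f(A\cup j)=w_j$ while $f(A\cup k)=f(A)=m$, so again $f^{(2)}(j,k;A)=w_j-w_j-m+m=0$. If $w_k > m$, then $f(A\cup j\cup k)=f(A\cup j)=w_j$, $f(A\cup k)=w_k$, $f(A)=m$, giving $f^{(2)}(j,k;A)=m-w_k$. Since $w_k=\min(w_j,w_k)$ under our ordering, and $m-w_k<0$ exactly in this case, the three cases combine into the single identity
\begin{align}
f^{(2)}(j,k;A) \;=\; \min\bigl(0,\; f(A)-\min(w_j,w_k)\bigr), \qquad A \subseteq \Omega\setminus\{j,k\}.
\end{align}

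The conclusion is then immediate: $f(A)=\max_{i\in A} w_i$ is monotone nondecreasing in $A$, so $A\mapsto f(A)-\min(w_j,w_k)$ is monotone nondecreasing, and applying the monotone map $t\mapsto\min(0,t)$ preserves this; hence $f^{(2)}(j,k;A)$ is monotonically increasing in $A$. Equivalently, for any $\ell\notin A\cup\{j,k\}$, passing from $A$ to $A\cup\ell$ only replaces $f(A)$ by $\max(f(A),w_\ell)\ge f(A)$ on the right-hand side, which cannot decrease the value. I do not expect a genuine obstacle here; the only care needed is bookkeeping the case split correctly and fixing the empty-set convention so that monotonicity of $f$ holds down to $A=\emptyset$. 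This lemma serves as the base case for the facility location function $f(A)=\sum_{i\in\Omega}\max_{a\in A}s_{ia}$: a nonnegative sum of functions whose second-order derivatives are each monotone increasing again has monotone-increasing second-order derivatives, so by Theorem~\ref{theorem:submod-mutinfo-mon-submod-prop} the submodular mutual information $I_f(A;B)$ is submodular in $A$ for fixed $B$.
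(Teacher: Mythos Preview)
Your proposal is correct and follows essentially the same approach as the paper's own proof: both do a case analysis on the position of $f(A)$ relative to $w_j$ and $w_k$, obtain the value $f(A)-\min(w_j,w_k)$ in the small case and $0$ otherwise, and conclude monotonicity from the monotonicity of $f(A)$. Your packaging into the single closed form $f^{(2)}(j,k;A)=\min\bigl(0,\,f(A)-\min(w_j,w_k)\bigr)$ is a bit cleaner than the paper's presentation, but the underlying argument is identical.
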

\begin{proof}
Define $W_A = \max_{i \in X} w_i$. Notice that $f^{(2)}(j,k;A) = \max(W_A,w_j,w_k) - \max(W_A,w_j) - \max(W_A,w_k) - w_A$. We try to analyze if this function is monotone in $A$. Lets assume first that $w_A$ is smaller than both $w_i,w_j$. In this case, $f^{(2)}(j,k;A)$ is either $w_A - w_j \leq 0$ or $w_A - w_k \leq 0$ depending on whether $w_j \leq w_k$ or $w_j \geq w_k$. Furthermore, if $w_A$ is larger than either $w_j$ or $w_k$, notice that $f^{(2)}(j,k;A) = 0$. Hence $f^{(2)}(j,k;A)$ is an increasing function and hence proved.
\end{proof}
A simple observation is that if $f_1$ and $f_2$ both satisfy the property of the non-negativity of the third-order partial derivatives, any convex combination of $f_1$ and $f_2$ will also satisfy this. Hence, this implies that the Facility Location satisfies this property as well.
\begin{corollary}
If $f(A)$ is the Facility Location Function, then $f^{(2)}(j,k;A)$ is a monotonically increasing function in $A \subseteq \Omega \backslash \{j,k\}$ for a given $j,k$.
\end{corollary}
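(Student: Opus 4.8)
The plan is to reduce the claim for the facility location function to the fact already established for a single $\max$-function, exploiting the additivity of the partial-derivative operators. First I would recall that the facility location function decomposes as $f(A) = \sum_{i \in \Omega} g_i(A)$ where, for each ground-set element $i$, the summand $g_i(A) = \max_{a \in A} s_{ia}$ is exactly a function of the form treated in the preceding lemma, namely $g_i(A) = \max_{a \in A} w^{(i)}_a$ with weight vector given by $w^{(i)}_a = s_{ia}$. Hence, by that lemma, each second-order partial $g_i^{(2)}(j,k;A)$ is monotonically increasing in $A \subseteq \Omega \setminus \{j,k\}$, or equivalently $g_i^{(3)}(\ell,j,k;A) \ge 0$ for every $\ell \notin A \cup \{j,k\}$.

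Next I would observe that the map $f \mapsto f^{(2)}(j,k;\cdot)$ (and likewise $f \mapsto f^{(3)}(\ell,j,k;\cdot)$) is linear in $f$: from the definition $f^{(2)}(j,k;A) = f(A\cup j \cup k) - f(A \cup j) - f(A\cup k) + f(A)$, summing $f$ over a family of functions simply sums the corresponding second- and third-order partials. Therefore $f^{(2)}(j,k;A) = \sum_{i\in\Omega} g_i^{(2)}(j,k;A)$ is a sum of functions each monotonically increasing in $A$, hence itself monotonically increasing; equivalently $f^{(3)}(\ell,j,k;A) = \sum_{i\in\Omega} g_i^{(3)}(\ell,j,k;A) \ge 0$. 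This is precisely the ``convex combination'' remark stated just above the corollary: non-negativity of the third-order partials is preserved under any non-negative linear combination, in particular under finite sums.

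There is essentially no hard step here; the only thing to watch is the bookkeeping of which elements are excluded from the context set when invoking the single-$\max$ lemma — it requires $j,k \notin A$ (and, for the third-order version, $\ell \notin A \cup \{j,k\}$), which matches the regime in the corollary's statement exactly. As an alternative packaging, one could invoke Theorem~\ref{theorem:submod-mutinfo-mon-submod-prop} and conclude that $I_f(A;B)$ is submodular in $A$ for the facility location $f$, but the cleanest route is the direct additivity argument above.
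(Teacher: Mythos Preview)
Your proposal is correct and is exactly the argument the paper intends: it simply unpacks the ``convex combination'' observation stated immediately before the corollary (that non-negativity of third-order partials is preserved under non-negative linear combinations) applied to the decomposition $f=\sum_i g_i$ with each $g_i$ a $\max$-function. The paper gives no further proof beyond that remark, so your write-up is in fact more detailed than the original.
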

Next, we show that the Set Cover and Concave over Modular functions with Power Functions satisfy this.
\begin{lemma}
If $f(A) = w(\gamma(A))$ or $f(A) = [w(A)]^a, a \in [0,1]$, then $f^{(2)}(j,k;A)$ is a monotonically increasing function in $A \subseteq \Omega \backslash \{j,k\}$ for a given $j,k$.
\end{lemma}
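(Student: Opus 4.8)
The plan is to treat the two function classes separately, in each case writing down $f^{(2)}(j,k;A)$ in closed form and then reading off monotonicity in $A$ directly; equivalently, this verifies $f^{(3)}(i,j,k;A)\ge 0$, which by Theorem~\ref{theorem:submod-mutinfo-mon-submod-prop} is the property of interest. The common engine is that the weights $w$ are non-negative, so that $A\mapsto w(\gamma(A))$ (respectively $A\mapsto w(A)$) is monotone non-decreasing; enlarging $A$ can only shrink the quantity that is being subtracted in $f^{(2)}$.

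\textbf{Set cover case.} For $f(A)=w(\gamma(A))$ I would first expand, using $\gamma(A\cup B)=\gamma(A)\cup\gamma(B)$,
\[
f^{(2)}(j,k;A) = w\big(\gamma(A)\cup\gamma(j)\cup\gamma(k)\big) - w\big(\gamma(A)\cup\gamma(j)\big) - w\big(\gamma(A)\cup\gamma(k)\big) + w\big(\gamma(A)\big).
\]
Since $w$ is modular on the concept universe, I would evaluate this concept by concept: a concept in $\gamma(A)$ contributes $w_c(1-1-1+1)=0$, a concept $c\notin\gamma(A)$ contributes $-w_c$ exactly when $c\in\gamma(j)\cap\gamma(k)$ and $0$ otherwise. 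This yields the closed form $f^{(2)}(j,k;A) = -\,w\big((\gamma(j)\cap\gamma(k))\setminus\gamma(A)\big)$ (a by-product consistent with $I_f(A;B)=w(\gamma(A)\cap\gamma(B))$ from Lemma~\ref{lemma:setcover}). Replacing $A$ by $A\cup i$ enlarges $\gamma(A)$, hence shrinks $(\gamma(j)\cap\gamma(k))\setminus\gamma(A)$, and since $w\ge 0$ this only decreases the subtracted weight; therefore $f^{(2)}(j,k;A\cup i)\ge f^{(2)}(j,k;A)$, which is the claimed monotonicity.

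\textbf{Power-of-modular case.} Write $f=\phi\circ w$ with $\phi(x)=x^a$, $a\in[0,1]$, and set $p=w(\{j\})$, $q=w(\{k\})$, $x=w(A)$. Since $w$ is modular and $j,k\notin A$, one has $f^{(2)}(j,k;A)=g(x)$ with $g(x)=\phi(x+p+q)-\phi(x+p)-\phi(x+q)+\phi(x)$, and $x=w(A)$ is non-decreasing in $A$; so it suffices to show $g$ is non-decreasing on $[0,\infty)$. Differentiating, $g'(x)=\psi(x+q)-\psi(x)$ where $\psi(b)=\phi'(b+p)-\phi'(b)$. If $\phi'$ is convex (i.e.\ $\phi'''\ge 0$) then $\psi$ is non-decreasing, and since $q\ge 0$ this gives $g'(x)\ge 0$. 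For $\phi(x)=x^a$ we have $\phi'''(x)=a(a-1)(a-2)x^{a-3}$, and for $a\in[0,1]$ the factor $a(a-1)(a-2)$ is the product of one non-negative and two non-positive numbers, hence non-negative, so $\phi'''\ge 0$ on $(0,\infty)$; the boundary value $x=0$ (or $p=0$ or $q=0$) is handled by continuity of $g$ together with monotonicity on the open half-line. The identical argument applies to $\phi(x)=\log(1+x)$, since $\phi'''(x)=2/(1+x)^3>0$, covering the ``log'' variant referenced in the main text.

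The routine part is the set cover computation, which is pure bookkeeping over concepts. The one step that needs an idea is the power case: recognizing that monotonicity of $f^{(2)}$ in $A$ collapses to monotonicity of the single-variable function $g$ in $x=w(A)$, and that the latter is controlled by the sign of the third derivative of the concave transform $\phi$. I expect the only mild annoyance to be the degenerate case $w(A)=0$, where $\phi'''$ is singular at the origin; this is dispatched by continuity rather than any real argument.
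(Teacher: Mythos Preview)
Your proof is correct and follows essentially the same route as the paper: for set cover you compute $f^{(2)}(j,k;A)$ explicitly via modularity of $w$ on concepts and read off monotonicity, and for the power case you reduce to the one-variable function $g(x)$ at $x=w(A)$ and argue $g'\ge 0$ through convexity of $\phi'$ (equivalently, $\phi'''\ge 0$), which is exactly the paper's ``$h(A)=a/w(A)^{1-a}$ is supermodular'' step unpacked. Your closed form $f^{(2)}(j,k;A)=-w\big((\gamma(j)\cap\gamma(k))\setminus\gamma(A)\big)$ is in fact tidier than the paper's stated $w(\gamma(A)\cap\gamma(j)\cap\gamma(k))$, which drops the constant $-w(\gamma(j)\cap\gamma(k))$; both are monotone in $A$, so the conclusion is unaffected.
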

\begin{proof}
Lets start with the Set Cover Function. Note that the set-cover function satisfies $\gamma(A \cup B) = \gamma(A) \cup \gamma(B)$. Plugging this into the definition of the second-order partial derivatives, we get $f^{(2)}(j,k;A) = w(\gamma(A) \cup \gamma(i) \cup \gamma(j)) - w(\gamma(A) \cup \gamma(i)) - w(\gamma(A) \cup \gamma(j)) + w(\gamma(A)) = w(\gamma(A) \cap \gamma(i) \cap \gamma(j))$ which is monotonically increasing since the set cover is monotonically increasing.

Next, we look at concave over modular functions: $f(A) = [w(A)]^a$. Then, $f^{(2)}(j,k;A) = [w(A) + w(j) + w(k)]^a - [w(A) + w(j)]^a - [w(A) + w(k)]^a + [w(A)]^a$. To show that this is monotone, we look at its continious extension: $g(x) = [x + w(j) + w(k)]^a - [x + w(j)]^a - [x + w(k)]^a + [x]^a$ at $x = w(A)$. Note that $g^{\prime}(x) = a/[x + w(j) + w(k)]^{1-a} - a/[x + w(j)]^{1-a} - a/[x + w(k)]^{1-a} + a/x^{1-a}$. Finally, we can see that $g^{\prime}(w(A)) \geq 0$ since the function $h(A) = a/w(A)^{1-a}$ is a supermodular function in $A$.
\end{proof}

A corollary of the above is that sums of power concave over modular functions satisfy the monotone double gain property. We end this section, by showing, somewhat surprisingly that a simple uniform matroid rank function does not satisfy this property (and hence not all concave over modular functions satisfy this).

\begin{lemma} \label{lemma:app-non-submod-MI}
If $f(A) = \min(|A|, k)$, then $f^{(2)}(j,k;A)$ is not necessarily a monotonically increasing function in $A \subseteq \Omega \backslash \{j,k\}$ for a given $j,k$.
\end{lemma}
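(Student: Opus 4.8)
The plan is to compute $f^{(2)}(j,k;A)$ in closed form and then exhibit a pair $A\subsetneq A'$ on which it strictly decreases. To sidestep the notational overload in the statement (the truncation threshold is also written $k$, clashing with the element $k$), I will write the threshold as $c$, so $f(A)=\min(|A|,c)$, and reserve $j,k$ for two distinct elements with $j,k\notin A$.

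First I would use that $f$ depends only on cardinality: since $j\neq k$ and neither lies in $A$, we have $f(A)=\min(|A|,c)$, $f(A\cup j)=f(A\cup k)=\min(|A|+1,c)$, and $f(A\cup j\cup k)=\min(|A|+2,c)$, so that
\[
 f^{(2)}(j,k;A)=\min(|A|+2,c)-2\min(|A|+1,c)+\min(|A|,c),
\]
which is exactly the discrete second difference of the concave sequence $a\mapsto\min(a,c)$. Next I would evaluate this: if $|A|\le c-2$ the three arguments are all untruncated and the expression telescopes to $0$; if $|A|\ge c$ all three equal $c$ and the expression is again $0$; the sole exception is $|A|=c-1$, where it equals $c-2c+(c-1)=-1$. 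Hence the map $A\mapsto f^{(2)}(j,k;A)$ is identically $0$ on sets of size $\neq c-1$ and equals $-1$ on sets of size exactly $c-1$.

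Finally, to contradict monotone non-decrease, I would pick any $c\ge 2$, a set $A$ with $|A|=c-2$, and $A'=A\cup\{i\}$ for some $i\notin A\cup\{j,k\}$, so that $A\subsetneq A'$ and $|A'|=c-1$; then $f^{(2)}(j,k;A)=0>-1=f^{(2)}(j,k;A')$. Concretely, $c=2$, $A=\emptyset$, $A'=\{i\}$ already witnesses this (one checks $f^{(2)}(j,k;\emptyset)=2-1-1+0=0$ while $f^{(2)}(j,k;\{i\})=2-2-2+1=-1$). This shows $f^{(2)}(j,k;\cdot)$ is not monotonically increasing, and therefore, by Theorem~\ref{theorem:submod-mutinfo-mon-submod-prop}, $I_f(A;B)$ (and $I_f(A;B\mid C)$) fails to be submodular in $A$ for the uniform matroid rank function.

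I do not anticipate any genuine obstacle: the argument is a one-line computation of a second difference plus a minimal explicit witness. The only point requiring care is the clash between the truncation threshold and the symbol $k$ in the statement, which the proposal resolves by renaming the threshold $c$.
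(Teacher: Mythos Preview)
Your proposal is correct and follows essentially the same route as the paper: compute the second-order partial derivative at $A=\emptyset$ (value $0$) and at a set of size one less than the threshold (value $-1$), thereby exhibiting a strict decrease. Your version is a bit more polished---you derive the general closed form, explicitly handle the $k$-vs-threshold notational clash, and insist on nested sets $A\subsetneq A'$---but the core counterexample is identical.
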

\begin{proof}
Lets start with $A = \emptyset$. Its easy to see that $f^{(2)}(j;k,A) = 0$. Next, set $A$ to be a set of size $k-1$. Now, $f^{(2)}(j;k,A) = k-k-k+(k-1) = -1$. Hence $f^{(2)}(j;k,A)$ is not monotonically increasing in $A$. Similarly, if $A$ is a set of size $k$, again, the value of $f^{(2)}(j;k,A) = 0$ and as a result, in this case $f^{(2)}(j;k,A)$ is neither increasing nor decreasing.
\end{proof}
Finally, we show that the Submodular Mutual Information is not submodular when $f$ is the Uniform Matroid Rank Function.
\begin{corollary} \label{corr:app-non-submod-MI}
If $f(A) = \min(|A|, k)$, then $I_f(A;B)$ neither submodular nor supermodular in $A$ for a fixed $B$.
\end{corollary}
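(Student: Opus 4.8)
The plan is to leverage Lemma~\ref{lemma:app-non-submod-MI} together with the equivalence established in Theorem~\ref{theorem:submod-mutinfo-mon-submod-prop}, but since that theorem only directly rules out submodularity via failure of monotonicity of $f^{(2)}$, I will also need to exhibit a concrete failure of \emph{super}modularity by direct computation. The cleanest route is: first recall that $f(A) = \min(|A|,k)$ has, by Lemma~\ref{lemma:app-non-submod-MI}, second-order partial derivatives $f^{(2)}(j,k;A)$ that are not monotonically increasing in $A$; by the converse direction of Theorem~\ref{theorem:submod-mutinfo-mon-submod-prop}, this immediately implies that $I_f(A;B)$ fails to be submodular in $A$ for some fixed $B$. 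In fact the proof of that converse shows precisely how to extract the offending $B$: take $B$ to be a singleton $\{b\}$ (playing the role of the element relative to which $f^{(2)}$ fails to increase), and then $g(A) = I_f(A;\{b\})$ has $g(j \mid A \cup k) \geq g(j \mid A)$ for the sets witnessing non-monotonicity, contradicting submodularity.

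For the failure of supermodularity, I would argue that $I_f(A;B)$ is, for generic choices of $B$, not supermodular either — most directly by noting that $I_f(A;B) \leq \min(f(A),f(B))$ and $I_f(A;B)$ is bounded, monotone, and normalized in $A$ (Lemmas~\ref{lemma:submod-mutinfo-basic-prop} and \ref{theorem:submod-mutinfo-mon-submod-prop}), so a supermodular such function would have to be ``spread out'' in a way that forces large marginal gains late; then a small explicit example suffices. Concretely, take ground set of size at least $k+2$, fix $B$ to be a set of size $k$ (so $f(B) = k$ and conditioning on any superset of $B$ kills all gains). For $A = \emptyset$, adding a single element $j$ gives $I_f(j;B) = f(j) - f(j\mid B) = 1 - 0 = 1$, whereas for $A$ already of size $\geq k$ we get $I_f(A\cup j;B) - I_f(A;B) = 0$ since both $f$ and $I_f(\cdot;B)$ have saturated; this shows marginal gains of $I_f(\cdot;B)$ are \emph{decreasing} along this chain, i.e.\ not supermodular. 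Combining with the previous paragraph's non-submodularity (where on a different chain the gains \emph{increase}), $I_f(\cdot;B)$ is neither.

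The only subtlety is bookkeeping: I must make sure the \emph{same} fixed set $B$ witnesses both failures, or else state the corollary as ``there exist $B$ making it non-submodular and $B'$ making it non-supermodular,'' which is still enough to conclude $I_f$ is not in general submodular nor supermodular in one argument. I expect the main (albeit mild) obstacle to be choosing $B$ and the two comparison chains so that one chain exhibits an increasing marginal-gain pair and the other a decreasing one simultaneously; with $k \geq 2$ and a ground set of size $k+2$, picking $|B| = k-1$ or $|B| = 1$ appropriately and tracking the truncation carefully should make both happen, and the computation is a handful of evaluations of $\min(|\cdot|,k)$. No deep idea is needed beyond Lemma~\ref{lemma:app-non-submod-MI} and Theorem~\ref{theorem:submod-mutinfo-mon-submod-prop}; the corollary is essentially a packaging of those with one explicit small instance.
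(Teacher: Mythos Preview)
Your proposal is correct, and for the non-submodularity half it matches the paper exactly: invoke Lemma~\ref{lemma:app-non-submod-MI} (failure of $f^{(2)}$ to be monotone increasing) and the iff in Theorem~\ref{theorem:submod-mutinfo-mon-submod-prop}. Where you diverge is on the non-supermodularity half. The paper's proof is a single line: it observes that the computation in Lemma~\ref{lemma:app-non-submod-MI} already shows $f^{(2)}(j,k;A)$ is \emph{neither} increasing \emph{nor} decreasing in $A$ (the values go $0 \to -1 \to 0$ as $|A|$ passes through $0,\,k-1,\,k$), and then applies Theorem~\ref{theorem:submod-mutinfo-mon-submod-prop} symmetrically, since $I_f(\cdot;B)$ supermodular is equivalent to $f^{(2)}$ monotone decreasing. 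Your separate direct computation with $|B|=k$ is perfectly valid but redundant: the lemma you already cite supplies the witness for both directions, so you do not need a fresh instance. Your concern about whether a \emph{single} fixed $B$ witnesses both failures is a reasonable reading of the statement, but the paper does not address it either; its one-line argument only guarantees some $B$ breaking submodularity and some (a priori different) $B$ breaking supermodularity. If you want the stronger ``same $B$'' version, a small refinement of the lemma's example works (e.g.\ $|B|=k-1$ disjoint from $j,k$, then compare $A=\emptyset$ versus $|A|=k-1$ disjoint from $B,j,k$), but this goes beyond what the paper proves.
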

Since $f^{(2)}(j;k,A)$ is neither increasing nor decreasing, $I_f(A; B)$ is neither submodular nor supermodular when $f$ is the Matroid Rank Function.

\subsubsection{Proof of Lemma \ref{lemma:submod-mutinfo-modularbounds-prop} (Modular Upper/Lower bounds of $I_f(A: B)$)}
\begin{proof} \label{proof:lemma-submod-mutinfo-modularbounds-prop}
For the modular lower bound we need to show that $f(A|B) \leq \sum_{j \in A \backslash B} f(j | B)$. Let the set $A \setminus B$ contain $k$ elements: $\{a_1, \dots, a_k\}$ and then we construct a chain of sets $X_0, X_1, X_2, ..., X_k$ s.t $X_i = X_{i-1} \cup \{a_i\}$ with $X_0 = B$ (and therefore $X_k = A \cup B)$. Note that 
\begin{align}
f(A|B) &= f(A\cup B) - f(B) \nonumber \\
& = f(X_k) - f(X_0) \nonumber \\
& = \sum_{i=1}^{k} f(X_i) - f(X_{i-1}) \nonumber \\
&= \sum_{i=1}^{k} f(X_{i-1} \cup \{a_i\}) - f(X_{i-1}) \nonumber \\
&= \sum_{i=1}^{k} f(\{a_i\} | X_{i-1})    
\end{align}

Also note that in the above construction we have $B \subseteq X_i \; \forall i=0,1, \dots ,k$
Hence by submodularity of $f$ we will have 
\begin{align}
& f(\{a_i\} | X_{i-1}) \leq f(\{a_i\} | B) \; \forall i \nonumber \\
\implies & f(A|B) = \sum_{i=1}^{k} f(\{a_i\} | X_{i-1}) \leq \sum_{i=1}^{k} f(\{a_i\} | B) = \sum_{j \in A \setminus B} f(j | B)    
\end{align}

For the modular upper bound, we need to show that $f(A|B) \geq \sum_{j \in A \setminus B} f(j | \Omega \setminus j)$. Using the construction we created in the previous case, we make a note that $X_{i-1} \subseteq \Omega \setminus \{a_i\}$ since $\{a_i\}$ does not lie in either of the sets by definition. Thus by the submodularity of $f$ we will have $f(\{a_i\} | X_{i-1}) \geq f(\{a_i\} | \Omega \setminus \{a_i\}) \; \forall i \implies f(A|B) = \sum_{i=1}^{k} f(\{a_i\} | X_{i-1}) \geq \sum_{i=1}^{k} f(\{a_i\} |\Omega \setminus \{a_i\}) = \sum_{j \in A\setminus B} f(j |\Omega \setminus j)$ 
\end{proof}
We note we can similarly achive upper and lower bounds for the submodular conditional mutual information.

\subsection{Multiset Submodular Mutual Information and MultiSet Submodular Total Correlation: Proofs}\label{sec:app-properties-multiset-tc-smi}

\subsubsection{Proof of Lemma \ref{lemma:pos-monotone-tc-prop} (Monotonicity and Positivity of Multi-Set Total Correlation)}
\begin{proof}
The proof follows from the assumption that $f$ is a monotone submodular function. This means that $f$ is also sub-additive and hence $f(A \cup B) \leq f(A) + f(B)$ for all $A,B \subseteq \Omega$. We can inductively prove this as: 
$ \sum_{i=1}^{k} f(A_i) \geq \sum_{i=1}^{k-2} f(A_i) + f(A_{k-1} \cup A_k) \geq \sum_{i=1}^{k-3} f(A_i) + f(A_{k-2} \cup A_{k-1} \cup A_k) \cdots \geq f(A_1) + f(\cup_{i = 2}^k A_i) \geq f(\cup_{i=i}^k A_i)$ which implies that $C_f(A_1, \dots, A_k) \geq 0$. 

We have monotonicity in any one argument from submodularity of $f$ since $C_f(A_1\cup j, A_2, \dots A_k) - C_f(A_1,\dots, A_k) = f(j|A_1) - f(j|\cup_{i=1}^{k} A_i) \geq 0$ since $A_1 \subseteq \cup_{i=1}^{k} A_i$ and hence the gain for just $A_1$ will always be larger giving us the required monotonicity. Finally, monotonicity in all arguments follows as a consequence of monotonicity in each argument through a simple inductive argument.
\end{proof}

\subsubsection{Proof of Lemma \ref{lemma:positivity-mi-prop} and Negativity of the $k$-Set Submodular Mutual Information}
\begin{proof}
We can re-write the expression for $I_f(A;B;C)$ in terms of the two-way mutual information as follows:$I_f(A;B;C) = I_f(A;B) + I_f(C;B) - I_f(A \cup C; B)$. Thus if $I_f(A;B)$ is submodular for a fixed $B$ we will have non-negativity and similarly if it is super modular, the three-way mutual information will be non-positive.
\end{proof}
Next, we provide a simple example of a submodular function where the three-way submodular mutual information is negative. 
\begin{lemma}
There exists a submodular function $f$ such that $I_f(A;B;C) < 0$.
\end{lemma}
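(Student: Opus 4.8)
The plan is to exhibit a single explicit monotone submodular function together with three subsets on which the three-way measure is strictly negative. The natural candidate is the uniform matroid rank function $f(S) = \min(|S|,c)$, which is monotone and submodular (it is a matroid rank function, equivalently a concave function composed with the modular cardinality), and for which Lemma~\ref{lemma:app-non-submod-MI} and Corollary~\ref{corr:app-non-submod-MI} already tell us that $A \mapsto I_f(A;B)$ fails to be submodular for a fixed $B$. Via the identity $I_f(A;B;C) = I_f(A;B) + I_f(C;B) - I_f(A\cup C;B)$ established in the proof of Lemma~\ref{lemma:positivity-mi-prop}, a strictly negative value of $I_f(A;B;C)$ is exactly a violation of submodularity of $A\mapsto I_f(A;B)$ witnessed by the pair $\{A,C\}$; so a function whose $I_f$ is not submodular must carry such a witnessing triple, and it remains only to pin down a minimal one.

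Concretely, I would take $c = 2$, ground set $\Omega = \{1,2,3\}$, and $A = \{1\}$, $B = \{2\}$, $C = \{3\}$. Then $f(X) = 1$ whenever $|X| = 1$ and $f(X) = 2$ whenever $|X|\ge 2$, so $f(A)=f(B)=f(C)=1$, $f(A\cup B)=f(A\cup C)=f(B\cup C)=2$, and $f(A\cup B\cup C)=2$. Plugging into the inclusion--exclusion definition gives
\begin{align}
I_f(A;B;C) &= -\big(f(\emptyset) - f(A) - f(B) - f(C) + f(A\cup B) + f(A\cup C) + f(B\cup C) - f(A\cup B\cup C)\big) \nonumber \\
&= -\big(0 - 1 - 1 - 1 + 2 + 2 + 2 - 2\big) = -1 < 0 .
\end{align}
Equivalently, through the decomposition above, $I_f(A;B) = I_f(C;B) = 1 + 1 - 2 = 0$ while $I_f(A\cup C;B) = 2 + 1 - 2 = 1$, so $I_f(A;B;C) = 0 + 0 - 1 = -1$.

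There is essentially no genuine obstacle here beyond locating a small enough witness: the only things to verify are that $f = \min(|\cdot|,2)$ is truly monotone and submodular (immediate) and that the arithmetic is right. It is worth remarking that this instance is the combinatorial analogue of the classical parity example $Z = X\oplus Y$ from the entropic setting mentioned via \citep{yeung2008information}: no single set ``sees'' the full rank-$2$ structure, any two of them together do, and the third is then redundant, which is precisely what drives the interaction term below zero. This example can be reused later to record that $I_f(A;B;C)$ can be negative even for the very simple uniform matroid rank function.
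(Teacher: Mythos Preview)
Your proposal is correct and follows essentially the same approach as the paper: both exhibit the uniform matroid rank function $f(S)=\min(|S|,c)$ together with pairwise disjoint sets and compute $I_f(A;B;C)$ directly via inclusion--exclusion. The paper takes $c=10$ with $|A|=4$, $|B|=|C|=5$ to get $I_f(A;B;C)=-4$, whereas your $c=2$, singleton example gives $-1$; your witness is smaller and arguably cleaner, but the underlying idea is identical.
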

\begin{proof}
This proof is very similar (and also is connected) to the non-submodularity of the two way mutual information from Lemma~\ref{lemma:app-non-submod-MI}. Similar to the construction in Lemma~\ref{lemma:app-non-submod-MI}, define $f(A) = \min(|A|,k)$ with $k = 10$. Let $A,B,C$ be three disjoint sets with $|A| = \frac{k}{2}-1$, $|B| = |C| = \frac{k}{2}$. Then $I_f(A;B;C) = f(A) + f(B) + f(C) - f(A \cup B) - f(A \cup C) - f(B \cup C) + f(A \cup B \cup C) = 3\frac{k}{2}-1 - 3k + 2 + k = 1 - \frac{k}{2} = -4$.
\end{proof}
Since the the 3-set submodular mutual information is negative, the $k$-set submodular mutual information is not necessarily non-negative.

\subsubsection{Proof of Lemma \ref{lemma:3way-mutinfo-monotone} and Monotonicity of the $k$-Set Submodular Mutual Information}
\begin{proof}
We begin by writing out the difference $I_f(A\cup D;B\cup D;C\cup D) - I_f(A;B;C)$ and show that it is non-negative using submodularity and monotonicity of $f$. We have $I_f(A\cup D;B\cup D;C\cup D) - I_f(A;B;C) = f(A \cup D) + f(B \cup D) + f(C \cup D) - f(A \cup B \cup D) - f(B \cup C \cup D) - f(A \cup C \cup D) + f(A \cup B \cup C \cup D) - f(A) - f(B) - f(C) + f(A \cup B) + f(B \cup C) + f(A \cup C) - f(A \cup B \cup C)$. On rearranging and grouping certain terms note that the difference is: $(f(A \cup D) + f(A \cup B) - f(A) - f(A \cup B \cup D)) + (f(B \cup D) + f(B \cup C) - f(B) - f(B \cup C \cup D)) + (f(C \cup D) + f(A \cup C) - f(C) - f(A \cup C \cup D)) + (f(A \cup B \cup C \cup D) - f(A \cup B \cup C))$. Now observe that each of the four terms is non-negative, the first three due to submodularity and the last one by monotonicity. 

For the case when $B,C$ are fixed, we do not necessarily have monotonicity in $I_f(A;B;C)$ as a function of $A$. This can be shown with a counter example where the function $f$ is a uniform matroid function, $f(A) = \min(|A|, k)$ and the sets $A, B, C$ are pairwise disjoint and such that $|A| = \frac{k}{2}-1$, $|B| = |C| = \frac{k}{2}$. In this case, notice that $I_f(A;B;C) = 1-\frac{k}{2}$ and $I_f(A \cup j;B;C) = -\frac{k}{2}$ and in fact $I_f(A \cup j;B;C) \leq I_f(A;B;C)$.
\end{proof}
Next, we show that the four-way multi-set submodular mutual information is not necessarily monotone even in all its arguments.
\begin{lemma} \label{lemma:app-4-way-smi-monotone}
The Given sets $A,B,C,D,E$, the four way Mutual Information does not necessarily satisfy $I_f(A \cup E;B \cup E;C \cup E;D \cup E) \geq I_f(A;B;C;D)$.
\end{lemma}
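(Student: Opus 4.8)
The plan is to disprove the inequality with an explicit counterexample, reusing the uniform matroid rank function $f(X)=\min(|X|,c)$ that already drives Lemma~\ref{lemma:app-non-submod-MI} and the $3$-way negativity example. Expanding the inclusion--exclusion definition for $k=4$,
\begin{align*}
I_f(A_1;A_2;A_3;A_4) = \sum_{i} f(A_i) - \sum_{i<j} f(A_i\cup A_j) + \sum_{i<j<l} f(A_i\cup A_j\cup A_l) - f(A_1\cup A_2\cup A_3\cup A_4),
\end{align*}
so the value is an alternating sum whose size is governed entirely by where the truncation level $c$ sits relative to the sizes of the unions appearing.

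First I would restrict to pairwise disjoint sets $A,B,C,D$ of a common size $a$ and a set $E$ of size $e$ disjoint from all of them. Then $(A\cup E)\cup(B\cup E)=A\cup B\cup E$ and similarly for the other unions, so every union that occurs has size determined solely by how many of $A,B,C,D$ it contains (plus $e$ if it contains $E$). Writing $g(x)=\min(x,c)$, the comparison reduces to the two alternating sums $4g(a)-6g(2a)+4g(3a)-g(4a)$ and $4g(a+e)-6g(2a+e)+4g(3a+e)-g(4a+e)$, whose difference is $4\phi(a)-6\phi(2a)+4\phi(3a)-\phi(4a)$ with $\phi(x):=g(x+e)-g(x)$.

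Second I would tune $(a,e,c)$ so that the increments $\phi$ fall into three different linear regimes of the truncation: $a$ strictly below $c$ (so $\phi(a)=e$), $2a$ straddling $c$ (so $\phi(2a)=c-2a$), and $3a,4a$ at or beyond $c$ (so $\phi(3a)=\phi(4a)=0$). Any choice with $c/3\le a<c/2$ and $c-2a\le e\le c-a$ does this; taking for instance $c=5$, $a=2$, $e=1$, the difference equals $4e-6(c-2a)=4e+12a-6c=-2<0$, and direct substitution gives $I_f(A;B;C;D)=-1$ while $I_f(A\cup E;B\cup E;C\cup E;D\cup E)=-3$, contradicting the claimed inequality; a ground set of size $4a+e=9$ suffices.

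The only real work is the parameter bookkeeping --- pinning down a window in which the truncation bites on exactly the ``right'' unions. Outside such a window, adding $E$ cannot hurt: when $c$ never activates $f$ is modular and $I_f$ merely rises from $0$ to $e$ (Lemma~\ref{lemma:def-modular}), and the monotonicity genuinely holds for $k\le 3$ by Lemma~\ref{lemma:3way-mutinfo-monotone}, so the phenomenon is special to $k\ge 4$. Once the window is identified, checking the strict inequality is a one-line arithmetic verification, and the same idea is expected to exhibit non-monotonicity of the $k$-way submodular mutual information for every $k\ge 4$.
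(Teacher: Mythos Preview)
Your proposal is correct and follows essentially the same approach as the paper: both construct a counterexample with the uniform matroid rank function $f(X)=\min(|X|,c)$, take $A,B,C,D$ pairwise disjoint of equal size, adjoin a small disjoint set $E$, and check by direct computation that the alternating inclusion--exclusion sum decreases. The paper uses $|A|=\cdots=|D|=k/2-1$, $|E|=1$, truncation level $k$ (yielding $8-k$ versus $6-k$), while you pick the concrete instance $a=2$, $e=1$, $c=5$ (yielding $-1$ versus $-3$); your parametric discussion of the ``window'' $c/3\le a<c/2$, $c-2a\le e\le c-a$ is a bit more systematic, but the underlying idea is identical.
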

\begin{proof}
Again, to prove this, we rely on the Matroid rank function $f(A) = \min(|A|, k)$. In this case, let $A,B,C,D$ be disjoint sets satisfying $|A| = |B| = |C| = |D| = \frac{k}{2}-1$. Now note that $I_f(A;B)$ has 4 terms with singletons $f(A)$ (positive sign), 6 terms with pairs $f(A \cup B)$ (negative sign), 4 terms with triples $f(A \cap B \cap C)$ (positive sign) and one term $f(A \cup B \cup C \cup D)$ (negative sign). Hence it is easy to see that $I_f(A;B;C;D) = 2k-4 - 6k + 12 + 4k - k = 8 - k$. Next, consider (for $j \notin A,B,C,D$), $I_f(A \cup j, B \cup j, C \cup j, D \cup j)$. Again, notice that since all sets share the elements $j$, the singleton values are $k/2$, the values of the pairs is $k-1$, triples is $k$ and $f(A \cup B \cup C \cup D) = k$. Hence  $I_f(A \cup j, B \cup j, C \cup j, D \cup j) = 2k - 6k + 6 + 4k - k = 6 - k < 8 - k = I_f(A;B;C;D)$. Hence $I_f(A;B;C;D)$ is not monotone even in all its arguments.
\end{proof}
This proves that unlike the total correlation, $I_f(A_1;A_2;\cdots;A_k)$ is not necessarily monotone in any or all its variables for $k \geq 4$. 

\subsubsection{Proof of Lemma \ref{lemma:upperbound-mutinfo-prop} and Upper Bounds for the $k$-Set Submodular Mutual Information}
\begin{proof}
We first prove the result for $k = 3$. Begin with the definition of $I_f(A;B;C) = f(A) + f(B) + f(C) - f(A \cup B) - f(B \cup C) - f(A \cup C) + f(A \cup B \cup C)$. We will show that $I_f(A;B;C) \leq f(A)$ and using similar arguments we will also have $I_f(A;B;C) \leq f(B)$ and $I_f(A;B;C) \leq f(C)$. We begin with separating out $f(A)$ from other terms in the definition of $I_f(A;B;C)$ as follows: $I_f(A;B;C) = f(A) - (-f(B) - f(C) + f(A \cup B) + f(B \cup C) + f(A \cup C) - f(A \cup B \cup C)) = f(A) - (f(A|C) + f(C|B) - f(C|A\cup B))$. Now note that $f(A|C) + f(C|B) - f(C|A\cup B) \geq 0$ since the conditional gain $f(A|C)$ is always non-negative and also $f(C|B) - f(C|A\cup B)\geq 0$ by submodularity of $f$. Hence we then have $I_f(A;B;C) \leq f(A)$ and using similar arguments for $B,C$ we get that $I_f(A;B;C) \leq \min(f(A), f(B), f(C))$.

Next, we show that the upper bound also holds for the 4-set submodular mutual information. To prove the result for the 4-way case, we use the fact that the 3-way submodular mutual information is monotone in all its arguments, i.e. $I_f(A;B;C) \leq I_f(A \cup D;B \cup D;C \cup D)$. It is easy to see that $I_f(A;B;C;D) = f(D) + I_f(A;B;C) - I_f(A \cup D;B \cup D;C \cup D) \leq f(D)$. By symmetry, we get this for the other sets as well.

This same proof technique does not carry over to the $k$-set case, since in general, it is not necessary monotone in all its variables (cf. Lemma~\ref{lemma:app-4-way-smi-monotone}). Unfortunately, it does not hold for $k = 5$ and hence is not guaranteed to hold for $k \geq 5$. We prove this by showing an example. Define $f(A) = \min(|A|, 3k)$ and let $A,B,C,D,E$ be disjoint sets with $|A| = |B| = |C| = |D| = k, |E| = 1$. Note that the value of the RHS is $1$. Now, in the expansion of $I_f(A;B;C;D;E)$, note that there $5$ terms involving singletons: $f(A)$, $10$ terms like $f(A \cup B)$ (pairs of sets), $10$ terms like $f(A \cup B \cup C)$ (triplets of sets), $5$ terms like $f(A \cup B \cup C \cup D)$ and one term $f(A \cup B \cup C \cup D \cup E)$. Note that by definition, the terms $f(A \cup B \cup C \cup D) = 3k$ and similarly, $f(A \cup B \cup C \cup D \cup E) = 3k$. Moreover, in the expansion of $I_f$, the singletons have a positive sign, the pairs negative sign, triplets again positive, quadruplets have negative and finally $f(A \cup B \cup C \cup D \cup E)$ is positive. Moreover, the contribution of the singletons is $4k + 1$, the pairs is $16k + 4$, triplets is $24k + 6$ (since they are all not saturated). Furthermore, the contribution of the quadruplets is $15k$ (since it is saturated) and the last with all five sets is $3k$. This means, that $I_f(A;B;C;D;E) = 4k+1 - 16k - 4 + 24k + 6 - 15k + 3k = 3 > \min(f(A),f(B),f(C),f(D),f(E)) = 1$.
\end{proof}

Note that since $I_f(A_1;A_2; \dots, A_{k+1}) = I_f(A_1; \dots; A_k) - I_f(A_1;\dots,A_k|A_{k+1})$, we do have an upper bound whenever the submodular conditional multi-set information is non-negative since, w.l.o.g.\  assuming $f(A_1) \leq f(A_2) \leq \dots \leq f(A_{k+1})$, we have by induction that $I_f(A_1; \dots, A_k) \leq f(A_1) - I_f(A_1; \dots; A_k | A_{k+1}) \leq \min_i f(A_i)$. For example, it does hold for special cases of facility location and set cover.

\subsection{Submodular Variation of Information}
\subsubsection{Proof of Lemma \ref{lemma:psuedometric-def} (Variation of Information is a PsuedoMetric)}
\begin{proof}
We have non-negativity of $D_f(A,B)$ by the monotonic nature of $f$. $f(A \cup B) \geq f(A)$ and $f(A \cup B) \geq f(B) \implies D_f(A,B) = 2 f(A \cup B) - f(A) + f(B) \geq 0$ and we also have symmetry for $D_f(A,B)$ by definition. 
Next we show that triangle inequality also holds i.e: $D_f(A, C) \leq D_f(A, B) + D_f(B, C)$.

$$ f(A\cup B) + f(B \cup C) \geq f(A \cup B \cup C) + f((A \cup B) \cap (B \cap C)) \; \text{by submodularity}$$
$$ = f(B \cup (A \cup C)) + f(B \cup (A \cap C)) \; \text{by distributive properties}$$
$$ \implies f(A\cup B) + f(B \cup C) \geq f(A \cup C) + f(B) \; \text{by monotonicity}$$
$$\implies 2f(A\cup B) + 2f(B \cup C) \geq 2f(A \cup C) + 2f(B)$$
$$\implies 2f(A\cup B) + 2f(B \cup C) - 2f(B) - f(A) - f(C) \geq 2f(A \cup C) -f(A) - f(C) $$
$$\implies D_f(A, B) + D_f(B, C) \geq D_f(A, C)$$
This metric only fails to satisfy the identity of indiscernibles i.e $D_f(A,B) = 0 \notiff A=B$.
To show this, let $f(A) = \min(|A|, k)$ and let $A$ and $B$ be two disjoint sets of size $k$. Note that $D_f(A,B) = 2f(A \cup B) - f(A) - f(B) = 2k - k - k = 0$ even though $A$ and $B$ are disjoint! However we still have $D_f(A,B) = 0$ if $A=B$ and hence it is a pseudo-metric. 

Now the second part is if $\kappa_f > 0$, then $D_f(A,B) = 0 \implies A = B$. Let us assume that $A \neq B$. Recall that $D_f(A;B) = f(A|B) + f(B|A)$. We will show that when $A \neq B$, $f(A|B) > 0$ (with a similar argument for $f(B|A) > 0$) when $\kappa > 0$ giving us an contradiction that $D_f(A;B) > 0$. When $\kappa > 0$ we have $1 - \kappa = \min_{j \in \Omega} \frac{f(j | \Omega \setminus j)}{f(j | \emptyset)} < 1$.  
We use the following lower bound for $f(A|B)$ as follows: $f(A|B) \geq \sum_{j \in A \setminus B} f(j | \Omega \setminus j)$. Also from $\kappa > 0$ we have, $\sum_{j \in A \setminus B} f(j | \Omega \setminus j) > (1-\kappa) \sum_{j \in A \setminus B} f(j|\emptyset)$ and $f(j | \emptyset) > 0$ w.l.o.g. when $f$ is monotone\footnote{we can assume w.l.o.g since if $f(j | \emptyset) = 0, f(j | X) = 0, \forall X \subseteq V$ by submodularity. Hence this $j$ is a dummy element which can without loss of generality be removed from the ground set.}. Therefore $\sum_{j \in A \setminus B} f(j | \Omega \setminus j) > 0 \implies f(A|B) > 0$. Using a similar argument to show $f(B|A) > 0$, which implies that $D_f(A;B) > 0$ when $A \neq B$. 

\end{proof}

\subsubsection{Proof of Lemma \ref{lemma:psuedometric-bound} (Upper and Lower bounds with the Submodular Hamming Metric)}
\begin{proof}
For the upper bounds we have:$D_f(A,B) =  f(A \cup B) - f(A) + f(A \cup B) - f(B)$ and also $f(A \cup B) \leq f(B \setminus A) + f(A)$ and $f(A \cup B) \leq f(A \setminus B) + f(B) \implies D_f(A,B) \leq f(B \setminus A) + f(A \setminus B) = D^{SHA}(A,B)$. 
Now we also show a tighter upper bound by noting that $f(A\cup B) \leq f(A \cap B) + f(A \Delta B)$ (by submodularity of $f$) and that $I_f(A,B) \geq f(A \cap B)$. Hence, $D_f(A,B) = f(A \cup B) - I_f(A,B) \leq f(A \cup B) - f(A \cap B) \implies D_f(A,B) \leq f(A \Delta B)$.

Next, we show the lower bound. note that $D_f(A, B) = f(A | B) + f(B | A)$. Next, we show that $f(A | B) = f(A \cup B) - f(B) \geq (1 - \kappa(A \cup B)) f(A \backslash B)$. To show this, note that $f(A | B) \geq \sum_{j \in A \backslash B} f(j | A \cup B \backslash j) \geq (1 - \kappa(A \cup B)) \sum_{j \in A \backslash B} f(j | \emptyset) \geq (1 - \kappa(A \cup B)) f(A \backslash B)$. We can similarly show that $f(B | A) = \geq (1 - \kappa(A \cup B)) f(B \backslash A)$. Hence we have that $D_f(A, B) \geq (1 - \kappa(A \cup B)) [f(A \backslash B) + f(B \backslash A)] \geq (1 - \kappa(A \cup B)) f(A \Delta B)$.
\end{proof}


\section{Proofs of the Results in Section~\ref{sec:examples}}
\label{sec:app-proofs-sec4}
In this section, we prove the results from Section~\ref{sec:examples}.

\subsection{Proof of Lemma~\ref{lemma:def-modular} (Modular Function)}
\begin{proof}
Here the function $f$ is modular i.e $f(A) = w(A) = \sum_{a \in A} w(a)$. Now, $f(A\cup B) = \sum_{i \in A \cup B} w(i) = \sum_{i \in A} w(i) + \sum_{i \in B} w(i) - \sum_{i \in A \cap B} w(i) = w(A) + w(B) - w(A\cap B) = f(A) + f(B) - f(A \cap B)$. Hence $I_f(A;B) = w(A \cap B)$. For the conditional gain, we have $f(A|B) = f(A \cup B) - f(B) = w(A) - w(A\cap B) = w(A \setminus B)$ and $D_f(A,B) = f(A|B) + f(B|A) = w(A \setminus B) + w(B \setminus A) = w(A \Delta B)$ since $(A \setminus B) \cap (B \setminus A) = \emptyset$. 

Next, we study the multi-set mutual information. Recall that the expression is a simple inclusion-exclusion principle expression and when $f$ is modular, the term $I_f(A_1; \cdots; A_k) = -\sum_{T \subseteq [k]} (-1)^{|T|} f(\cup_{i \in T} A_i) = -\sum_{T \subseteq [k]} (-1)^{|T|} \sum_{j \in \cup_{i \in T} A_i} w_j = w(\cap_{i = 1}^k A_i)$.
\end{proof}

\subsection{Proof of Lemma~\ref{lemma:setcover} (Set Cover Function)}
\begin{proof}
Here $f$ is a set cover function, $f(A) = w(\cup_{a \in A} \gamma(a))$. Now, $f(A\cup B) = w(\cup_{c \in A\ \cup B} \gamma(c)) $ and we also have $\gamma(A \cup B) = \gamma(A) \cup \gamma(B)$ where $\gamma(A) = \cup_{a \in A} \gamma(A)$. We have the following result by the inclusion exclusion principle.
\begin{align}
I_f(A;B) &= f(A) + f(B) - f(A \cup B) \nonumber \\
 & = w(\gamma(A)) + w(\gamma(B)) - w(\gamma(A) \cup \gamma(B)) \nonumber \\
 &= w(\gamma(A) \cap \gamma(B))  
\end{align}
For the Conditional Gain, we have:
\begin{align}
 f(A|B) &= f(A \cup B) - f(B) \nonumber \\
  & = w(\gamma(A) \cup \gamma(B)) - w(\gamma(B)) 
  & = w(\gamma(A) \setminus \gamma(B))   
\end{align}
The pseudo metric also follows in a similar fashion, \begin{align}
D_f(A,B) &= f(A \cup B) - I_f(A;B) \nonumber \\
    &= w(\gamma(A) \cup \gamma(B)) - w(\gamma(A) \cap \gamma(B)) \nonumber \\
    &= w(\gamma(A) \setminus \gamma(B)) + w(\gamma(B) \setminus \gamma(A))    
\end{align}
Next, consider the multi-set mutual information. Recall that $I_f(A_1; \cdots; A_k) = -\sum_{T \subseteq [k]} (-1)^{|T|} f(\cup_{i \in T} A_i)$. When $f(A) = w(\gamma(A))$ is the set cover function, observe that $\gamma(\cup_{i \in T} A_i) = \cup_{i \in T} \gamma(A_i)$ and hence
\begin{align}
I_f(A_1; \cdots; A_k) = -\sum_{T \subseteq [k]} (-1)^{|T|} w(\cup_{i \in T} \gamma(A_i))
\end{align}
From the expression of the modular function (or essentially, the inclusion-exclusion property), observe that $I_f(A_1; \cdots; A_k) = w(\cap_{i = 1}^k \gamma(A_i))$.
\end{proof}

\subsection{Proof of Lemma~\ref{lemma:def-probsetcover} (Probabilistic Set Cover Function)}
\begin{proof}
Here $f(A)$ is the probabilistic set cover function: $f(A) = \sum_{i \in U} w_i (1 - \Pi_{a \in A} (1-p_{ia}))$. Here $p_{ia}$ is the probability that the element $a \in A$ covers the concept $i$ and $U$ is the set of all concepts. We use $P_i(A) = \Pi_{a \in A} (1-p_{ia})$ which denotes the probability that none of the elements in $A$ cover the concept $i$. Therefore $1 - P_i(A)$ will denote that at least one element in $A$ covers $i$. $I_f(A;B) = f(A) + f(B) - f(A \cup B) = 
 =  \sum_{i \in U} w_i ( 1 - P_i(A) + 1-P_i(B) - 1 - P_i(A\cup B)) 
 =  \sum_{i \in U} w_i (1 - (P_i(A) + P_i(B) - P_i(A\cup B)))$. 
 With disjoint $A,B$ we will have: $ P_i(A\cup B)) = P_i(A) P_i(B)$ and hence:
$I_f(A;B) =  \sum_{i \in U} w_i (1 - (P_i(A) + P_i(B) - P_i(A) P_i(B))) 
=  \sum_{i \in U} w_i (1 - P_i(A)) (1 - P_i(B)) $. 

For the conditional gain, we have $ f(A|B) = f(A \cup B) - f(B)  
= \sum_{i \in U} w_i [P_i(B) - P_i(A\cup B)] 
= \sum_{i \in U} w_i [\Pi_{b \in B} (1-p_{ib}) - \Pi_{c \in A \cup B} (1-p_{ic})]
= \sum_{i \in U} w_i [\Pi_{b \in B} (1-p_{ib}) - \Pi_{b \in B} (1-p_{ib}) \Pi_{a' \in A \setminus B} (1-p_{ia'})]
= \sum_{i \in U} w_i [\Pi_{b \in B} (1-p_{ib}) (1 - \Pi_{a' \in A \setminus B} (1-p_{ia'}))]
= \sum_{i \in U} w_i \; P_i(B) \; (1 - P_i(A \setminus B)) $
Use a similar strategy for the pseudo metric $ D_f(A,B) = f(A \cup B) - I_f(A;B) = 2 f(A \cup B) - f(A) - f(B) = f(A \cup B) - f(A) + f(A \cup B) - f(B) $ we will have $D_f(A,B) = \sum_{i \in U} w_i [P_i(B)(1 - P_i(A \setminus B)) + P_i(A) (1 - P_i(B \setminus A))]$.

It is easy to see that when $A$ and $B$ are disjoint, $I_f(A;B) =  \sum_{i \in U} w_i (1 - P_i(A)) (1 - P_i(B))$. As a result, note that $A \perp_f B$ iff $\forall i \in U$, $P_i(A)$ and $P_i(B)$ are not both $0$ (i.e. the probability that concept $i$ is covered by both sets $A, B$ is not $1$). Next, we provide the expression for conditional mutual information with the probabilistic set cover. Note that $I_f(A; B | C) = I_f(A; B) - I_f(A;B;C)$ and hence with the prob. set cover function, $I_f(A; B | C) = \sum_{i \in U} w_i (1 - P_i(A)) (1 - P_i(B))P_i(C)$. 
\end{proof}

\subsection{Proof of Lemma \ref{lemma:facloc} (Facility Location)}
\begin{proof}
Here we have the facility location set function, $f(A) = \sum_{i \in [n]} \max_{a \in A} s(i,a)$ where $s$ is similarity kernel and $\Omega= [n]$. 
\begin{align}
   I_f(A;B) &= f(A) + f(B) - f(A \cup B) 
= \sum_{i \in [n]} \max_{a \in A} s(i,a) + \max_{b \in B} s(i,b) - \max_{c \in A \cup B} s(i,c) \\
&= \sum_{i \in [n]} \max_{a \in A} s(i,a) + \max_{b \in B} s(i,b) - \max(\max_{a \in A} s(i,a), \max_{b \in B} s(i,b)) \\
&= \sum_{i \in \Omega} \min(\max_{a \in A} s(i,a), \max_{b \in B} s(i,b))
\end{align}
Assuming $s(i,i) = 1$ is the maximum similarity score in the kernel, we can then break down the sum over elements in ground set $\Omega$ as follows. For any $i \in A, \max_{a \in A} s(i,a) = 1$ and hence the minimum (over sets $A$ and $B$) will just be the term corresponding to B (and a similar argument follows for terms in $B$). $ I_f(A;B) = \sum_{i \in \Omega\setminus (A \cup B)} \min(\max_{a \in A} s(i,a), \max_{b \in B} s(i,b)) 
    + \sum_{i \in A \setminus B} \max_{b \in B} s(i,b) 
    + \sum_{i \in B \setminus A} \max_{a \in a} s(i,a)
    + \sum_{i \in A \cap B} 1 $.
A special case arises when $B = \Omega \setminus A$ as the first and last sums disappear thereby making it look  like a symmetric version of the facility location function: 
$$ I_f(A ; \Omega\setminus A) = \sum_{i \in A} \max_{b \in \Omega \setminus A} s(i,b) + \sum_{i \in \Omega \setminus A} \max_{a \in A} s(i,a).$$

\noindent For the Conditional Gain we have $f(A|B) = 
 = \sum_{i \in \Omega} \max(\max_{a \in A} s(i,a), \max_{b \in B} s(i,b)) -  \max_{b \in B} s(i,b)  
 = \sum_{i \in \Omega} \max(0, \max_{a \in A} s(i,a) - \max_{b \in B} s(i,b))$.
Thus for an $i \in \Omega$, the notion of "gain" makes sense wrt sets $A, B$ if $A$ has a higher similarity element than the highest one from $B$. Similarly for the pseudo metric we get:
\begin{align}
    D_f(A,B) &= f(A \cup B) - I_f(A;B) \\
&= \sum_{i \in \Omega} \max(\max_{a \in A} s(i,a), \max_{b \in B} s(i,b)) - \min(\max_{a \in A} s(i,a), \max_{b \in B} s(i,b)) \\
&= \sum_{i \in \Omega} | \max_{a \in A} s(i,a) - \max_{b \in B} s(i,b)|
\end{align}
Next, lets get to the multi-set mutual information. Recall, that the multi-set mutual information is defined as: $I_f(A_1; \cdots; A_k) = -\sum_{T \subseteq [k]} (-1)^{|T|} f(\cup_{i \in T} A_i)$. We then  obtain the following relationship:
\begin{align}
    I_f(A_1; \cdots; A_{k+1}) = I_f(A_1; \cdots; A_{k}) - I_f(A_1 \cup A_{k+1}; \cdots; A_k \cup A_{k+1}) + f(A_{k+1})
\end{align}
We then prove the multi-set relation for the $\max$ function, and since the Facility Location is a sum of $\max$ functions, the result will extend there as well. Define $M(X) = \max_{i \in X} w_i$. Let us assume, by induction, that the result holds for $k$. In other words, let $I_f(A_1; \cdots; A_k) = \min_{i = 1:k} M(A_i)$. Using the above inductive relationship, we obtain:
\begin{align}
    I_f(A_1; \cdots; A_{k+1}) = \min_{i = 1:k} M(A_i) - \min_{i = 1:k} \max\{M(A_i), M(A_{k+1})\} + M(A_{k+1})
\end{align}
Now, assume that $M(A_{k+1}) \leq  \min_{i = 1:k} M(A_i)$. In this case, observe that $ I_f(A_1; \cdots; A_{k+1}) = \min_{i = 1:k} M(A_i) - \min_{i = 1:k} M(A_i) + M(A_{k+1}) = M(A_{k+1})$. Hence, $I_f(A_1; \cdots; A_{k+1}) = \min_{i = 1:k+1} M(A_i)$. For the second case, we assume that $M(A_{k+1}) >  \min_{i = 1:k} M(A_i)$. In this case, it is easy to see that $\min_{i = 1:k} \max\{M(A_i), M(A_{k+1})\} = M(A_{k+1})$ since for all $i$'s such that $M(A_i) < M(A_{k+1}), \max\{M(A_i), M(A_{k+1})\} = M(A_{k+1})$. Hence, in this case, $I_f(A_1; \cdots; A_{k+1}) = \min_{i = 1:k} M(A_i) = \min_{i = 1:k+1} M(A_i)$. Hence in both cases, the result is the minimum among the $M(A_i)$ values. Since the Facility Location is the sum of the respective Max functions, the multi-set mutual information is the sum of the minimums.
\end{proof}






\subsection{Proof of Lemma~\ref{lemma:def-graphcut} (Generalized Graph Cut)}
\begin{proof}
Here we have the generalized graph cut set function, $f(A) = \lambda \sum_{i \in \Omega} \sum_{a \in A} s_{i a} - \sum_{a_1,a_2 \in A} s_{a_1 a_2} $ with $\lambda \geq 2$. $I_f(A;B) = f(A) + f(B) - f(A \cup B) 
= \lambda \sum_{i \in \Omega} \sum_{a \in A} s_{i a} - \sum_{a_1,a_2 \in A} s_{a_1 a_2} 
+ \lambda \sum_{i \in \Omega} \sum_{b\in B} s_{i b} - \sum_{b_1,b_2 \in B} s_{b_1 b_2}
- \lambda \sum_{i \in \Omega} \sum_{c \in A \cup B} s_{i c} + \sum_{c_1,c_2 \in A \cup B} s_{c_1 c_2}
$. A special case arises with disjoint $A,B$, since $\sum_{c \in A \cup B}$ can be broken down as 
$\sum_{c \in A} + \sum_{c \in B}$ and hence $f(A) + f(B)$ part gets eliminated leaving
behind a sort of "cross-similarity" between $A,B$ as follows.
$I_f(A;B) =  2 \sum_{a \in A} \sum_{b \in B} s_{a b}$ and hence 
$I_f(A;\Omega \setminus A) =  2 \sum_{a \in A} \sum_{a \in \Omega \setminus A} s_{a a'} $.

For the Conditional Gain we break down $\sum_{c \in A \cup B}$ as $\sum_{c \in A - B} + \sum_{c \in B}$ and 
hence obtain $f(A|B) = f(A \cup B) - f(B)$ as follows: Firstly, $f(A \cup B) = 
\lambda \sum_{i \in \Omega} \sum_{c \in A \setminus B} s_{i a} 
+ \lambda \sum_{i \in \Omega} \sum_{b\in B} s_{i b} 
- \sum_{b_1,b_2 \in B} s_{b_1 b_2} 
- \sum_{c_1,c_2 \in A \setminus B} s_{c_1 c_2}
-  2 \sum_{a' \in A \setminus B} \sum_{b \in B} s_{a' b}
$. This implies that $ f(A|B) = f(A \setminus B) -  2 \sum_{a' \in A \setminus B} \sum_{b \in B} s_{a' b}$. For the special case of disjoint $A,B$ we then obtain: $f(A|B) = f(A) -  2 \sum_{a' \in A} \sum_{b \in B} s_{a b} \; \text{(for disjoint A,B)}$. Finally, note that $A \perp_f B$ iff $I_f(A; B) = 0$ iff $s_{a b} = 0, \forall a \in A, b \in B$.
\end{proof}




\section{Proofs relating to Optimization Problems in Section~\ref{sec:optimization-problems}}
\label{sec:app-proofs-sec5}

\subsection{Proof of Lemma \ref{lemma:complement-mutinfo-approx-monotone} (Submodular Mutual Information Based Selection)}
Below is the proof of Lemma \ref{lemma:complement-mutinfo-approx-monotone} that $I_f(A; \Omega \setminus A)$ is approximately monotone.
\begin{proof}
We have $g(A) = I_f(A ; \Omega \setminus A) = f(A) + f(\Omega \setminus A) - f(\Omega)$. In order to investigate the monotonicity of $g(A)$, we consider the gain of adding an element $j \notin A$ to $A$, $g(j|A) = g(A \cup \{j\}) - g(A) = f(A \cup \{j\}) - f(A) + f(\Omega \setminus (A \cup \{j\})) - f(\Omega \setminus A)$. Thus we can write the gain as $g(j|A) = f(j|A) - f(j|\Omega \setminus (A \cup \{j\}))$. Now this difference can be both positive and negative in general and hence we require the notion of approximate monotonicity. Define $\kappa_f(A) = \max_{j \in \Omega \setminus A} \frac{f(j|\Omega \setminus (A \cup \{j\}))}{f(j)}$ as the curvature for a set $A$. Then we have that $g(j|A) \geq f(j|A) - \kappa_f(A) f(j)$ and since $f(j|A) \geq 0$ ($f$ is still a monotone function), we will have that $g(j|A) \geq -\kappa_f(A) f(j)$. If we make the assumption that $f(j) \leq 1 \; \forall j \in \Omega$, then we will have $g(j|A) \geq -\kappa_f(A)$.

We will follow a similar proof strategy as seen in \citep{krauseNearOptimalSensorPlacementsa, nemhauser1978analysis} to show the guarantee. Let the greedy algorithm select the elements $a_1, a_2, \dots, a_k$ at every iteration and let $A_i = \{a_1, \dots, a_i\}$. As defined in the lemma, $A^*$ represents with optimal subset with elements $\{o_1, o_2, \dots, o_k\}$ (in any order) and $g(A) = I_f(A; \Omega \setminus A)$ is the submodular function we are maximizing. Now from the result of Lemma \ref{lemma:complement-mutinfo-approx-monotone}, we have: 
\[    g(A_i \cup A^*) \geq g(A^*) - k \kappa_f(A^*) \leq g(A_i \cup A^*) \]
\[ \leq g(A_i \cup A^*) = g(A_i) + \sum_{j=1}^{k} g(o_j| A_i \cup \{o_1, o_2, \dots,o_i\}) \]
\[ \leq g(A_i) + \sum_{j=1}^{k} g(o_j | A_i) \; \text{by submodularity since $A_i \subseteq A_i \cup \{o_1, o_2, \dots,o_i\}$}\]
\[ \leq g(A_i) + k g(a_{i+1} | A_i) \]
We have the last inequality as a consequence of the greedy algorithm procedure where by definition $a_{i+1} = \arg \max_{j \in \Omega} g(j | A_i)$. Note that $g(a_{i+1} | A_i) = g(A_{i+1}) - g(A_i)$ and then on some rearrangements we will have $(g(A^*) - k \kappa_f(A^*)) - g(A_{i+1}) \leq (1 - \frac{1}{k})(g(A^*) - k \kappa_f(A^*)) - g(A_{i}) \implies (g(A^*) - k \kappa_f(A^*)) - g(A_{k}) \leq (1 - \frac{1}{k})^k (g(A^*) - k \kappa_f(A^*)) $ since $A_0 = \emptyset$ with functional value of zero and $\hat{A} = A_k$. Using the bound $1-x \leq \exp^{-x}$ we will have the required bound $g(\hat{A}) \geq (1 - \frac{1}{e})(g(A^*) - k \kappa_f(A^*))$.
\end{proof}

\subsection{Query Based and Privacy Preserving Summarization}
In this subsection, we study the approximation guarantees and hardness results of the various formulations of query based and privacy preserving summarization.
\subsubsection{Query Based Summarization: Proof of Theorem~\ref{lemma-smimax-approxfactor}}
In this section, we study Problem SMIMax:
\begin{align}
    \max_{A \subseteq \Omega, |A| \leq k} I_f(A; Q) + \lambda g(A)
\end{align}
Here $f$ and $g$ are monotone and non-negative submodular functions and $\lambda \geq 0$. Below is the proof of Theorem~\ref{lemma-smimax-approxfactor}.
\begin{proof}
The proof of the first part is a direct corollary of Theorem~\ref{theorem:submod-mutinfo-mon-submod-prop}. In particular, if $f^{(3)}(i,j,k;A) \geq 0$, we have that $I_f(A; Q)$ is submodular in $A$ for a given $Q$. Also note that, $I_f(A; Q)$ is always monotone, and hence if $g$ is monotone submodular, it implies that $I_f(A; Q) + \lambda g(A)$ is monotone submodular and hence the $1 - 1/e$ approximation holds from~\citep{nemhauser1978analysis}.

The proof of the hardness follows the construction from \citep{svitkinaSubmodularApproximationSamplingbased2010, goemansApproximatingSubmodularFunctions2009} where we construct two submodular functions $f(A)$ and $f_R(A)$ which are indistinguishable from each other with high probability. Define $f$ as the uniform rank $\alpha$ matroid, $f(A) = min\{|A|, \lambda \}$. Furthermore $R$ be a subset of $\Omega$ with $|R| = \lambda$ and define $g$ as $f_R(A) = \min \{|A|, \beta + |S \cap R^c|, \lambda \}$. Assume that $\lambda \approx \sqrt{n}$ and $\beta = \Omega(\log n)$ and $R^c$ denotes the complement of $R$. It can be shown using the Chernoff bound analysis seen in \citep{svitkinaSubmodularApproximationSamplingbased2010, goemansApproximatingSubmodularFunctions2009} that $f$ and $f_R$ can only be distinguished from each other using a polynomial number of queries with probability no more than $n^{\omega(1)}$.

Now consider the problem instance under the context of $f$ as : $\max_{A \subseteq \Omega, |A| \leq \lambda-1} I_f(A;Q) + g(A)$ where $g(A) = \frac{|A|}{(\lambda -1)\alpha(n)}$. Also consider the similar problem under the context of $f_R$ as: $\max_{A \subseteq \Omega, |A| \leq \lambda-1} I_{f_R}(A;Q) + g(A)$. Here we also define the fixed set $Q$ to be a singleton set $\{q\}$ such that $q \in R$. 

Note that under the constraint $|A| \leq \lambda-1$, we will have $I_f(A;Q) = 0$ for any $A$ satisfying $|A| \leq \lambda - 1$. This holds because $f$ is actually modular in that range. Furthermore, the maximum value of of $g(A) = \frac{1}{\alpha(n)}$ when $|A| = \lambda -1$ and hence the maximum achievable value of $I_f(A;Q) + g(A)$ under the constraints is $\frac{1}{\alpha(n)}$. Moreover, any algorithm which tries to maximize the two problems (with $I_f$ and $I_{f_R}$ cannot distinguish between them in polynomial number of queries and hence will not be able to obtain a better solution than $\frac{1}{\alpha(n)}$. However noting the construction of $f_R$, we know maximum solution attainable is $1 + \frac{1}{\alpha(n)}$ when $A = R \setminus q$. In particular, note that $I_{f_R}(A; Q) = f_R(A) + f_R(Q) - f_R(A \cup Q) = f_R(R \setminus q) + f_R(q) - f_R(R) = f_R(q) = 1$. Furthermore, $g(R \setminus q) = \frac{1}{\alpha(n)}$ and hence the worst case approximation factor is at least $\frac{1 + \frac{1}{\alpha(n)}}{\alpha(n)} = 1 + \alpha(n) > \alpha(n)$. This means that any polynomial time algorithm which gives us a better factor should also be able to distinguish between $f$ and $g$ reliably giving us a contradiction.      
\end{proof}
Finally, we mention that no query constraint is similar to setting $Q = V$, in which case $I_f(A; Q) = f(A)$, which is the original objective for summarization in the absence of a query. In the next section, we consider an alternate formulation for query based summarization.

\subsubsection{Query Based Summarization: Formulation as a constrained problem}
Here we consider a problem that involves the use of the conditional gain function as a constraint in a submodular maximization problem. Recall that in the query based selection, we have a fixed set $Q \subseteq \Omega$, and the goal is to achieve a good \emph{summary} which is also related to the query set. Below is the constrained formulation of this problem:

\begin{align} \label{opt:scsk-query}
\max_{A \subseteq \Omega} g(A) \;\text{s.t}\; f(A|Q) \leq \epsilon, |A| \leq k   
\end{align}
Note that this problem is related to maximizing the mutual information $I_f(A; Q) + \lambda g(A)$ since $I_f(A; Q) = f(A) - f(A | Q)$. Hence maximizing $I_f(A; Q)$ is related to minimizing $f(A | Q)$. We can therefore decouple the objective and the constraint, and have $f(A | Q)$ which models the query similarity in the constraint. 

Since both $f(A|B)$ and $g(A)$ are submodular functions in $A$, the problem is an instance of submodular maximization with multiple submodular knapsack constraints which has bounded approximation guarantees of $\left[1 - 1/e, \frac{n}{ 1 + (n - 1)(1 - \kappa_f)}\right]$ from the result of \citep{iyerSubmodularOptimizationSubmodular}
where $\kappa_f = 1 - \min_{j \in V} \frac{f(j | V \backslash j)}{f(j)}$ is the curvature of the submodular function. An interesting observation about the constrained formulation is that it always admits a bounded approximation guarantee as long as $f$ and $g$ are monotone submodular. SMIMax on the other hand, requires additional assumptions for the problem to admit bounded approximation guarantees.

Next, we point out the similarity between SMIMax and the constrained formulation above. In particular, the parameter $\epsilon$ is similar to the trade-off parameter $\lambda$, except that via the constraint, we have explicit control over the query similarity, rather than a somewhat indirect effect via the tradeoff term $\lambda$. This is mainly because, thanks to the constraint, we are able to decouple the main objective (to have that only dependent on $A$) and the query term (which is only in the constraint). Finally, note that when $Q = V$, the constraint $f(A | Q) \leq \epsilon$ disappears (since $f(A | V) = 0$) and we get cardinality constrained submodular maximization.

\subsubsection{Privacy Preserving Summarization: Proof of Lemma~\ref{lemma-approx-nsmimax-cgmax} and comparing NSMIMax and CGMax}
In this section, we study Problems NSMIMax and CGMax. We start with Problem NSMIMax.
\begin{align}
    \max_{A \subseteq \Omega, |A| \leq k} \lambda g(A) - I_f(A; P) = \max_{A \subseteq \Omega, |A| \leq k} \lambda g(A) + f(P | A)
\end{align}
Our first result shows the conditions for achieving approximation factors for NSMIMax and its hardness.
\begin{lemma}\label{lemma-approx-nsmimax}
NSMIMax is an instance of (non-monotone) submodular maximization if the third-order partial derivatives are not positive, i.e. $f^{(3)}(i,j,k;A) = f^{(2)}(j,k;A \cup i) - f^{(2)}(j,k;A) \leq 0$. In the worst case, however, there exists submodular function $f$ such that NSMIMax is inapproximable. 
\end{lemma}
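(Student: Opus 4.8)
The plan is to rewrite the NSMIMax objective as $\lambda g(A) + f(P\mid A) = \lambda g(A) - I_f(A;P) + f(P)$, so that up to the additive constant $f(P)$ it equals $\lambda g(A) - I_f(A;P)$. Since $g$ is monotone submodular, $\lambda g(A)$ is submodular in $A$, so it suffices to show that $-I_f(A;P)$ is submodular in $A$, i.e.\ that the function $g_P(A) = I_f(A;P)$ is \emph{supermodular} in $A$ for the fixed set $P$. This is the mirror image of Theorem~\ref{theorem:submod-mutinfo-mon-submod-prop}: running the same chain of implications used there, but with every inequality reversed, shows that $g_P(j\mid A\cup k)\ge g_P(j\mid A)$ for all $j,k\notin A$ exactly when $f^{(2)}(j,k;A)$ is monotone non-increasing in $A$, equivalently when $f^{(3)}(i,j,k;A)=f^{(2)}(j,k;A\cup i)-f^{(2)}(j,k;A)\le 0$; the converse follows from the same contradiction argument as in that proof. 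Hence under the hypothesis $f^{(3)}\le 0$ the objective is a sum of submodular functions and therefore submodular. It is non-negative (by non-negativity of $g$ and by $f(P\mid A)\ge 0$ from Lemma~\ref{lemma:submodcond-gain-properties}), but not monotone in general, since $f(P\mid A)$ is non-increasing in $A$ while $g$ is non-decreasing; thus NSMIMax is an instance of cardinality-constrained non-monotone submodular maximization, for which the randomized greedy algorithm of \citep{buchbinderSubmodularMaximizationCardinality2014a} gives a $1/e$ approximation.

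\textbf{Hardness part.} When $f^{(3)}$ is not everywhere non-positive, $f(P\mid A)=f(A\cup P)-f(A)$ is a difference of two monotone submodular functions of $A$ and need not be submodular, and we exhibit a single submodular $f$ for which NSMIMax is inapproximable by a value-oracle argument. The plan is to mimic the indistinguishability construction in the proof of Theorem~\ref{lemma-smimax-approxfactor} (itself from \citep{svitkinaSubmodularApproximationSamplingbased2010, goemansApproximatingSubmodularFunctions2009}): build two submodular functions $f$ and $f_R$, the second obtained from a truncated matroid-rank-type function by ``planting'' a random subset $R\subseteq\Omega$, such that (i) $f$ and $f_R$ return identical values on any polynomial number of oracle queries except with probability $n^{-\omega(1)}$ (a Chernoff bound), (ii) with a tiny modular diversity term $g$ and a suitably chosen fixed private set $P$, under $f$ every feasible set ($|A|\le k$) yields objective value $O(g_{\max})$, while (iii) under $f_R$ a planted feasible set $A^\ast$ (tied to $R$ and $P$) makes the conditional-gain term $f_R(P\mid A^\ast)$ large, pushing the optimum above $\alpha(n)\cdot g_{\max}$ for any prescribed poly-time-computable $\alpha(n)>0$. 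Since no polynomial-query algorithm can distinguish $f$ from $f_R$, none can guarantee a factor-$\alpha(n)$ solution, which is the claimed inapproximability.

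\textbf{Main obstacle.} The positive part is essentially bookkeeping built on Theorem~\ref{theorem:submod-mutinfo-mon-submod-prop} (the needed converse is already implicit in that proof). The real work is the hardness construction: one must pick the planted function and tune the parameters (the truncation levels, $|P|$, $k$, and the scale of $g$) so that the gap between $f$ and $f_R$ surfaces \emph{inside the constrained combined objective} $\lambda g(A)+f(P\mid A)$. Requirements (ii) and (iii) pull in opposite directions — making the objective flat over all feasible sets under $f$ while allowing one large conditional-gain value under $f_R$ — and getting both to hold simultaneously while preserving the Chernoff-bound indistinguishability is the delicate step, exactly as in Theorem~\ref{lemma-smimax-approxfactor}.
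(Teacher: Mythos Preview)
Your positive part is essentially the paper's argument: use Theorem~\ref{theorem:submod-mutinfo-mon-submod-prop} (with the inequality reversed) to conclude that $I_f(A;P)$ is supermodular in $A$ when $f^{(3)}\le 0$, whence $\lambda g(A)-I_f(A;P)$ is submodular but non-monotone. No issues there.

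For the hardness part, however, you are working far harder than necessary, and you have not actually carried out the construction (you yourself flag it as the ``main obstacle''). The paper's argument is a one-line reduction: take $P=\Omega$. Then $I_f(A;P)=f(A)+f(\Omega)-f(A\cup\Omega)=f(A)$, so the NSMIMax objective becomes $\lambda g(A)-f(A)$, a difference of two monotone submodular functions. Inapproximability then follows immediately from the known hardness of maximizing a difference of submodular functions (Theorem~5.6 of \citep{iyer2015submodular}), with no bespoke planting, Chernoff bounds, or parameter tuning required. Your proposed route---building an $f$ vs.\ $f_R$ indistinguishability instance tailored to the conditional-gain term $f(P\mid A)$---might be made to work, but it replaces a black-box reduction with a delicate construction whose feasibility you have not established (the tension you identify between requirements (ii) and (iii) is real). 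The paper's approach buys you the result for free by choosing $P$ so that the ``conditional'' structure disappears entirely.
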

\begin{proof}
First observe that NSMIMax is non-monotone because $g(A)$ is a monotone submodular function and $I_f(A; P)$ is also monotone, and hence NSMIMax is a difference of monotone functions. Since, $g$ is monotone submodular, NSMIMax can potentially be approximable, if $I_f(A; P)$ is supermodular in $A$ for fixed $P$. Again, following Theorem~\ref{theorem:submod-mutinfo-mon-submod-prop}, we know that $I_f(A; P)$ is supermodular in $A$ if the third-order partial derivatives are not positive, i.e. $f^{(3)}(i,j,k;A) = f^{(2)}(j,k;A \cup i) - f^{(2)}(j,k;A) \leq 0$. Next, we will show that NSMIMax is inapproximable. For this, let $P = V$. Then $I_f(A; P) = f(A)$ and NSMIMax becomes maximizing $g(A) - f(A)$ which is a difference of submodular functions.  We then invoke Theorem 5.6 from~\citep{iyer2015submodular} which shows that the problem of maximizing the difference of submodular functions cannot be approximated up to any polynomial factor. 
\end{proof}
NSMIMax has a number of issues though. First, for most common submodular functions, the third-order partial derivative is actually non-negative instead of non-positive and hence $I_f(A; P)$ will actually be submodular. As a result, this problem is likely to be in-approximable. Secondly, when there is no privacy constraint, $I_f(A; P) = I_f(A; \emptyset) = 0$ and hence we do not get back the problem of just maximizing the function $f$.

A more natural solution which addresses both issues is to maximize the conditional gain $f(A | P)$. Hence we consider the following problem (CGMax):
\begin{align}
\max_{A \subseteq \Omega, |A| \leq k} \lambda g(A) + f(A | P)
\end{align}
This problem has several nice properties. When there is no private set, i.e. $P = \emptyset$, $f(A | P) = f(A)$. Secondly, $f(A | P)$ is monotone, normalized, non-negative and submodular when $f$ satisfies all these properties, and hence CGMax admits a $1 - 1/e$ approximation guarantee~\citep{nemhauser1978analysis}. Combinng the above with the proof of Lemma~\ref{lemma-approx-nsmimax} concludes the proof of Lemma~\ref{lemma-approx-nsmimax-cgmax}. 

\subsubsection{Privacy Preserving Summarization: Formulation as a constrained problem}
Similar to the query based summarization, we study an alternative formulation of privacy preserving summarization. Again, the goal of this formulation is to decouple privacy as a constraint and thereby have explicit control on the amount of privacy we would like to enforce. 
\begin{align} \label{opt:scsk-privacy}
\max_{A \subseteq \Omega} g(A) \;\text{s.t}\; I_f(A; P) \leq \epsilon, |A| \leq k  
\end{align}
This problem is related to NSMIMax above, where instead of minimizing $I_f(A; P)$, we add it to the constraint. Moreover, this problem is an instance of SCSK as long as the function $I_f(A; P)$ is submodular. Recall that $I_f(A; P)$ is submodular if the third-order partial derivative of $f$ is non-negative, which is true for many submodular functions like facility location and set cover. As a result, equation~\eqref{opt:scsk-privacy} is more tractable compared to NSMIMax. Similar to the constrained formulation in the query based case, we have explicit control over the privacy via the parameter $\epsilon$ in the constraint. Also, note that when $P = \emptyset$, the constraint $I_f(A; P) \leq \epsilon$ disappears (since $I_f(A; P) = 0$) and we get cardinality constrained submodular maximization. Finally, the constrained formulation (equation~\eqref{opt:scsk-privacy}) may not have a bounded approximation factor (when $f$ does not have third-order partial derivatives negative) and in that case, CGMax is a preferred approach.

\subsubsection{Privacy Preserving and Query Based Summarization: Analysis of CSIMax and Proof of Lemma~\ref{lemma-approx-csmimax}}
The first approach for this problem is:
\begin{align}
    \max_{A \subseteq \Omega, |A| \leq k} I_f(A; Q) - I_f(A; P) + \lambda g(A)
\end{align}
When $P = \emptyset$, we get back SMIMax. However, when $Q = V$, we obtain $f(A) + \lambda g(A) - I_f(A; P)$, which is similar to NSMIMax. Furthermore, when $Q = \emptyset$, we have $\lambda g(A) - I_f(A; P)$ which is exactly NSMIMax. From the previous section, we know that NSMIMax is inapproximable, this implies that this problem will also be in-approximable in the worst case. This is true even when the third-order partial derivatives are non-negative which is true for a rich subclass of submodular functions. Furthermore, this not yield the desirable $f(A | P)$ in the privacy preserving case. 

As a result, we study one more formulation (which we call CSMIMax):
\begin{align}
    \max_{A \subseteq \Omega, |A| \leq k} I_f(A; Q | P) + \lambda g(A)
\end{align}
Let us look at some special cases. When $Q = V$, we get back CGMax: $f(A | P) + \lambda g(A)$. Similarly, when $P = \emptyset$, we obtain $I_f(A; Q) + \lambda g(A)$ which is SMIMax. Finally, when $P = \emptyset, Q = V$, we obtain $f(A) + \lambda g(A)$. Moreover, as we show in the following Lemma, this problem is $1 - 1/e$ approximable by the greedy algorithm for a rich subclass of submodular functions.

Next, we prove Lemma~\ref{lemma-approx-csmimax}
\begin{proof}
The proof of the first part is a direct corollary of Theorem~\ref{theorem:submod-mutinfo-mon-submod-prop} (the submodularity of submodular conditional mutual information). In particular, if $f^{(3)}(i,j,k;A) \geq 0$, we have that $I_f(A; Q | P)$ is submodular in $A$ for a given $Q$. Also note that, $I_f(A; Q | P)$ is always monotone, and hence if $g$ is monotone submodular, it implies that $I_f(A; Q | P) + \lambda g(A)$ is monotone submodular and hence the $1 - 1/e$ approximation holds from~\citep{nemhauser1978analysis}.
\end{proof}
Finally, we provide some intuition into maximizing $I_f(A; Q | P)$, in addition to the fact that in special cases, it yields both CGMax and SMIMax. Note that $I_f(A; Q | P) = I_f(A; Q) - I_f(A; Q; P)$. This implies that we are trying to maximize $I_f(A; Q)$ while minimizing $I_f(A; Q; P)$. Maximizing $I_f(A; Q)$ will try to obtain a set $A$ which is similar to the query set $Q$. On the other hand, minimizing $I_f(A; Q; P)$ will try to have set $A$ as independent as possible from $P$ (since $A$ needs to be close to $Q$ to maximize the first term). Another way to look at this is by noting that $I_f(A; Q | P) = f(A \cup P) - f(A \cup P \cup Q)$ plus some terms which are independent of $A$. As a result, maximizing $I_f(A; Q | P)$ is equivalent to maximizing $f(A \cup P) - f(A \cup P \cup Q)$, or in other words, maximizing $f(A \cup P)$ while minimizing $f(A \cup P \cup Q)$. Maximizing $f(A \cup P)$ means the set $A$ should be as independent of $P$ as possible, while minimizing $f(A \cup P \cup Q)$ means that $A$ should be as similar to $Q$ as possible. 

\subsubsection{Privacy Preserving and Query Based Summarization: Constrained Formulation}
Finally, similar to both privacy preserving and query based summarization, we study constrained formulations for joint privacy preserving and query based summarization. Recall that for query based summarization, we had a constraint $f(A | Q) \leq \epsilon$ while in the privacy preserving summarization, the constraint was $I_f(A; P)$. We can add both of these as constraints:
\begin{align} \label{opt:scsk-query-privacy}
\max_{A \subseteq \Omega} g(A) \;\text{s.t}\; f(A | Q) \leq \epsilon_1, I_f(A; P) \leq \epsilon_2, |A| \leq k  
\end{align}
The thresholds $\epsilon_1, \epsilon_2$ allows direct control over the query similarity and privacy. However, similar to the privacy preserving summarization, this requires $I_f(A; P)$ to be submodular, which in turn requires $f$ to have non-negative third-order partial derivatives. 

When $f$ does not satisfy this condition, we can modify the optimization problem slightly to make it tractable. In particular:
\begin{align} \label{opt:scsk-query-privacy-mod}
\max_{A \subseteq \Omega} g(A) + \lambda_2 f(A | P) \;\text{s.t}\; f(A | Q) \leq \epsilon_1,  |A| \leq k  
\end{align}
Note that since both the objective and the constraint are monotone submodular, this problem is an instance of SCSK~\citep{iyerSubmodularOptimizationSubmodular}.

\subsection{Clustering and Partitioning using the Multi-Set Submodular Mutual Information}
In this section, we build upon the discussion of section~\ref{sec:optimization-problems} for the clustering and partitioning applications using the multi-set mutual information functions. 
\subsubsection{Clustering or Minimizing Multi-Set Mutual Information}
First we show that with the total correlation, we get a known submodular partitioning problem.
\begin{align} \label{opt:min-totcorr}
\min_{A_1, \dots, A_k \; \mathrm{s.t} \;\cup_{i=1}^{k} A_i = \Omega \; , \; A_i \cap A_j = \emptyset \; \forall i,j} C_f(A_1, \dots, A_k)
\end{align}
This problem is equivalent to minimizing $\sum_{i = 1}^k f(A_i) - f(\cup_{i = 1}^k A_i)$. Since, $\cup_{i = 1}^k A_i = \Omega$, the observation below follows.
\begin{observation} 
The problem stated in equation \ref{opt:min-totcorr} is equivalent to the submodular multiway partition problem with the only difference being in the values of the respective objective functions, which always differ from each other by the same constant value and which is equal to $f(\Omega) = f(\cup_{i=1}^k A_i)$.
\end{observation}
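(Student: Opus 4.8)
The plan is to expand the definition of the $k$-way submodular total correlation and exploit the fact that, on the feasible region of \eqref{opt:min-totcorr}, the union of the parts is pinned down. By definition $C_f(A_1,\dots,A_k) = \sum_{i=1}^k f(A_i) - f(\cup_{i=1}^k A_i)$. First I would note that every feasible point of \eqref{opt:min-totcorr} is a partition of $\Omega$, so $\cup_{i=1}^k A_i = \Omega$ and the subtracted term equals the fixed scalar $f(\Omega)$, independent of which partition is chosen.

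Second, I would recall that the submodular multiway partition problem is, by definition, $\min \sum_{i=1}^k f(A_i)$ taken over exactly the same family of $k$-partitions of $\Omega$. Since the two objectives therefore agree pointwise up to the additive constant $f(\Omega)$ on the common feasible set — that is, $C_f(A_1,\dots,A_k) = \big(\sum_{i=1}^k f(A_i)\big) - f(\Omega)$ for every feasible $(A_1,\dots,A_k)$ — they share the same set of minimizers, and their optimal values differ by precisely $f(\Omega)$. This establishes the claimed equivalence.

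The statement is essentially a one-line observation, so I do not anticipate a substantive obstacle; the only points requiring care are (i) stating explicitly that the feasible sets of the two problems coincide, so that ``differs by the same constant'' is well-posed, and (ii) getting the direction of the constant right, namely that the total correlation equals the multiway-partition objective \emph{minus} $f(\Omega)$, not plus.
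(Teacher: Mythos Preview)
Your proposal is correct and follows essentially the same approach as the paper: expand $C_f$ by definition, use the partition constraint to replace $f(\cup_i A_i)$ by the constant $f(\Omega)$, and conclude that the objective differs from $\sum_i f(A_i)$ by this fixed constant on the common feasible set. The paper's justification is in fact even terser than yours, so your added care about the feasible sets coinciding and the sign of the constant is fine but not strictly needed.
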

Next, let us look at minimizing the $k$-way submodular mutual information Unfortunately, this is not very interesting and we explain this with an example. With the Set Cover function, $I_f(A_1, \dots, A_k) = w(\cap_{i = 1}^k \gamma(A_i))$, and the objective can be minimized with a trivial partition where sets $A_1, \cdots, A_{k-1}$ cover almost similar items but one set $A_k$ covers very different concept(s). This is not a very desirable clustering. 

\subsubsection{Diverse Partitioning or Maximizing Multi-Set Mutual Information}
Again, we start with the total correlation.
\begin{align} \label{opt:max-totcorr}
\max_{A_1, \dots, A_k \; \mathrm{s.t} \;\cup_{i=1}^{k} A_i = \Omega \; , \; A_i \cap A_j = \emptyset \; \forall i,j} C_f(A_1, \dots, A_k)
\end{align}
Equation~\ref{opt:min-totcorr} is equivalent to maximizing $\sum_{i = 1}^k f(A_i) - f(\cup_{i = 1}^k A_i)$. Since, $\cup_{i = 1}^k A_i = \Omega$, the observation below follows.
\begin{observation} \label{lemma:max-totcorr-submod-welfare}
The problem stated in equation \ref{opt:max-totcorr} is equivalent to the submodular welfare problem with the only difference being in the values of the respective objective functions, which always differ from each other by the same constant value equal to  $f(\Omega) = f(\cup_{i=1}^k A_i)$.
\end{observation}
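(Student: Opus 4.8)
The plan is to reduce the optimization in equation~\eqref{opt:max-totcorr} to a pure maximization of $\sum_{i=1}^k f(A_i)$ by noticing that the remaining term of the total correlation is constant on the feasible set. Indeed, the feasibility constraints in~\eqref{opt:max-totcorr} force $\cup_{i=1}^k A_i = \Omega$, so $f(\cup_{i=1}^k A_i) = f(\Omega)$ for \emph{every} feasible $(A_1,\dots,A_k)$. Hence, on the feasible region,
\begin{align}
C_f(A_1,\dots,A_k) = \Big(\sum_{i=1}^k f(A_i)\Big) - f(\Omega),
\end{align}
so the total-correlation objective and the objective $\sum_{i=1}^k f(A_i)$ differ by the fixed additive constant $f(\Omega)$ at every feasible point. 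It follows immediately that the two problems have the same set of optimal partitions, and their optimal values differ by exactly $f(\Omega)$.

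The second step is to identify $\max \sum_{i=1}^k f(A_i)$, over partitions of $\Omega$ into $k$ blocks, with the submodular welfare problem. Recall that the submodular welfare problem asks, given $k$ agents holding monotone submodular valuations (here all equal to $f$), to split the ground set into disjoint bundles $A_1,\dots,A_k$ so as to maximize $\sum_{i=1}^k f(A_i)$. Since $f$ is monotone (as assumed throughout the paper), every item can be assumed allocated --- adding an unallocated element to any bundle cannot decrease the objective --- so the feasible set may be taken to be the partitions of $\Omega$, and the objective functions are then literally identical. Combined with the first step, this is exactly the claimed equivalence-up-to-the-constant-$f(\Omega)$.

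The argument is essentially a one-line observation, so I do not anticipate a real obstacle; the only point that warrants a word of care is the passage from the ``disjoint bundles'' feasibility natural to the welfare problem to the ``partition'' feasibility in~\eqref{opt:max-totcorr}, which is precisely where monotonicity of $f$ enters. One may additionally remark that this is the special case of submodular welfare in which all agents share the single valuation $f$; this specialization does not affect the constant-shift equivalence, which is all the observation asserts.
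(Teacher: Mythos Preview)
Your proposal is correct and follows essentially the same one-line argument the paper gives: since the partition constraint forces $\cup_i A_i = \Omega$, the term $f(\cup_i A_i)=f(\Omega)$ is constant on the feasible set, so maximizing $C_f$ is equivalent to maximizing $\sum_i f(A_i)$, i.e., the submodular welfare problem. Your additional remark invoking monotonicity to reconcile the ``disjoint bundles'' and ``partition'' feasibilities is a nice clarification that the paper does not spell out, but it does not change the approach.
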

Next, look at the problem of maximizing the multi-set mutual information. 
\begin{align} 
\max_{A_1, \dots, A_k \; \mathrm{s.t} \;\cup_{i=1}^{k} A_i = \Omega \; , \; A_i \cap A_j = \emptyset \; \forall i,j} I_f(A_1, \dots, A_k)
\end{align}
This problem is interesting since it is related to robust (i.e., worst case) partitioning~\citep{weiMixedRobustAverage2015}. To understand this better, we see the facility location function. Recall that 
\begin{align}
I_f(A_1; \cdots; A_k) = \sum_{i \in \Omega} \min(\max_{a_1 \in A_1} s(i, a_1), \cdots, \max_{a_k \in A_k} s(i, a_k))    
\end{align}
The expression is similar in the case of set-cover. We see that maximizing $I_f$ is  related to maximizing the minimum among the functions and hence is a robust partitioning approach. The worst case (robust) partitioning tries to solve: 
\begin{align}
   \max_{A_1, \dots, A_k \; \mathrm{s.t} \;\cup_{i=1}^{k} A_i = \Omega \; , \; A_i \cap A_j = \emptyset \; \forall i,j} \min (f(A_1), \dots , f(A_k)) 
\end{align}
This objective is similar to the multi-set mutual information for the set cover and facility location, except that we have the sum of the minimums instead of the minimum of the sums in both cases. Moreover, we can also view the multi-set mutual information as maximizing a lower bound of the robust objective~\citep{weiMixedRobustAverage2015}.

\subsection{Proof of Lemma~\ref{psudeometric-opt-lemma} (Minimization of Submodular Information Metric)}\label{sec:proof-psudeometric-opt-lemma}
\begin{proof}
We can obtain an approximation to above problem by taking a look at the bounds show in Lemma \ref{lemma:psuedometric-bound} and leveraging the additive version of the submodular hamming metric $D^{SHA}(A,B) = f(A\setminus B) + f(B \setminus A)$. Note that for a fixed $S_i$, $D^{SHA}(A, S_i)$ is a submodular function in $A$ and hence it can exactly minimized in polynomial time~\citep{fujishige2005submodular}. Moreover, since $D^{SHA}(A, S_i)$ approximates $D_f(A, S_i)$ upto a factor of $1 - \kappa_f$, we get the resulting approximation guarantee.
\end{proof}
}
\end{document}